%% file: _main.tex
\documentclass{article} %
\PassOptionsToPackage{hypertexnames=false}{hyperref}
\input{_header}
\usepackage{nicefrac}       %
\usepackage{microtype}      %

\usepackage{autonum}

\newif\ifFINAL

\FINALtrue %
\iclrpreprint

\ifFINAL
   
  \def\introguide#1{}

  \renewcommand{\junghyun}[1]{{}}
  \renewcommand{\kj}[1]{{}}
  \renewcommand{\shkim}[1]{{}}
  \renewcommand{\seiyun}[1]{{}}
  \renewcommand{\minsoo}[1]{{}}
  \renewcommand{\ejlee}[1]{{}}
\else
  \usepackage{transparent}
  \usepackage[inline]{showlabels}
  \usepackage{rotating}
  \renewcommand{\showlabelfont}%
  {\transparent{0.8}\scriptsize\bf\slshape\color{Lavender}}
\fi

\title{Pointwise or Pairwise: When Do Pairwise Losses Help Reward Learning, Provably?}

\author{Junghyun Lee \\
Kim Jaechul Graduate School of AI, KAIST\\
Seoul 02455, Republic of Korea \\
\texttt{jh\_lee00@kaist.ac.kr}
\And
Minsoo Ha\thanks{Equal contributions, authorship in $\alpha$-$\beta$ order} \\
Graduate School of AI, POSTECH\\
Pohang 37673, Republic of Korea \\
\texttt{minsoo0926@postech.ac.kr}
\And
Sanghwa Kim$^*$ \\
Kim Jaechul Graduate School of AI, KAIST\\
Seoul 02455, Republic of Korea \\
\texttt{tkdghk9667@kaist.ac.kr}
\And
Yeongjong Kim$^*$ \\
Samsung Research\\
Seoul 06765, Republic of Korea \\
\texttt{yeong-j.kim@samsung.com}
\And
Eun Jee Lee$^*$ \\
Independent Researcher\\
Cambridge, MA, USA\\
\texttt{ejlee.mail@gmail.com}
\And
Seiyun Shin$^*$, \ Kwang-Sung Jun \\
Graduate School of AI, POSTECH\\
Pohang 37673, Republic of Korea \\
\texttt{seiyun923@gmail.com}\\
\texttt{kwangsungjun@postech.ac.kr}
}

\begin{document}

\maketitle
\begin{abstract}
    Pairwise losses are increasingly used for reward learning even when pointwise rewards are observed, with mixed empirical results.
    \textit{When and why do pairwise losses outperform pointwise losses?}
    We study this question in a grouped offline contextual-bandit setting allowing multiple actions per context, capturing many reward learning scenarios. %
    We compare \ValueRegression{} (\VR), which regresses observed rewards pointwise, with \ValueDifferenceRegression{} (\VDR), which regresses reward differences between a pair of actions sampled under the same context.
    We consider a semiparametric model where the mean reward is the sum of a learnable action-dependent component and an arbitrary context-dependent yet \emph{action-independent} nuisance, capturing context-specific disturbances.
    Using a unified localized analysis, we prove finite-sample regression guarantees for finite and linear function classes and translate them into offline-regret bounds.
    For finite classes, \VDR eliminates the misspecification term in the \VR bound and improves a reward-scale-dependent error term by averaging over actions within each context, a benefit absent from the corresponding \VR term.
    For linear classes, neither method uniformly dominates: within-context differencing removes nuisance-induced bias but may increase estimation variance relative to using absolute rewards when the misspecification is sufficiently low. %
    This yields a feature geometry-dependent \emph{bias--variance tradeoff}, which we corroborate with numerical experiments. 
\end{abstract}

\input{001Introduction}

\input{002Localized}

\input{003Finite}

\input{004Linear}

\input{005Conclusion}

\subsection*{\texorpdfstring{AI Disclosure Statement}{AI Disclosure Statement}}

In this work, we used generative AI tools to generate synthetic datasets; help develop theoretical models or conceptual frameworks; formulate mathematical claims and provide critical ingredients for their proofs; assist in writing proofs; propose or refine hypotheses; design or provide feedback on research methodology or experiments; implement methods; assist with translation; support qualitative and thematic data analysis; and interpret results. Dataset cleaning and reformatting were not applicable to this work. We also used generative AI tools to create or modify scientific figures or images; suggest experimental parameters; create or edit software code; draft portions of the paper; summarize or analyze existing literature; discover research topics or identify gaps; brainstorm; source or search for information; improve the paper’s readability; identify relevant literature; suggest the paper’s structure; and propose titles or keywords. We reviewed all AI-assisted work: all LLM-generated proofs were verified and polished for readability by at least two authors, all cited literature was manually checked, and LLM-generated code was verified and tested for correctness by two authors. We take responsibility for the final content of this work, including all text, claims, and artifacts produced with the aid of generative AI.

\subsubsection*{Author Contributions}
Junghyun Lee led the project, was responsible for the overall writing, and refined and finalized the proofs. Junghyun Lee also contributed to unifying the analyses of finite and linear function classes through localization.

Minsoo Ha reviewed the literature on pointwise and pairwise reward learning, organized prior methods by loss structure and downstream application, and contributed to writing parts of the introduction. Minsoo Ha also contributed to extending the \StarEstimator to the grouped-data setting for finite function classes.

Sanghwa Kim derived the initial theoretical upper bounds for \VR and \VDR and the lower bound for \VR under finite function classes, laying the foundation for the integrated theory. Sanghwa Kim also contributed to verifying the mathematical proofs.

Yeongjong Kim designed and implemented the synthetic offline linear contextual bandit experiment, generated the data for the VR--VDR comparison, and conducted the $\ell_2$-regularized linear experiments on LLM representations. Yeongjong Kim also generated the MCTS training data for the mathematical-reasoning experiment.

Eun Jee Lee designed and conducted the MCTS-based mathematical-reasoning experiment, including training linear and MLP reward heads under the VR and binary-preference VDR objectives and performing the MCTS evaluation. Eun Jee Lee also extended the VR--VDR comparison on LLM representations to additional reward models and analyzed the results across choices of scorer head and reward model.

Seiyun Shin developed the lower-bound and separation results, including the construction showing that VR can incur constant regret while VDR succeeds. Seiyun Shin also contributed to establishing the agnostic minimax lower bound and was responsible for the final verification and presentation of these theoretical results.

Kwang-Sung Jun supervised the project, provided overall research direction, and coordinated the development of the work.

\bibliography{references}
\bibliographystyle{iclr2027_conference}

\newpage
\appendix
\tableofcontents
\newpage

\crefname{appendix}{Appendix}{Appendices}
\Crefname{appendix}{Appendix}{Appendices}
\crefalias{section}{appendix}
\crefalias{subsection}{appendix}
\crefalias{subsubsection}{appendix}

\input{900RelatedWorks}

\newpage
\input{901Regret}
\newpage
\input{902Regression}
\newpage
\input{903Finite}
\newpage
\input{904Star_Estimator}
\newpage
\input{905Linear}
\newpage
\input{906LowerBounds}

\newpage
\input{907Experiments}

\end{document}

%% file: _header.tex
\PassOptionsToPackage{sort}{natbib}
\usepackage{iclr2027_conference,times}
\usepackage[utf8]{inputenc} %
\usepackage[T1]{fontenc}    %
\input{math_commands.tex}

\DeclareMathOperator{\EE}{\mathbb{E}} %

\DeclareMathOperator{\tr}{\mathrm{\normalfont tr}}

\usepackage{doi}
\usepackage[many]{tcolorbox}
\usepackage[table,dvipsnames]{xcolor}
\usepackage{pifont}

\definecolor{lightgrey}{rgb}{0.9, 0.9, 0.9}

\tcbset{
  baseboxstyle/.style={
    boxrule=0pt,          
    frame hidden,          
    sharp corners,         
    before skip=10pt,      
    after skip=10pt,       
    left=2pt, right=2pt,   
    top=2pt, bottom=2pt    
  }
}

\tcolorboxenvironment{definition}{baseboxstyle, colback=black!5}   
\tcolorboxenvironment{theorem}{baseboxstyle, colback=blue!5}      
\tcolorboxenvironment{corollary}{baseboxstyle, colback=blue!5}   
\tcolorboxenvironment{lemma}{baseboxstyle, colback=teal!5} 
\tcolorboxenvironment{proposition}{baseboxstyle, colback=blue!5}

\definecolor{darkblue}{RGB}{0,75,130}
\definecolor{darkred}{RGB}{150,50,0}
\definecolor{darkgreen}{RGB}{0,95,70}
\definecolor{tab:blue}{RGB}{0,114,178}
\definecolor{tab:red}{RGB}{213,94,0}
\definecolor{tab:green}{RGB}{0,158,115}
\definecolor{tab:orange}{RGB}{230,159,0}
\definecolor{blue}{RGB}{0,114,178}
\definecolor{red}{RGB}{213,94,0}
\definecolor{green}{RGB}{0,158,115}
\definecolor{orange}{RGB}{230,159,0}

\hypersetup{
    colorlinks = true,
    citecolor  = darkblue,
    linkcolor  = darkred,
    filecolor  = darkblue,
    urlcolor   = darkgreen,
}

\newcommand{\junghyun}[1]{{\color{tab:red}{\bf junghyun:} #1}}
\newcommand{\kj}[1]{{\color{RedOrange}[KJ: #1]}}
\newcommand{\seiyun}[1]{{\color[rgb]{0.05,0.15,0.55} \textbf{SY:} #1}}
\newcommand{\minsoo}[1]{{\color{teal}[Minsoo: #1]}}
\newcommand{\ejlee}[1]{{\color{tab:red}{\bf Eunjee:} #1}}
\newcommand{\shkim}[1]{{\color{tab:blue}{\bf sanghwa:} #1}}

\usepackage{amsfonts, amsmath, amssymb, amsthm}
\usepackage{dsfont}
\usepackage{thmtools, thm-restate}
\usepackage{hyperref}
\usepackage{url,doi}
\usepackage[capitalize,noabbrev]{cleveref}
\usepackage{adjustbox,tabularx,booktabs,threeparttable,makecell,diagbox}

\crefname{tcb@cnt@theorem}{Theorem}{Theorems}
\Crefname{tcb@cnt@theorem}{Theorem}{Theorems}
\crefname{definition}{definition}{definitions}
\Crefname{definition}{Definition}{Definitions}
\crefname{assumption}{assumption}{assumptions}
\Crefname{assumption}{Assumption}{Assumptions}

\def\bignorm#1{\left\lVert #1 \right\rVert}
\def\bignormop#1{\left\lVert #1 \right\rVert_{\rm op}}

\newcommand{\introguide}[1]{{\color{pink}[Guide: #1]}}
\def\chrulefill{\leavevmode\leaders\hrule height 0.7ex depth \dimexpr0.4pt-0.7ex\hfill\kern0pt}
\let\cite\citep

\usepackage[normalem]{ulem}
\usepackage{color}
\usepackage{xspace}

\newenvironment{revised}
{\colorlet{kjsavedrevised}{.}\color{MidnightBlue}}%
{\color{kjsavedrevised}}%

\DeclareRobustCommand{\algname}[1]{\ifmmode\text{\textnormal{\textsc{#1}}}\else\textnormal{\textsc{#1}}\fi}
\DeclareRobustCommand{\VR}{\algname{VR}\xspace}
\DeclareRobustCommand{\VDR}{\algname{VDR}\xspace}
\DeclareRobustCommand{\ValueRegression}{\algname{Value Regression}\xspace}
\DeclareRobustCommand{\ValueDifferenceRegression}{\algname{Value Difference Regression}\xspace}
\DeclareRobustCommand{\STAR}{\algname{Star}\xspace}
\DeclareRobustCommand{\StarEstimator}{\algname{Star Estimator}\xspace}
\DeclareRobustCommand{\MCTS}{\algname{MCTS}\xspace}
\DeclareRobustCommand{\Greedy}{\algname{Greedy}\xspace}
\DeclareRobustCommand{\Pessimism}{\algname{Pessimism}\xspace}
\DeclareRobustCommand{\VRGreedy}{\algname{VR-Greedy}\xspace}

\DeclareRobustCommand{\VDRGreedy}{\algname{VDR-Greedy}\xspace}
\newcommand{\piref}{\pi_{\mathrm{ref}}}
\newcommand{\StarClass}{\gH_{\mathrm{star}}}
\newcommand{\StarNet}{\gH_{\mathrm{star},N_x}}

\allowdisplaybreaks

%% file: math_commands.tex
\usepackage{amsmath,amsfonts,bm,bbm,mathrsfs,dsfont}
\usepackage{amsthm,amssymb,mathtools}

\def\eqref#1{(\ref{#1})}

\def\vzero{{\bm{0}}}

\def\vtheta{{\bm{\theta}}}
\def\vphi{{\bm{\phi}}}

\def\vc{{\bm{c}}}

\def\ve{{\bm{e}}}

\def\vm{{\bm{m}}}

\def\vq{{\bm{q}}}
\def\vr{{\bm{r}}}
\def\vs{{\bm{s}}}
\def\vt{{\bm{t}}}
\def\vu{{\bm{u}}}
\def\vv{{\bm{v}}}
\def\vw{{\bm{w}}}

\def\vy{{\bm{y}}}
\def\vz{{\bm{z}}}

\def\mD{{\bm{D}}}

\def\mG{{\bm{G}}}
\def\mH{{\bm{H}}}
\def\mI{{\bm{I}}}

\def\mM{{\bm{M}}}

\def\mU{{\bm{U}}}

\def\mW{{\bm{W}}}

\def\mSigma{{\bm{\Sigma}}}

\DeclareMathAlphabet{\mathsfit}{\encodingdefault}{\sfdefault}{m}{sl}
\SetMathAlphabet{\mathsfit}{bold}{\encodingdefault}{\sfdefault}{bx}{n}

\def\gA{{\mathcal{A}}}

\def\gC{{\mathcal{C}}}
\def\gD{{\mathcal{D}}}

\def\gF{{\mathcal{F}}}

\def\gH{{\mathcal{H}}}

\def\gL{{\mathcal{L}}}

\def\gO{{\mathcal{O}}}

\def\gT{{\mathcal{T}}}

\def\gX{{\mathcal{X}}}

\def\gZ{{\mathcal{Z}}}

\def\sR{{\mathbb{R}}}
\def\sS{{\mathbb{S}}}

\newcommand{\Rmax}{R_{\max}}

\newcommand{\E}{\mathbb{E}}

\newcommand{\KL}{D_{\mathrm{KL}}}

\newcommand{\Var}{\mathrm{Var}}

\newcommand{\Cov}{\mathrm{Cov}}

\DeclareMathOperator*{\argmax}{arg\,max}
\DeclareMathOperator*{\argmin}{arg\,min}
\DeclareMathOperator*{\esssup}{ess\,sup}

\DeclareMathOperator{\sign}{sign}

\theoremstyle{plain}
\newtheorem{theorem}{Theorem}[section]
\newtheorem{proposition}{Proposition}[section]
\newtheorem{lemma}{Lemma}[section]

\newtheorem{definition}{Definition}[section]
\newtheorem{assumption}{Assumption}
\newtheorem{remark}{Remark}

\def\1{\mathbbm{1}}

\newcommand{\Ber}{\mathrm{Bernoulli}}

\newcommand{\Reg}{\mathrm{Reg}}
\newcommand{\indicator}{\mathds{1}}

\def\ddefloop#1{\ifx\ddefloop#1\else\ddef{#1}\expandafter\ddefloop\fi}

\def\ddef#1{\expandafter\def\csname #1#1\endcsname{\ensuremath{\mathbb{#1}}}}
\ddefloop ABCDFGHIJKLMNORSTUWXYZ\ddefloop 

\def\ddef#1{\expandafter\def\csname c#1\endcsname{\ensuremath{\mathcal{#1}}}}
\ddefloop ABCDEFGHIJKLMNOPQRSTUVWXYZ\ddefloop

\def\ddef#1{\expandafter\def\csname b#1\endcsname{\ensuremath{{\mathbf{#1}}}}}
\ddefloop ABCDEFGHIJKLMNOPQRSTUVWXYZ\ddefloop  
\def\ddef#1{\expandafter\def\csname b#1\endcsname{\ensuremath{{\boldsymbol{#1}}}}}
\ddefloop abcdeghijklnopqrtsuvwxyz\ddefloop   %

\def\ddef#1{\expandafter\def\csname h#1\endcsname{\ensuremath{\hat{#1}}}}
\ddefloop ABCDEFGHIJKLMNOPQRSTUVWXYZabcdefghijklmnopqrsuvwxyz\ddefloop %
\def\ddef#1{\expandafter\def\csname hc#1\endcsname{\ensuremath{\hat{\mathcal{#1}}}}}
\ddefloop ABCDEFGHIJKLMNOPQRSTUVWXYZ\ddefloop
\def\ddef#1{\expandafter\def\csname hb#1\endcsname{\ensuremath{\hat{\mathbf{#1}}}}}
\ddefloop ABCDEFGHIJKLMNOPQRSTUVWXYZ\ddefloop %
\def\ddef#1{\expandafter\def\csname hb#1\endcsname{\ensuremath{\hat{\boldsymbol{#1}}}}}
\ddefloop abcdefghijklmnopqrstuvwxyz\ddefloop %

\def\ddef#1{\expandafter\def\csname t#1\endcsname{\ensuremath{\tilde{#1}}}}
\ddefloop ABCDEFGHIJKLMNOPQRSTUVWXYZabcdefgijklmnpqtsuvwxyz\ddefloop %
\def\ddef#1{\expandafter\def\csname tc#1\endcsname{\ensuremath{\tilde{\mathcal{#1}}}}}
\ddefloop ABCDEFGHIJKLMNOPQRSTUVWXYZ\ddefloop
\def\ddef#1{\expandafter\def\csname tb#1\endcsname{\ensuremath{\tilde{\mathbf{#1}}}}}
\ddefloop ABCDEFGHIJKLMNOPQRSTUVWXYZ\ddefloop
\def\ddef#1{\expandafter\def\csname tb#1\endcsname{\ensuremath{\tilde{\boldsymbol{#1}}}}}
\ddefloop abcdefghijklmnopqrstuvwxyz\ddefloop %

\def\ddef#1{\expandafter\def\csname bar#1\endcsname{\ensuremath{\bar{#1}}}}
\ddefloop ABCDEFGHIJKLMNOPQRSTUVWXYZabcdefghijklmnopqrtsuvwxyz\ddefloop
\def\ddef#1{\expandafter\def\csname barc#1\endcsname{\ensuremath{\bar{\mathcal{#1}}}}}
\ddefloop ABCDEFGHIJKLMNOPQRSTUVWXYZ\ddefloop
\def\ddef#1{\expandafter\def\csname barb#1\endcsname{\ensuremath{\bar{\mathbf{#1}}}}}
\ddefloop ABCDEFGHIJKLMNOPQRSTUVWXYZ\ddefloop
\def\ddef#1{\expandafter\def\csname barb#1\endcsname{\ensuremath{\bar{\boldsymbol{#1}}}}}
\ddefloop abcdefghijklmnopqrstuvwxyz\ddefloop %

\def\ddef#1{\expandafter\def\csname war#1\endcsname{\ensuremath{\overline{#1}}}}
\ddefloop ABCDEFGHIJKLMNOPQRSTUVWXYZabcdefghijklmnopqrtsuvwxyz\ddefloop
\def\ddef#1{\expandafter\def\csname warc#1\endcsname{\ensuremath{\overline{\mathcal{#1}}}}}
\ddefloop ABCDEFGHIJKLMNOPQRSTUVWXYZ\ddefloop
\def\ddef#1{\expandafter\def\csname warb#1\endcsname{\ensuremath{\overline{\mathbf{#1}}}}}
\ddefloop ABCDEFGHIJKLMNOPQRSTUVWXYZ\ddefloop
\def\ddef#1{\expandafter\def\csname warb#1\endcsname{\ensuremath{\overline{\boldsymbol{#1}}}}}
\ddefloop abcdefghijklmnopqrstuvwxyz\ddefloop %

\def\epsilon{\varepsilon}

\usepackage{pgffor}
\def\greeksymbols{alpha,beta,gamma,gam,delta,dt,eps,epsilon,zeta,eta,theta,th,iota,kappa,kap,lambda,lam,mu,nu,xi,pi,rho,sigma,sig,tau,phi,chi,psi,omega,om,Gamma,Gam,Delta,Dt,Theta,Th,Lambda,Lam,Pi,Sigma,Sig,Phi,Psi,Omega,Om}
\def\greeksymbolsnoeta{alpha,beta,gamma,gam,delta,dt,eps,epsilon,zeta,theta,th,iota,kappa,kap,lambda,lam,mu,nu,xi,pi,rho,sigma,sig,tau,phi,chi,psi,omega,om,Gamma,Gam,Delta,Dt,Theta,Th,Lambda,Lam,Pi,Sigma,Sig,Phi,Psi,Omega,Om} %

\foreach \x in \greeksymbolsnoeta{\expandafter\xdef\csname b\x\endcsname{\noexpand\ensuremath{\noexpand\boldsymbol{\csname \x\endcsname}}}}

\foreach \x in \greeksymbols{\expandafter\xdef\csname h\x\endcsname{\noexpand\ensuremath{\noexpand\hat{\csname \x\endcsname}}}}
\foreach \x in \greeksymbolsnoeta{\expandafter\xdef\csname hb\x\endcsname{\noexpand\ensuremath{\noexpand\hat{\noexpand\boldsymbol{ \csname \x\endcsname}}}}}

\foreach \x in \greeksymbols{\expandafter\xdef\csname bar\x\endcsname{\noexpand\ensuremath{\noexpand\bar{\csname \x\endcsname}}}}
\foreach \x in \greeksymbolsnoeta{%
\expandafter\xdef\csname barb\x\endcsname{\noexpand\ensuremath{\noexpand\bar{\noexpand\boldsymbol{ \csname \x\endcsname}}}}
}

\foreach \x in \greeksymbols{\expandafter\xdef\csname t\x\endcsname{\noexpand\ensuremath{\noexpand\tilde{\csname \x\endcsname}}}}
\foreach \x in \greeksymbolsnoeta{\expandafter\xdef\csname tb\x\endcsname{\noexpand\ensuremath{\noexpand\tilde{\noexpand\boldsymbol{ \csname \x\endcsname}}}}}

\usepackage{commath}

\providecommand{\normz}[2][-1]{
\ensuremath{\mathinner{
\ifthenelse{\equal{#1}{-1}}{ %
\!\left\|#2\right\|}{}
\ifthenelse{\equal{#1}{0}}{ %
\|#2\|}{}
\ifthenelse{\equal{#1}{1}}{ %
\bigl\|#2\bigr\|}{}
\ifthenelse{\equal{#1}{2}}{ %
\Bigl\|#2\Bigr\|}{}
\ifthenelse{\equal{#1}{3}}{ %
\biggl\|#2\biggr\|}{}
\ifthenelse{\equal{#1}{4}}{ %
\Biggl\|#2\Biggr\|}{}
}} %
}  %
\usepackage{algorithm, multicol, algorithmicx}
\usepackage{algpseudocode}

%% file: 001Introduction.tex
\section{\texorpdfstring{Introduction}{Introduction}}
\label{sec:introduction}
Modern machine learning increasingly relies on learned reward functions to turn limited supervision into reusable objectives that can be optimized at scale. Examples include reinforcement learning from human feedback (RLHF)~\citep{christiano2017deep}, Best-of-$N$ test-time scaling~\citep{huang2025best}, reward- or value-guided reasoning and search~\citep{guan2025rstar,li25process}, and, outside language modeling, automated essay scoring and speech-quality assessment~\citep{xie2022automated,cao2026scores}. In many of these settings, supervision is \emph{pointwise and numerical}, such as human ratings~\citep{wang24helpsteer}, verifier scores~\citep{lambert25tulu}, or search-derived value estimates~\citep{guan2025rstar}.
In such cases, it is natural to regress directly on the observed rewards.

Several methods, on the other hand, \emph{deliberately} construct pairwise targets, such as preferences or reward differences between actions under the same context, even when pointwise values are available.
In language-model reasoning, \citet{guan2025rstar} convert search-derived Q-values into high-Q/low-Q pairs and use a Bradley--Terry loss~\citep{bradleyterry1952} to train a process reward model.
\citet{cui2023ultrafeedback} construct preference pairs from numerical GPT-4 ratings and use a margin-adjusted Bradley--Terry loss to train a reward model.
Beyond language modeling, pairwise squared losses on numerical label differences have been used for magnitude-preserving ranking~\citep{cortes2007magnitude} and as a component of regression objectives~\citep{zhu2024gradient}. 
Related pairwise objectives also appear even when numerical labels, such as essay scores and speech-quality scores, are available~\citep{xie2022automated,cao2026scores}.
In these examples, pairing is an \emph{algorithmic choice} rather than an inherent property of the raw feedback. %

Despite the growing use of pairwise losses, the empirical benefits of pairing are not consistent.
Some studies report ablation results supporting the advantages of pairing over pointwise objectives.
For example, pairwise preference training improves mathematical reasoning accuracy over pointwise Q-value regression \citep{guan2025rstar} and speech quality ordering accuracy over pointwise score regression \citep{cao2026scores}.
However, some other studies report that pairing does not always improve over pointwise alternatives. %
In reward-learning ablations on Atari games, 
neither preference-based training nor pointwise regression to returns consistently yields better downstream returns~\citep{christiano2017deep}.
In ranking experiments, either objective can outperform the other depending on the choice of kernel~\citep{pahikkala2013efficient}.
These mixed results motivate a more rigorous understanding of which properties of the learning problem determine whether pairing helps downstream decisions.

We therefore ask:
\begin{center}
\textbf{\emph{Given pointwise reward data, when does learning a reward function with pairwise losses lead to provably better downstream decisions than learning with pointwise losses?
}}
\end{center}
We make significant progress towards answering this question through the lens of contextual bandits along with representative pairwise and pointwise losses.

\subsection{\texorpdfstring{Theoretical Setup: Grouped Offline Reward Learning}{Theoretical Setup: Grouped Offline Reward Learning}}
Let $\gX$ be a context (prompt) space, $\gA$ be an action (response) space with $|\gA| < \infty$, and $\gF \subseteq \{ f : \gX \times \gA \rightarrow \sR \}$ be a value function space.
$r^\star : \gX \times \gA \rightarrow \sR$ is an \emph{unknown} reward function.

\paragraph{Grouped dataset.}
With a prompt distribution $\rho \in \Delta(\gX)$ and behavior policy $\piref(\cdot \mid x) \in \Delta(\gA)$, the learner is given a (passively) collected, \emph{grouped} offline dataset
\begin{equation}
    \mathcal D
    \coloneq \{(x_i,a_{i,j},r_{i,j})\}_{i\in[N_x],\,j\in[N_a]},
    \qquad N\coloneq N_xN_a,
\end{equation}
where the contexts $x_i\sim\rho$ are i.i.d.; conditional on $x_i$, the actions $a_{i,j}\sim\piref(\cdot\mid x_i)$ are i.i.d.; and
$r_{i,j}=r^\star(x_i,a_{i,j})+\eta_{i,j}$ satisfying $|r_{i,j}| \leq {\color{red}\Rmax}$ \emph{a.s.} for a known ${\color{red}\Rmax} > 0$.
Conditional on $\gZ\coloneq\{(x_i,a_{i,j})\}_{i,j}$, the $\eta_{i,j}$'s are independent, their conditional laws depend on $\gZ$ only through $(x_i,a_{i,j})$, and they satisfy $\E[\eta_{i,j}\mid\gZ]=0$, $\E[\eta_{i,j}^2\mid\gZ] \leq \sigma^2$, and $|\eta_{i,j}| \leq 2{\color{red}\Rmax}$.\footnote{Indeed, by triangle inequality, $|\eta_{i,j}| \leq |r_{i,j}| + |r^\star(x_i, a_{i,j})| \leq {\color{red}\Rmax} + |\E[r_{i,j} \mid x_i, a_{i,j}]| \leq 2{\color{red}\Rmax}$.}
It may be that $N_a\geq2$, as is the case when multiple candidate responses are sampled for reward modeling or group-based policy optimization~\citep{guan2025rstar,cui2023ultrafeedback,shao2024deepseekmath}.
Henceforth, let $\E_{x,a} \coloneq \E_{x \sim \rho, a \sim \piref(\cdot\mid x)}$ whenever there is no ambiguity.

\paragraph{Offline regret.}
The performance of a policy $\pi : \gX \to \Delta(\gA)$ is measured with the \textbf{offline regret}:
\begin{equation}
\label{eqn:offline-regret}
    \Reg(\pi) \coloneq  V(r^\star) - V(\pi), \quad
    V(\pi) \coloneq  \E_{x\sim\rho, a\sim\pi(\cdot\mid x)}\left[ r^\star\left( x, a \right) \right],
\end{equation}
where $V(\pi)$ is the \textbf{value} of the policy $\pi$, and with a slight notation abuse, for $g \in \gF$, $V(g)$ is the value of the \Greedy policy $x \mapsto \argmax_{a \in \gA} g(x, a)$, following the standard convention in value-based offline RL~\citep{chen2019information}.
Define the \textbf{\emph{centering operator}} $\gC : f \mapsto \gC f$ as
\begin{equation}
\label{eq:centering-operator}
    \gC f(x,a) \coloneq f(x, a) - \E_{a \sim \piref(\cdot\mid x)}[f(x, a)],
\end{equation}
which is action-gap-preserving in the sense that $\gC f(x,a) - \gC f(x,a') = f(x,a) - f(x,a')$.
Then, based on standard coverage-based arguments~\citep{rashidinejad2022offline}, we show that a bound on the \emph{centered prediction error} of a given value estimator $\hat{f}$, $\E[(\gC\hat{f} - \gC r^\star)^2]$, implies an offline regret bound via \Greedy or \Pessimism; see \Cref{prop:regret} in \Cref{app:offline-regret}.

\paragraph{Weak realizability.}
We impose the following semiparametric-type weak realizability assumption on $r^\star$ (this subsumes realizability as it may be that $b^\star = 0$):
\begin{assumption}[Weak Realizability]
\label{asm:weak_realizability}
    There exist $f^\star \in \gF$ and $b^\star : \gX \rightarrow \sR$ such that $r^\star(x, a) = f^\star(x, a) + b^\star(x)$.
    In particular, we do not assume realizability on either $b^\star$ or $r^\star$; only on $f^\star$.
\end{assumption}

As it may be that $r^\star \not\in \gF$, we define the \textbf{misspecification error}:
\begin{equation}
    {\color{red}\varepsilon_{\rm aprx}}
    \coloneq \inf_{f \in \gF}
    \E_{x,a}\bigl[(f(x,a)-r^\star(x,a))^2\bigr] \geq 0.
    \label{eq:approx-error}
\end{equation}
This arises in many scenarios.
Under Bradley--Terry preferences~\citep{bradleyterry1952}, adding a prompt-only term changes neither the preference model nor the optimal KL-regularized policy~\citep{rafailov2023direct}.
Action-independent nuisances likewise arise in semiparametric contextual bandits and regression~\citep{greenewald2017action,krishnamurthy2018semiparametric,kim2019contextual,kim2025semiparametric,carranza2023flexible,nie2021quasi,foster2023orthogonal}.

\paragraph{Algorithms: \VR and \VDR.}
We now introduce the two value estimators studied in this paper, which formalize the pointwise and pairwise approaches described at the beginning.
For the pointwise loss, we consider \ValueRegression{} (\VR), which directly regresses the observed rewards:
\begin{equation}
\label{eqn:VR}
    \hat f_N^{\VR} \in \argmin_{f \in \gF} \left\{ \widehat{\gL}^{\VR}(f) \coloneq
    \frac{1}{N_x} \sum_{i=1}^{N_x} \frac{1}{N_a} \sum_{j=1}^{N_a}
    \bigl(f(x_i,a_{i,j})-r_{i,j}\bigr)^2 \right\}.
    \tag{\VR}
\end{equation}
For the pairwise loss, there are many variations such as the Bradley--Terry loss~\citep{bradleyterry1952}, margin loss~\citep{chen2024step}, and squared difference loss~\citep{xie2022automated}.
For ease of exposition, we choose a simple variant, \ValueDifferenceRegression{} (\VDR), that regresses the reward difference: denoting $\Delta_{i,j,k}(f) \coloneq f(x_i,a_{i,j})-f(x_i,a_{i,k})$ and $\widehat{\Delta}_{i,j,k}(r) \coloneq r_{i,j} - r_{i,k}$,
\begin{equation}
\label{eqn:VDR}
    \hat{f}_N^{\VDR}
    \in \argmin_{f \in \gF} \left\{ \widehat{\gL}^{\VDR}(f) \coloneq 
    \frac{1}{N_x} \sum_{i=1}^{N_x}
    \frac{1}{\binom{N_a}{2}}
    \sum_{1 \leq j < k \leq N_a}
    \left(
        \Delta_{i,j,k}(f)
        -\widehat{\Delta}_{i,j,k}(r)
    \right)^2 \right\}.
    \tag{\VDR}
\end{equation}
Using the centering identity of \citet[Corollary 2]{zhu2024gradient}, $\widehat{\gL}^{\rm VDR}$ can be evaluated exactly in linear time by centering residuals within each context (the essence is the identity $\EE[(X-Y)^2] = 2\EE[(X-\EE[X])^2]$ where $X$ and $Y$ are i.i.d). 
Pair-selection strategies are also used in preference learning~\citep{guan2025rstar,pukdee2026reward}, but are outside our scope.

\begin{table}[t]
    \centering
    \normalsize
    \renewcommand{\arraystretch}{1.8} %
    \caption{\label{tab:results}
    Centered squared prediction error bounds for \ValueRegression~\eqref{eqn:VR} and \ValueDifferenceRegression~\eqref{eqn:VDR} over finite and linear classes.
    We omit universal constants; for $\gF_{\rm lin}$, we also suppress $\log\frac{1}{\delta}$ factors.
    For $\gF_{\rm lin}$, ${\color{red}\varepsilon_{\mathrm{z}}}$ and ${\color{red}\varepsilon_{\mathrm{blk}}}$ measure the geometric impact of misspecification on \VR
    (${\color{red}\varepsilon_{\mathrm{blk}}}\coloneq{\color{red}\varepsilon_{\mathrm{x}}}+{\color{red}\varepsilon_{\mathrm{a}}}/N_a$ in Eqn.~\eqref{eq:vr-linear-residual-quantities}).
    ${\color{blue}\alpha_0}\geq1$ (resp. ${\color{blue}\alpha_{\rm C}}\geq1$) quantifies how well-conditioned the (resp. centered) features are; see \Cref{asm:linear-vr-design} (resp. \Cref{asm:linear-vdr-design}).
    $1\leq {\color{blue}d_{\rm C}} \leq d$ is the dimension of the identifiable subspace for centered features.
    }
    \begin{adjustbox}{max width=\textwidth}
    \begin{threeparttable}
    \begin{tabular}{|c|c|c|} \hline 
         & \eqref{eqn:VR} & \eqref{eqn:VDR} \\
        \hline
        $|\gF|<\infty$\tnote{$\dagger$}
        & \makecell{$\displaystyle 2{\color{red}\varepsilon_{\rm aprx}} +
          \left(\frac{F^2 + F {\color{red}\Rmax}}{N_x}
          +\frac{\sigma^2}{N}\right)
          \log\!\frac{|\gF|}{\delta}$ \\ \textbf{\small(Eqn.~\eqref{eq:vr-finite-rate} of \Cref{thm:finite})}} 
        & \makecell{$\displaystyle \left( \frac{F^2}{N_x} + \frac{\sigma^2 + F{\color{red}\Rmax}}{N} \right) \log\frac{|\gF|}{\delta}$ \\ \textbf{\small(Eqn.~\eqref{eq:vdr-finite-rate} of \Cref{thm:finite})}} \\
        \hline
        $\gF_{\rm lin}$\tnote{$\ddagger$} 
        & \makecell{$\displaystyle {\color{red}\varepsilon_{\rm aprx}}
          + \frac{{\color{red}\varepsilon_{\mathrm{blk}}}d}{N_x}
          + \frac{{\color{red}\varepsilon_{\mathrm{z}}}d}{N_x^2}
          + \frac{\sigma^2 d}{N}
          + \frac{{\color{red}\Rmax^2}{\color{blue}\alpha_0^2}d}{N^2}$ \\ \textbf{\small(\Cref{thm:vr-linear})}} 
        & \makecell{$\displaystyle \frac{\sigma^2 {\color{blue}d_{\rm C}}}{N} + \frac{{\color{red}\Rmax^2}{\color{blue}\alpha_{\rm C}^2d_{\rm C}}}{N^2}$ \\ \textbf{\small(\Cref{thm:vdr-linear})}} \\
        \hline
    \end{tabular}\label{tab:vr-vdr-finite-lin}
    \begin{tablenotes}
        \item[$\dagger$] {$F \coloneq \sup_{f \in \gF} \bignorm{f}_\infty$, and $F$ may be much smaller than ${\color{red}\Rmax}$.}        \item[$\ddagger$] {$N_x$ must be sufficiently large; see the theorems. Also, ${\color{red}\varepsilon_{\mathrm{z}}}={\color{red}\varepsilon_{\mathrm{blk}}}=0$ when ${\color{red}\varepsilon_{\rm aprx}}=0$.
        }
    \end{tablenotes}
    \end{threeparttable}
    \end{adjustbox}
\end{table}

\subsection{\texorpdfstring{Our Contributions}{Our Contributions}}

Our main departure from the prior literature is the \emph{grouped} offline dataset $\gD$: standard formulations typically require $N_a = 1$, whereas we allow for $N_a \geq 2$ in which case the $N$ triplets become \emph{non-i.i.d.}
Thus, statistical accuracy may depend on not only $N$ but also $N_x$; the term with $N_x$ governs generalization \emph{across contexts}, while the term with $N_a$ governs concentration of \emph{within-context} variations.
We focus on finite-sample guarantees with sharp dependence on these two sample sizes $N_x$ and $N_a$, as well as on the complexity of $\gF$ ($\log|\gF|$, $d$), misspecification-related terms ($\color{red}\varepsilon_{\rm aprx}, \varepsilon_{\mathrm{z}}, \varepsilon_{\mathrm{x}}, \varepsilon_{\mathrm{a}}$), the reward scale $\color{red}R_{\max}$, and feature geometry-related terms (${\color{blue}\alpha_0,\alpha_{\rm C},d_{\rm C}}$).

Our contributions are as follows, along with the rest of the paper's organization:

\noindent\textbf{Grouped finite-sample analysis (\textbf{\Cref{sec:localised}}).}
Building on localized squared-loss analyses~\citep{bartlett2005local,liang15learning}, we develop a common blockwise analysis recipe for \VR and \VDR that distinguishes terms governed by $1/N_x$ from those governed by $1/N$.

\noindent\textbf{Finite classes (\textbf{\Cref{sec:finite}}, first row of \Cref{tab:results}).}
Under weak realizability, \VDR eliminates the $\color{red}\varepsilon_{\rm aprx}$ appearing in the \VR guarantee and improves the $F{\color{red}R_{\max}}$ term from the context scale $1/N_x$ to the full-sample scale $1/N$.
We complement this by reporting an instance where \VR fails yet \VDR succeeds in achieving $\widetilde{\gO}(1/\sqrt{N})$ offline regret.

\noindent\textbf{Linear classes (\Cref{sec:linear}, second row of \Cref{tab:results}).}
Next, we consider the linear class
$\gF_{\mathrm{lin}}\coloneq\{(x,a)\mapsto\langle\vphi(x,a),\vtheta\rangle:\vtheta\in\sR^d\}$
with a known feature map $\vphi:\gX\times\gA\to\sR^d$.
We show that \emph{neither method uniformly dominates}: their relative performance depends on the \emph{feature geometry}.
While \VDR eliminates the misspecification terms ${\color{red}\varepsilon_{\rm aprx},\varepsilon_{\mathrm{z}},\varepsilon_{\mathrm{x}},\varepsilon_{\mathrm{a}}}$ present in the \VR bound, their $N^{-2}$ terms scale with different geometric quantities: ${\color{blue}\alpha_0^2}$ for \VR, determined by the raw features, and ${\color{blue}\alpha_{\rm C}^2}$ for \VDR, determined by feature differences.
This yields a geometric bias--variance tradeoff: \VDR removes nuisance-induced misspecification error (\emph{bias}), but its \emph{variance} depends on the geometry of feature differences, which can be less favorable than the raw feature geometry governing \VR.
Numerical experiments on toy and real-world LLM tasks illustrate this tradeoff.

%% file: 002Localized.tex
\section{\texorpdfstring{A Localized Analysis for Grouped Regression}{A Localized Analysis for Grouped Regression}}
\label{sec:localised}

Let $\vr = (r_{i,j})_{i \in [N_x], j \in [N_a]}$ denote the vector of realized (noisy) rewards.
We consider the following generic empirical and population squared losses:
\begin{equation}
    \widehat{\gL}(f) \coloneq \frac{1}{N_x} \sum_{i=1}^{N_x} \bignorm{\gT_i (f - \vr)}_2^2, \qquad \gL(f) \coloneq \E\left[ \bignorm{\gT_i(f - r^\star)}_2^2 \right],
\end{equation}
where $\gT : \sR^{N_a} \rightarrow \sR^m$ is an \emph{algorithm-dependent linear transformation}, $\gT_i f \coloneq \gT\left( (f(x_i, a_{i,1}), \ldots, f(x_i, a_{i,N_a})) \right)$, and $\gT_i \vr \coloneq \gT\left( (r_{i,1}, \ldots, r_{i,N_a}) \right)$.
\VR and \VDR correspond to $\gT^{\VR} : \vu \mapsto \frac{1}{\sqrt{N_a}} \vu$ and $\gT^{\VDR} : \vu \mapsto \binom{N_a}{2}^{-1/2} (u_j - u_k)_{1 \leq j < k \leq N_a}$, respectively, for $\vu \in \sR^{N_a}$.

We now define the following notion of empirical complexity measure:
\begin{definition}[Localized block-offset modulus]
\label{def:localised-modulus}
    For a comparator $f^\circ \in \gF$ and $\lambda, r \geq 0$, define the \textbf{localized block-offset modulus} $\Psi(r; \lambda, f^\circ)$ as follows:
    \begin{equation}
        \Psi(r; \lambda, f^\circ) \coloneq \sup_{\substack{h \in \gF - f^\circ \\ \gL(f^\circ + h) - \gL(f^\circ) \leq r}} \left[ \left( \gL(f^\circ + h) - \gL(f^\circ)\right) - \left( \widehat{\gL}(f^\circ + h) - \widehat{\gL}(f^\circ) \right) - \lambda \widehat{Q}(h) \right],
    \end{equation}
    where $\gF - f^\circ \coloneq \{ f - f^\circ : f \in \gF \}$ and $\widehat{Q}(h) \coloneq \frac{1}{N_x} \sum_{i=1}^{N_x} \bignorm{\gT_i h}_2^2$.
\end{definition}

We now present a fixed-point inequality for the excess loss, whose proof is deferred to \Cref{app:proof-prop-localised}:
\begin{proposition}[Localized fixed-point inequality]
\label{prop:localised}
    Fix any comparator $f^\circ \in \gF$ and any estimator $\hat{f}_N \in \gF$.
    Suppose that the following two conditions hold: denoting $h \coloneq \hat{f}_N - f^\circ$,
    \begin{enumerate}
        \item[$(i)$] There is a constant $\lambda \geq 0$ such that our estimator $\widehat{f}_N$ satisfies
        \begin{equation}
            \widehat{\gL}(\hat{f}_N) - \widehat{\gL}(f^\circ) \leq - \lambda \widehat{Q}(\hat{f}_N - f^\circ);
        \end{equation}
        \item[$(ii)$] There is a deterministic $\psi:\sR\rightarrow\sR$ such that $\Psi(r;\lambda,f^\circ)\leq\psi(r)$ for all $r\geq0$.
    \end{enumerate}
    Then, we have the following fixed-point inequality for the population excess loss:
    \begin{equation}
        \gL(\hat{f}_N) - \gL(f^\circ) \leq \sup \{ r \geq 0 : r \leq \psi(r) \}.
    \end{equation}
\end{proposition}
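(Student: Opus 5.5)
The plan is a direct plug-in argument. Set $h \coloneq \hat f_N - f^\circ \in \gF - f^\circ$ and $r_0 \coloneq \gL(\hat f_N) - \gL(f^\circ)$. If $r_0 < 0$ the claim is immediate, because the right-hand side is a supremum over a set of nonnegative reals, so it is at least $0$ whenever that set is nonempty. If the set is empty, the argument below produces a contradiction in the case $r_0 \geq 0$. So we may assume $r_0 \geq 0$, and it suffices to show that $r_0 \le \psi(r_0)$. Then $r_0$ belongs to the set $\{r \geq 0 : r \leq \psi(r)\}$ and is therefore bounded by its supremum.

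To show $r_0 \le \psi(r_0)$, evaluate the supremum defining $\Psi(r_0;\lambda,f^\circ)$ at the feasible point $h$. It is feasible because $h \in \gF - f^\circ$ and, trivially, $\gL(f^\circ+h)-\gL(f^\circ) = r_0 \le r_0$. This gives
\begin{equation}
    r_0 - \bigl(\widehat{\gL}(\hat f_N) - \widehat{\gL}(f^\circ)\bigr) - \lambda \widehat{Q}(h) \le \Psi(r_0;\lambda,f^\circ) \le \psi(r_0),
\end{equation}
where the last step uses condition $(ii)$. Condition $(i)$ states that $-\bigl(\widehat{\gL}(\hat f_N) - \widehat{\gL}(f^\circ)\bigr) - \lambda\widehat Q(h) \ge 0$. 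Adding this nonnegative quantity lower-bounds the left-hand side by $r_0$, so $r_0 \le \psi(r_0)$, as required.

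No step here is a genuine obstacle; the work is mostly bookkeeping. First, one must choose the radius equal to the realized (random) value $r_0$. Condition $(ii)$ is assumed to hold for all $r \ge 0$ simultaneously, so applying it at a data-dependent radius is legitimate. In the later theorems, establishing this uniformity with high probability via peeling is where the real effort lies, but that is outside this proposition. Second, the edge cases $r_0<0$ and an empty or unbounded fixed-point set must be handled: an infinite supremum makes the bound vacuous, and an empty set contradicts $r_0 \le \psi(r_0)$ when $r_0 \ge 0$.
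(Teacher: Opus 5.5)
Your main argument is the same as the paper's. You plug the realized $h=\hat f_N-f^\circ$ into the supremum defining $\Psi(r_0;\lambda,f^\circ)$, use condition $(i)$ to discard the nonnegative quantity $-\bigl(\widehat{\gL}(\hat f_N)-\widehat{\gL}(f^\circ)\bigr)-\lambda\widehat Q(h)$, and conclude $r_0\le\psi(r_0)$. That part is correct, including the application of $(ii)$ at the data-dependent radius $r_0$, and it settles the case $r_0\ge0$.

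\textbf{Gap in the case $r_0<0$.} You say the bound holds ``whenever that set is nonempty,'' but you never show that $\{r\ge0: r\le\psi(r)\}$ is nonempty here. Your contradiction argument only covers $r_0\ge0$. This is not a formality: with the convention $\sup\emptyset=-\infty$, the claim $r_0\le\sup\emptyset$ would be false. Your main argument also cannot supply it. Feeding $h$ into $\Psi(0;\lambda,f^\circ)$, which is feasible since $r_0<0$, only yields $\psi(0)\ge r_0$, and $r_0$ is negative. The case can genuinely occur, because $f^\circ$ is an arbitrary comparator rather than a population minimizer.

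\textbf{The fix.} The paper evaluates the modulus at the zero perturbation. Since $f^\circ\in\gF$, we have $0\in\gF-f^\circ$. Moreover
\[
\gL(f^\circ+0)-\gL(f^\circ)=\widehat{\gL}(f^\circ+0)-\widehat{\gL}(f^\circ)=\widehat Q(0)=0,
\]
so $\Psi(0;\lambda,f^\circ)\ge0$, and by $(ii)$, $\psi(0)\ge0$. Hence $0$ always belongs to the fixed-point set, its supremum is at least $0$, and the case $r_0<0$ follows. With this one line added, your proof is complete.
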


Although seemingly abstract, \Cref{prop:localised} provides a common recipe for deriving tight statistical rates for \VR and \VDR over generic $\gF$: verify conditions~$(i)$ and~$(ii)$.
In our applications, condition~$(i)$ follows deterministically from the estimator's loss geometry, whereas condition~$(ii)$ is established (with high probability) using problem-specific concentrations.

\paragraph{Relation to prior work.}
The fixed-point perspective in \Cref{prop:localised} follows the classical localization of empirical and Rademacher processes~\citep{koltchinskii-panchenko,bousquet2002phd,bousquet2002local,bartlett2005local,koltchinskii2011}, all of which are based on Talagrand's celebrated concentration inequalities~\citep{talagrand1995,talagrand1996independence,talagrand1996product}.
For squared loss, related formulations include offset Rademacher process~\citep{rakhlin2014online,liang15learning}, quadratic--multiplier and small-ball methods~\citep{mendelson2014without,mendelson2018without}, and empirical-entropy characterizations of sharp rates~\citep{rakhlin2017empirical}.
In our modulus, $-\lambda\widehat{Q}(h)$ corresponds to the quadratic term of the offset Rademacher process~\citep{rakhlin2014online,liang15learning}.

Our main technical distinction is how we verify condition~$(ii)$ under grouped sampling.
Working directly with the \emph{unsymmetrized} block-offset modulus makes the two-level concentration structure explicit.
For finite classes, we use scalar blockwise concentration and union bounds; for linear classes, we combine a lower-isometry event with direct vector and matrix concentration.
This preserves the two sampling scales, separating contributions by $N_x^{-1}$ and $N^{-1}$ with relevant dependencies.

%% file: 003Finite.tex
\section{\texorpdfstring{Finite Class: Nuisance Cancellation and Gains from $N_a \geq 2$}{Finite Class: Nuisance Cancellation and Gains from Na >= 2}}
\label{sec:finite}

For finite classes, we consider $F \coloneq  \sup_{f\in\mathcal{F}} \|f\|_\infty$, which may satisfy $F \ll \color{red}\Rmax$.
Since $f^\star\in\gF$, $F$ controls the within-context action gaps, whereas ${\color{red}\Rmax}$ and $\color{red}\varepsilon_{\rm aprx}$ may also reflect the nuisance $b^\star$.
With this, we present the error bounds for finite classes: assuming $N_a \geq 2$ for \VDR,
\begin{theorem}[\VR \& \VDR Bound for Finite $\gF$]
\label{thm:finite}
    Let $1 \leq |\gF| < \infty$, $\delta\in(0,1)$, and suppose that \Cref{asm:weak_realizability} holds.
    Then, each of the following holds with probability at least $1-\delta$:
    \begin{align}
            \E_{x,a}\left[ \left( \gC\hat{f}_N^{\VR} - \gC r^\star \right)^2 \right]
            &\leq {\color{red}2\varepsilon_{\rm aprx}}
            + 
            \left(\frac{22F^2 + 4 F {\color{red}\Rmax}}{N_x}+\frac{18\sigma^2}{N}\right)
        \log\!\frac{|\gF|}{\delta}, \label{eq:vr-finite-rate} \\
        \E_{x,a}
        \left[
            \left(
                \gC\hat f_N^{\VDR}-\gC r^\star
            \right)^2
        \right]
        &\leq
        \frac{8}{3} \left( \frac{7 F^2}{N_x} + \frac{12\sigma^2 + 8 F {\color{red}\Rmax}}{N} \right) \log\frac{2|\gF|}{\delta}. \label{eq:vdr-finite-rate}
    \end{align}
\end{theorem}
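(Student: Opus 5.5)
Both bounds will follow from \Cref{prop:localised}, applied with $\lambda=0$ for \VR and a suitable comparator for each method; condition~$(i)$ then holds trivially since $\hat f_N$ minimizes $\widehat{\gL}$.

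\textbf{\VR.} Take $f^\circ\coloneq\argmin_{f\in\gF}\E_{x,a}[(f-r^\star)^2]$, so that $\gL^{\VR}(f)=\E_{x,a}[(f-r^\star)^2]$ and $\gL^{\VR}(f^\circ)=\varepsilon_{\rm aprx}$. For a fixed $h=f-f^\circ$, the excess empirical loss decomposes as
\begin{equation}
\widehat{\gL}(f^\circ+h)-\widehat{\gL}(f^\circ)=\frac{1}{N_x}\sum_i \frac{1}{N_a}\sum_j\Bigl[h^2+2h(f^\circ-r^\star)-2h\eta\Bigr](x_i,a_{i,j}).
\end{equation}
I control the three pieces separately.
\begin{itemize}
\item The \emph{noise term} $\frac{1}{N}\sum h\eta$ consists of $N$ conditionally independent summands. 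Bernstein's inequality, conditional on $\gZ$, gives $\sqrt{\sigma^2\widehat{Q}(h)\log(|\gF|/\delta)/N}+F\Rmax\log/N$. The quadratic $\widehat{Q}$ is then absorbed by relating $\widehat Q$ to $\E h^2$ through another blockwise Bernstein step.
\item The \emph{quadratic term} and the \emph{cross term} $h(f^\circ-r^\star)$ are sums of $N_x$ i.i.d.\ block averages. Each block is bounded by $F^2$ and $F\Rmax$ respectively, so blockwise Bernstein yields deviations $\sqrt{\mathrm{Var}\cdot\log/N_x}+(F^2+F\Rmax)\log/N_x$.
\end{itemize}
For the variance of the cross block I use $\E[h^2(f^\circ-r^\star)^2]\le 4(F+\Rmax)^2\E h^2$, hence the $F\Rmax/N_x$ term. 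A union bound over $\gF$ gives $\psi(r)=c_1\sqrt{r\,(\cdot)}+c_2(\cdot)$ in terms of $r=\E[(f-r^\star)^2]-\varepsilon_{\rm aprx}$. Note that $r$ can be negative only up to convexity issues, which I handle by bounding with $\E h^2\le 2r+\ldots$.

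A subtlety here is that without convexity of $\gF$, the excess loss does not dominate $\E h^2$. So I instead work with $\E[(f-r^\star)^2]$ directly, which costs the factor~2 on $\varepsilon_{\rm aprx}$. Solving the fixed point gives $\E[(\hat f-r^\star)^2]\le 2\varepsilon_{\rm aprx}+\ldots$. Finally, $\E[(\gC\hat f-\gC r^\star)^2]\le \E[(\hat f-r^\star)^2]$, because centering is an $L^2(\piref)$ projection.

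\textbf{\VDR.} Here $\gT^{\VDR}$ annihilates $b^\star$, so $\gL^{\VDR}(f)=2\E[(\gC f-\gC f^\star)^2]$ and $f^\star\in\gF$ is a realizable comparator. I take $f^\circ=f^\star$, which makes the cross term vanish identically. What remains is a quadratic term plus the noise term $\frac{1}{N_x}\sum_i\langle\gT_i h,\gT_i\boldsymbol\eta\rangle$.

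\begin{itemize}
\item The \emph{noise term}, by the centering identity, equals $\frac{2}{N_x(N_a-1)}\sum_{i,j}(h-\bar h_i)\eta_{i,j}$, a sum of $N$ conditionally independent terms with range $O(F\Rmax/N)$. Bernstein gives a variance proxy $\sigma^2\widehat{Q}(h)/N$ and a range term $F\Rmax\log/N$. This is exactly where the improvement from $1/N_x$ to $1/N$ comes from.
\item The \emph{quadratic term} $\widehat Q(h)$ versus $\E\widehat Q(h)$ is an average of $N_x$ i.i.d.\ blocks in $[0,4F^2]$. A one-sided Bernstein bound gives $\widehat Q\ge \tfrac12\E\widehat Q-cF^2\log/N_x$.
\end{itemize}
Splitting $\delta$ between the two events accounts for the $\log(2|\gF|/\delta)$. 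Solving the resulting quadratic fixed point and dividing by 2 then yields \eqref{eq:vdr-finite-rate}.

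\textbf{Main obstacle.} I expect the hardest step to be the handling of the random $\widehat{Q}$ inside the noise Bernstein bound, i.e.\ converting empirical to population quadratic with the correct $F^2/N_x$ rather than a $\Rmax$-dependent cost. My plan is to use $\lambda>0$ via the offset term, or a peeling-free union bound, with the AM--GM split $\sqrt{ab}\le \epsilon a+b/\epsilon$ to absorb the quadratic into the left-hand side.
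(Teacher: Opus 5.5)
Your \VDR half is essentially the paper's proof. It uses comparator $f^\star$ with $\lambda=0$, conditional Bernstein on the centered noise score $\frac{4}{N_x(N_a-1)}\sum_{i,j}\widetilde g_{i,j}\eta_{i,j}$ with variance proxy proportional to $\sigma^2\widehat q_f/(N_x(N_a-1))$, a blockwise lower-isometry bound $\tfrac12 q_f\le\widehat q_f+d_\delta$, and Young's inequality. There are two small slips there. The blocks $\binom{N_a}{2}^{-1}\sum_{j<k}(h_{i,j}-h_{i,k})^2$ are bounded by $K_{N_a}\le16F^2$, not $4F^2$, since $|h|\le2F$. Also, $\lambda>0$ is not available for ERM over a finite class, and it is not needed: the offset is the $+\widehat q_f$ already present in the empirical excess loss.

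The \VR half, however, has a genuine gap exactly where the $F\Rmax$-versus-$\Rmax^2$ distinction is decided. You bound the cross-block variance by $\E[h^2(f^\circ-r^\star)^2]\le4(F+\Rmax)^2\E h^2$ and say this gives the $F\Rmax/N_x$ term. It does not. Bernstein then yields a deviation of order $\sqrt{(F+\Rmax)^2\,\E[h^2]\,t_\delta/N_x}$, and $\E h^2\le2\Delta_f+4\varepsilon_{\rm aprx}$ is of the order of the localization radius. Any way of closing the fixed point therefore leaves a term of order $(F+\Rmax)^2t_\delta/N_x$ (or $\sqrt{\Rmax^2\varepsilon_{\rm aprx}t_\delta/N_x}$). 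That is an $\Rmax^2/N_x$ contribution, which is absent from \eqref{eq:vr-finite-rate}. This is precisely the bound the paper uses for the \StarEstimator ($\ell_h^2\le4(F+\Rmax)^2d_h^2$ in \Cref{app:star}), and it is why that result carries $38\Rmax^2t_x/N_x$.

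The missing idea is to put the sup-norm on $h$ and the $L^2$ norm on the residuals, not the other way round. Since $\|h\|_\infty\le2F$, you get $\E[h^2(f^\circ-r^\star)^2]\le4F^2\varepsilon_{\rm aprx}$. Equivalently, as the paper does for the whole design part $q_f=h\bigl((f-r^\star)+(f^\circ-r^\star)\bigr)$, bound $q_f^2\le8F^2\bigl((f-r^\star)^2+(f^\circ-r^\star)^2\bigr)$, whose mean is $8F^2(\Delta_f+2\varepsilon_{\rm aprx})$. Then $\Rmax$ enters only through Bernstein's range term $F(F+\Rmax)t_\delta/N_x$. The price is that the variance proxy is $\Delta_f+2\varepsilon_{\rm aprx}$ rather than something controlled by the excess loss alone, and this is the real source of the factor $2$ on $\varepsilon_{\rm aprx}$.

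With that fix, your three-piece decomposition gives the right orders. Your conditional treatment of $h\eta$ is fine and even puts its range term at scale $1/N$. However, the separate events (quadratic, cross, noise, and an upper bound on $\widehat Q$) cannot reproduce the displayed constants and $\log(|\gF|/\delta)$. The paper instead applies a single one-sided Bernstein inequality to the block variables $U_i(f)=\frac1{N_a}\sum_j[(f-r_{i,j})^2-(f^\circ-r_{i,j})^2]$. It uses the law of total variance to get $\Var(U_i(f))\le8(F^2+\sigma^2/N_a)(\Delta_f+2\varepsilon_{\rm aprx})$, followed by one union bound over $\gF$.
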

\begin{proof}
    Here, we showcase the localized analysis recipe (\Cref{sec:localised}).
    As the analysis for \VDR proceeds similarly, we only show the proof of \VR here; \VDR's proof is presented in \Cref{app:vdr}.
    
    We set $f^\circ_{\VR} \in \argmin_{f \in \gF} \gL^{\VR}(f)$, which satisfies $\gL^{\VR}(f^\circ_{\VR}) = \color{red}\varepsilon_{\rm aprx}$, i.e.,
    \begin{equation}
    \label{eqn:VR-finite-oracle}
        \E_{x,a}\left[ (\gC\hat{f}_N^{\VR} - \gC r^\star)^2 \right] \leq \E_{x,a}\left[ (\hat{f}_N^{\VR} - r^\star)^2 \right] = \gL^{\VR}(\hat{f}_N^{\VR}) = {\color{red}\varepsilon_{\rm aprx}} + \gL^{\VR}(\hat{f}_N^{\VR}) - \gL^{\VR}(f^\circ_{\VR}),
    \end{equation}
    where the first inequality follows because the variance is bounded by the second moment: \(\E[(\gC g(x,a))^2]=\E_x[\Var[g(x,a)\mid x]] \leq \E[g(x,a)^2]\).
    As $\hat{f}_N^{\VR}$ is an empirical risk minimizer, condition $(i)$ of \Cref{prop:localised} holds with $\lambda = 0.$
    As for $(ii)$, the following lemma, proved in \Cref{app:proof-lem-finite-vr-modulus}, provides such $\psi(\cdot)$ that upper bounds the localized modulus with high probability:
    \begin{lemma}
    \label{lem:finite-vr-modulus}
        Denote $t_\delta \coloneq \log\frac{|\gF|}{\delta}$ and $v_\delta \coloneq \left( \frac{F^2}{N_x} + \frac{\sigma^2}{N} \right)t_\delta.$
        Then, with probability at least $1-\delta$:
    \begin{equation}
    \label{eq:finite-vr-modulus}
        \Psi^{\VR}(r;0,f^\circ_{\VR})
        \leq \psi(r) \coloneq 4\sqrt{ v_\delta \bigl(r+2\varepsilon_{\rm aprx}\bigr) } + \frac{8F(F+\Rmax)t_\delta}{3N_x}, \quad \forall r \geq 0.
    \end{equation}
    \end{lemma}
    We remark that the above lemma is obtained via \emph{one-sided} Bernstein's inequality~\citep{bernstein} over the blocks $i \in [N_x]$, attaining per-block variance of $F^2 + \sigma^2 N_a^{-1}$ instead of the na\"{\i}ve $F^2 + \sigma^2$.
    
    The above lemma and \Cref{prop:localised} imply $\gL^{\VR}(\hat{f}_N^{\VR}) - \gL^{\VR}(f^\circ_{\VR}) \leq \bar r$, where
    \begin{equation}
        \bar r
        \coloneq
        \sup\left\{
            r\geq0:
            r\leq
            4\sqrt{
                v_\delta(r+2\varepsilon_{\rm aprx})
            }
            +
            \frac{
                8F(F+\Rmax)t_\delta
            }{
                3N_x
            }
        \right\}.
    \end{equation}
    Using Young's inequality to separate $4\sqrt{v_\delta(r+{\color{red}2\varepsilon_{\rm aprx}})} \leq \frac13(r+{\color{red}2\varepsilon_{\rm aprx}}) + 12v_\delta$ and rearranging, we obtain $\bar r \leq {\color{red}\varepsilon_{\rm aprx}} + 18v_\delta + \frac{4F(F+\Rmax)t_\delta}{N_x}$.
    We conclude by combining this with Eqn.~\eqref{eqn:VR-finite-oracle}.
\end{proof}

\paragraph{Comparison and discussion.}
The terms scaling with $N_x^{-1}$ reflect variation across contexts; additional action samples reduce within-context fluctuations, but they do not provide information about unseen contexts without further structure on $\gF$.
\VR retains the approximation error ${\color{red}\varepsilon_{\rm aprx}}$, even when this is caused entirely by the action-independent $b^\star$.
In contrast, \VDR removes such dependency and moves the $F{\color{red}\Rmax}$ factor from $N_x^{-1}$ to $N^{-1}$, which can be substantial when ${\color{red}\Rmax} \gg F$.

Also, note that Eqn.~\eqref{eq:vr-finite-rate} is a \emph{non-exact} oracle inequality~\citep{lecue-mendelson,massart-nedelec}, as the coefficient of ${\color{red}\varepsilon_{\rm aprx}}$ is two, not one.
Indeed, it is known that ERM over an arbitrary finite $\gF$ cannot admit an exact oracle inequality with the optimal rate of $\gO(\log(|\gF|/\delta)/N)$~\citep{lecue-rigollet}, while alternate aggregation-type estimators~\citep{audibert07star,audibert09star,liang15learning,lecue-rigollet} can.
In \Cref{app:star}, we prove an oracle inequality for the \StarEstimator~\citep{audibert07star,audibert09star}, adapted to our grouped dataset setting.

\paragraph{Failure of \VR.}
We complement this with a separation result, whose proof is in \Cref{app:vr_failure}:
\begin{theorem}[Failure of \VR]
\label{thm:vr_failure}
    There exists a finite-class instance satisfying Assumption~\ref{asm:weak_realizability} such that, for every fixed $N_a \geq 1$, $\delta \in (0,1)$ and sufficiently large $N_x$, 
    \(
        \Reg(\hat\pi_N^{\VRGreedy}) = \Omega({\color{red}R_{\max}}),
    \)
    with probability at least $1 - \delta$.
    Moreover, on the same instance with $N_a \geq 2$, $\Reg(\hat\pi_N^{\VDRGreedy}) = 0,$ with probability tending to one exponentially fast in $N_x$.
\end{theorem}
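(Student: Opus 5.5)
We will construct an explicit instance in which the nuisance $b^\star$ is correlated with the actions chosen by $\piref$. Within the class $\gF$, the best pointwise fit to $r^\star$ then prefers a function whose greedy policy is wrong. Since $\gF$ is finite and the population minimizer is unique with a positive margin, ERM over $\gF$ concentrates on that wrong function. \VDR is immune, because differencing cancels $b^\star$ exactly.

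\textbf{Construction.} Take two contexts $\gX=\{x_1,x_2\}$, each with probability $1/2$, and two actions $\gA=\{a_1,a_2\}$. Make the behavior policy context-dependent: $\piref(a_1\mid x_1)=1-p$ and $\piref(a_1\mid x_2)=p$ for a small $p\in(0,1/2)$. Let $\gF=\{f^\star,g\}$, with $f^\star(x,a_1)=c$, $f^\star(x,a_2)=0$, and $g(x,a_1)=0$, $g(x,a_2)=c$, where $c>0$ is a small constant. The greedy policy of $f^\star$ plays $a_1$ and is optimal; the greedy policy of $g$ plays $a_2$ and has regret $c$. Set $b^\star(x_1)=B$ and $b^\star(x_2)=-B$, and use noiseless or bounded rewards, so that $\Rmax\asymp B$. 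We then need $c=\Omega(\Rmax)$ while $g$ still beats $f^\star$ in population loss. So we compute
\[
\gL^{\VR}(f)=\E_{x,a}\bigl[(f-f^\star-b^\star)^2\bigr]
=\E\bigl[(f-f^\star)^2\bigr]-2\E\bigl[(f-f^\star)b^\star\bigr]+\E\bigl[(b^\star)^2\bigr].
\]
For $f=g$ we have $g-f^\star=-c$ on $a_1$ and $+c$ on $a_2$. Hence $\E[(g-f^\star)b^\star]=\tfrac12\bigl[Bc(2p-1)\bigr]+\tfrac12\bigl[-Bc(1-2p)\bigr]=-Bc(1-2p)$, which is the wrong sign. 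I will therefore reverse the design: choose $\piref(a_2\mid x_1)=1-p$, so that in the high-nuisance context the behavior mostly plays $a_2$. Then $\E[(g-f^\star)b^\star]=Bc(1-2p)$ and
\[
\gL^{\VR}(g)-\gL^{\VR}(f^\star)=c^2-2Bc(1-2p),
\]
which is negative whenever $c<2B(1-2p)$. Taking $c=B/2$ and $p=1/4$ gives a gap of $-B^2/4$, so the population margin is $\Omega(\Rmax^2)$ and the regret of $g$ is $c=\Omega(\Rmax)$. Weak realizability holds by construction, and $f^\star\in\gF$.

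\textbf{VR fails.} The empirical loss difference $\widehat{\gL}^{\VR}(g)-\widehat{\gL}^{\VR}(f^\star)$ is an average over $N_x$ i.i.d. blocks. Each block term is bounded by $O(\Rmax^2)$ and has mean $-B^2/4$ for every $N_a$. Hoeffding's inequality over the blocks then shows that the difference is negative with probability at least $1-\exp(-\Omega(N_x))$, which exceeds $1-\delta$ once $N_x\gtrsim\log(1/\delta)$. On that event $\hat f_N^{\VR}=g$ and the regret equals $c=\Omega(\Rmax)$. Ties should be broken adversarially or excluded by strictness.

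\textbf{VDR succeeds.} Within each context, pairwise differences cancel $b^\star$ exactly. With noiseless rewards, $f^\star$ attains empirical loss zero. For $g$, any pair with $a_{i,j}\neq a_{i,k}$ gives residual $\pm 2c\neq0$. Therefore $\hat f^{\VDR}_N=f^\star$ unless no context contains both actions. Each context fails to contain both actions with probability at most $q\coloneq(1-p)^{N_a}+p^{N_a}<1$ for $N_a\ge2$, so all contexts fail with probability at most $q^{N_x}$, which is exponentially small. If we want to allow bounded noise, I would replace this step by the same Hoeffding argument applied to the \VDR loss gap, which has a positive mean of order $c^2$.

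\textbf{Main obstacle.} The delicate step is the sign bookkeeping and constants. We must make the correlation between $b^\star$ and $\piref$ strong enough that $g$ wins the population \VR loss while $c$ stays a constant fraction of $\Rmax$. We also need $|r|\le\Rmax$ to hold, since $|r|\le B+c=\tfrac32 B$ lets us set $\Rmax=\tfrac32B$.
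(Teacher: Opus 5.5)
Your proposal is correct and follows essentially the same route as the paper. The paper also uses a two-context, two-action instance in which $\piref$ correlates actions with the nuisance $b^\star$, and a two-element class $\{f^\star,g\}$ whose population \VR loss favors the order-reversing $g$ by a margin of order $\Rmax^2$. It then applies Hoeffding's inequality over the $N_x$ independent blocks, and gets exact recovery by \VDR as soon as some block contains both actions, which fails with probability at most $(p^{N_a}+(1-p)^{N_a})^{N_x}$.

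The differences are cosmetic: your $g$ is the action flip of $f^\star$ with $b^\star=\pm B$, whereas the paper takes $g$ to be the action-wise conditional mean of $r^\star$ under the behavior distribution and $b^\star\in\{B,0\}$. You should state explicitly that the reversal is applied at both contexts, i.e., also $\piref(a_1\mid x_2)=1-p$. With $\piref(a_1\mid x_2)=p$ the cross term $\E[(g-f^\star)b^\star]$ vanishes and $f^\star$ wins.
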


%% file: 004Linear.tex
\section{\texorpdfstring{Linear Classes: A Geometric Bias–Variance Tradeoff}{Linear Classes: A Geometric Bias-Variance Tradeoff}}
\label{sec:linear}

Unlike the finite-class setting, we show a geometric bias--variance tradeoff between \VR and \VDR.
For clarity of exposition, we omit universal constants and $\log\frac{1}{\delta}$ dependencies.
Throughout, we use the random variables $X\sim\rho$ and $A,A',A_1,\ldots,A_{N_a} \overset{i.i.d.}{\sim} \piref(\cdot\mid X)$.
We denote $\Cov(\mG)\coloneq\E[(\mG-\E\mG)^2]$ for a random self-adjoint matrix $\mG$.
All proofs are presented in \Cref{app:linear}.

\subsection{\texorpdfstring{\VR for Linear Class}{VR for Linear Class}}
\label{subsec:linear_vr}

Following the bounded statistical-leverage and approximation-error conditions of \citet[Conditions 1 and 3]{hsu2014random}, we similarly impose the following assumptions:
\begin{assumption}
\label{asm:linear-vr-design}
    $\mSigma\coloneq\E[\vphi(X, A)\vphi(X, A)^\top]\succ\vzero$.
    Also, there exists ${\color{blue}\alpha_0} \geq 1$ such that, for the \textbf{\textit{whitened feature}} $\vz(x,a)\coloneq\mSigma^{-1/2}\vphi(x,a)$,
    \begin{equation}
        \esssup \bignorm{\vz(X, A)}_2 \leq {\color{blue}\alpha_0} \sqrt{d}, \qquad
        0\leq {\color{red}\varepsilon_{\mathrm{z}}}
        \coloneq d^{-1} \esssup
        \bignorm{{\color{red}e^\circ}(X, A) \vz(X, A)}_2^2
        <\infty,
    \end{equation}
    where ${\color{red}e^\circ}(x, a) \coloneq f^\circ(x, a) - r^\star(x, a)$ with $f^\circ \in \argmin_{f \in \gF_{\rm lin}} \E_{x,a}[(f(x, a) - r^\star(x, a))^2]$.
\end{assumption}
\begin{remark}[Relation to Design Objective]
    The condition $\esssup \bignorm{\vz(X, A)}_2 \leq {\color{blue}\alpha_0} \sqrt{d}$ is closely related to the G-optimal design~\citep{pukelsheim2005design}: $\bignorm{\vz(x, a)}_2^2 = \bignorm{\vphi(x, a)}_{\mSigma^{-1}}^2 \leq {\color{blue}\alpha_0^2} d$.
    Although $\rho$ and $\piref$ are fixed in our setting, in the active learning scenario, by jointly optimizing for $(\rho,\piref)$, one could achieve ${\color{blue}\alpha_0} = 1$ by the Kiefer--Wolfowitz theorem~\citep{kiefer-wolfowitz}.
\end{remark}

We define the random covariance matrix ${\color{orange}\mG_{N_a}} \coloneq \frac{1}{N_a}\sum_{j=1}^{N_a} \vz(X,A_j)\vz(X,A_j)^\top$, and non-random, across- and within-context residual quantities
\begin{equation}
{\color{red}\varepsilon_{\mathrm{x}}}
\coloneq \frac1d\E_X\!\left[
    \bignorm{\E_A[{\color{red}e^\circ}\vz\mid X]}_2^2
\right], \quad
{\color{red}\varepsilon_{\mathrm{a}}}
\coloneq \frac1d\E_X\!\left[
    \E_A\!\left[
        \bignorm{{\color{red}e^\circ}\vz-\E_A[{\color{red}e^\circ}\vz\mid X]}_2^2
        \mid X
    \right]
\right],
\label{eq:vr-linear-residual-quantities}
\end{equation}
The upright subscripts $\mathrm{x}$ and $\mathrm{a}$ label the between-context and within-context components, respectively; both quantities are population scalars.

With this, we finally present the \VR bound for the linear class.
\begin{theorem}[\VR Bound for $\gF_{\rm lin}$]
\label{thm:vr-linear}
Let $\delta\in(0,1)$, and suppose that \Cref{asm:weak_realizability,asm:linear-vr-design} hold and $N_x\geq
    \max\left\{
       18\bignormop{{\color{orange}\Cov(\mG_{N_a})}},
       2
    \right\}\log\frac{3d}{\delta}.$
Then, with probability at least $1-\delta$,\footnote{In the well-specified case, the noise term can be sharpened to \(\sigma^2d_{\mathrm{eff}}/N\), with \(d_{\mathrm{eff}}\le d\); see \Cref{app:linear-vr-alternate-error-bound}.}
\begin{equation}
    \E_{x,a}\!\left[(\gC\hat f_N^{\VR}-\gC r^\star)^2\right]
    \leq
    {\color{red}\varepsilon_{\rm aprx}}
    +\widetilde{\gO}\left(
        \frac{{\color{red}\varepsilon_{\mathrm{x}}}d}{N_x}
        +\frac{{\color{red}\varepsilon_{\mathrm{z}}}d}{N_x^2}
        +\frac{({\color{red}\varepsilon_{\mathrm{a}}}+\sigma^2)d}{N}
        +\frac{{\color{red}\Rmax^2}{\color{blue}\alpha_0^2}d}{N^2}
    \right).
\label{eq:vr-linear-rate}
\end{equation}
\end{theorem}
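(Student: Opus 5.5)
As in the proof of \Cref{thm:finite}, I would start from the reduction in Eqn.~\eqref{eqn:VR-finite-oracle}. Since centering is a contraction in $L_2(\rho\otimes\piref)$, it suffices to bound $\gL^{\VR}(\hat f_N^{\VR})-\gL^{\VR}(f^\circ)={\color{red}\varepsilon_{\rm aprx}}$-excess. For the linear class, $f^\circ$ is the $L_2$ projection of $r^\star$ onto $\gF_{\rm lin}$. So writing $\hat f_N^{\VR}-f^\circ=\langle\vphi,\hat\vtheta-\vtheta^\circ\rangle$, the excess loss is exactly $\bignorm{\mSigma^{1/2}(\hat\vtheta-\vtheta^\circ)}_2^2$; the cross term vanishes by the normal equations $\E[e^\circ\vz]=\vzero$. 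I would then work directly in whitened coordinates $\vw\coloneq\mSigma^{1/2}(\hat\vtheta-\vtheta^\circ)$. Rather than going through $\Psi$ abstractly, I would use the closed form of the ERM. With $\widehat{\mG}\coloneq\frac1{N_x}\sum_i \mG_{N_a}^{(i)}$ (the empirical whitened Gram matrix), the ERM gives
\begin{equation}
\vw=-\widehat{\mG}^{-1}\frac1{N}\sum_{i,j}\bigl(e^\circ(x_i,a_{i,j})-\eta_{i,j}\bigr)\vz(x_i,a_{i,j}),
\end{equation}
which is equivalent to applying \Cref{prop:localised} with $\lambda=0$ on a lower-isometry event.

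\textbf{Step 1 (lower isometry).} I would apply matrix Bernstein over the $N_x$ i.i.d.\ blocks $\mG_{N_a}^{(i)}$, which have mean $\mI$. Their variance proxy is $\bignormop{\Cov(\mG_{N_a})}$, and their range is bounded by $\alpha_0^2d$ (only through a lower-order term, using one-sided Bernstein for the smallest eigenvalue). Under the stated condition on $N_x$, this yields $\widehat{\mG}\succeq\frac12\mI$ with probability at least $1-\delta/3$. Hence $\bignorm{\vw}_2\le 2\bignorm{\vs}_2$, where $\vs$ is the score vector above.

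\textbf{Step 2 (score concentration).} I would split $\vs=\vs_{\rm mis}+\vs_{\rm noise}$.
\begin{itemize}
\item For the misspecification part, I would decompose each block average $\frac1{N_a}\sum_j e^\circ\vz(x_i,a_{i,j})$ into the across-context mean $\E_A[e^\circ\vz\mid x_i]$ plus a within-context fluctuation. Both pieces are mean zero and independent across $i$. Vector Bernstein then gives squared norm at most $\widetilde{\gO}\bigl(\varepsilon_{\rm x}d/N_x+\varepsilon_{\rm a}d/N\bigr)$ from the variances. The range term is governed by $\esssup\bignorm{e^\circ\vz}_2^2=\varepsilon_{\rm z}d$ divided by $N_x^2$.
\item For the noise part, I would condition on $\gZ$. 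The $\eta_{i,j}$ are independent with variance at most $\sigma^2$ and bounded by $2\Rmax$, so vector Bernstein over all $N$ terms gives variance $\sigma^2\,\mathrm{tr}(\widehat{\mG})/N\lesssim\sigma^2 d/N$ on the event of Step~1 (bounding $\mathrm{tr}\widehat{\mG}\le d\,\bignormop{\widehat{\mG}}\lesssim d$, or via the trace directly). The range term is $\Rmax\alpha_0\sqrt d/N$, which squares to $\Rmax^2\alpha_0^2d/N^2$.
\end{itemize}
A union bound over the three events gives the $\log\frac{3d}{\delta}$ factors.

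\textbf{Main obstacle.} I expect the obstacle to be Step~2's misspecification piece. The weak-realizability nuisance $b^\star$ makes $e^\circ$ only bounded through $\Rmax$, and the across- and within-context variances must be separated cleanly so that $\varepsilon_{\rm x}$ is charged at rate $1/N_x$ but $\varepsilon_{\rm a}$ only at rate $1/N$. To get this, I would first apply the law of total variance blockwise: the variance of the block average equals $d\,\varepsilon_{\rm x}+d\,\varepsilon_{\rm a}/N_a$. I would then use a single vector Bernstein bound on the block averages, with range $\sqrt{\varepsilon_{\rm z}d}$. A secondary point is that the upper bound $\bignormop{\widehat{\mG}}\lesssim1$ is also needed for the noise variance; this follows from the same matrix Bernstein bound in Step~1.
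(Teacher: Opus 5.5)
Your architecture is the paper's: one-sided matrix Bernstein over the $N_x$ blocks for lower isometry, one blockwise vector Bernstein for the misspecification score (second moment $d(\varepsilon_{\mathrm{x}}+\varepsilon_{\mathrm{a}}/N_a)$ by total variance, range $\sqrt{\varepsilon_{\mathrm{z}}d}$), and a conditional vector Bernstein for the noise. Solving $\widehat{\mG}\vw=-\vs$ in closed form instead of running the modulus (the paper uses $\lambda=1$, not $\lambda=0$) is only a change of bookkeeping.

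The gap is in the noise step. You bound $\bignorm{\vw}_2\le\bignormop{\widehat{\mG}^{-1}}\bignorm{\vs}_2$, so the conditional variance is $\sigma^2\tr(\widehat{\mG})/N$. That requires an \emph{upper} bound on $\widehat{\mG}$, which you say follows from Step~1. It does not, under the stated condition on $N_x$.

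\begin{itemize}
\item The condition works for the lower tail only because one-sided matrix Bernstein there needs just $\lambda_{\max}(\mI-\mG_i)\le1$, which holds since $\mG_i\succeq\vzero$. This, rather than $\alpha_0^2d$, is the relevant range; with range $\alpha_0^2d$, $N_x\ge2\log\frac{3d}{\delta}$ would not even give Step~1.
\item The upper tail has range $\lambda_{\max}(\mG_i-\mI)\le\alpha_0^2d-1$. So $\bignormop{\widehat{\mG}}\lesssim1$ via blockwise Bernstein needs $N_x\gtrsim\alpha_0^2d\log\frac{d}{\delta}$, which is not assumed. The paper notes that for $\mG(X)=\mI$ the hypothesis already holds with $N_x$ of order $\log\frac{d}{\delta}$.
\item Your fallbacks do not recover the rate. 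Using $\tr\widehat{\mG}\le\alpha_0^2d$ gives $\sigma^2\alpha_0^2d/N$. Scalar Bernstein on $\tr\widehat{\mG}$ leaves an extra $\widetilde{\gO}(\sigma^2\alpha_0^2d/(N_xN))$, which neither $\sigma^2d/N$ nor $\Rmax^2\alpha_0^2d/N^2$ absorbs when $N_x\ll\alpha_0^2$ and $N_a\sigma^2\gg\Rmax^2$.
\end{itemize}

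The missing idea, which the paper uses, is self-normalization. Write $\bignorm{\vw}_2\le\bignormop{\widehat{\mG}^{-1/2}}\bignorm{\vs}_{\widehat{\mG}^{-1}}$, where $\bignorm{\vv}_{\widehat{\mG}^{-1}}=\sqrt{\vv^\top\widehat{\mG}^{-1}\vv}$.

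\begin{itemize}
\item Handle the misspecification part in the Euclidean norm via $\bignorm{\cdot}_{\widehat{\mG}^{-1}}\le\sqrt3\bignorm{\cdot}_2$.
\item Apply vector Bernstein conditionally on $\gZ$ to $\vy_{i,j}=\frac1N\widehat{\mG}^{-1/2}\vz(x_i,a_{i,j})\eta_{i,j}$. Then $\sum_{i,j}\E[\bignorm{\vy_{i,j}}_2^2\mid\gZ]\le\frac{\sigma^2}{N^2}\tr\bigl(\widehat{\mG}^{-1}\cdot N\widehat{\mG}\bigr)=\sigma^2d/N$ exactly.
\item The range bound $\bignorm{\vy_{i,j}}_2\le2\Rmax\alpha_0\sqrt{3d}/N$ uses only $\bignormop{\widehat{\mG}^{-1/2}}\le\sqrt3$.
\end{itemize}

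So only the lower isometry is ever needed. The resulting bound $\bignorm{\vw}_2^2\le3\bignorm{\vs}_{\widehat{\mG}^{-1}}^2$ is exactly the paper's $3D^2\le\Gamma_\delta^2$.

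A minor point: the stated condition yields $\widehat{\mG}\succeq\frac13\mI$, not $\frac12\mI$. This changes only constants.
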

\begin{proof}[Proof Sketch]
    Because of the non-i.i.d. nature of our problem setup, we \emph{cannot} directly invoke the existing result for ordinary least squares (OLS) from \citet[Theorem 1]{hsu2014random}.
    Instead, we follow the same recipe as laid out in \Cref{sec:localised}, carefully combining it with \emph{vector/matrix} Bernstein inequalities~\citep{hsu2012bernstein,tropp2015survey} over the blocks $i \in [N_x]$ and a variance computation that separates the feature-diversity and misspecification terms.
\end{proof}

\paragraph{Requirement on $N_x$.}
Let $\mG(X)\coloneq\E_A[\vz\vz^\top\mid X]$.
By the law of total covariance, the block Gram variance decomposes as
\begin{equation}
{\color{orange}\Cov(\mG_{N_a})}
=\Cov_X(\mG(X))
+\frac1{N_a}\E_X\!\left[\Cov_A(\vz\vz^\top\mid X)\right].
\label{eq:vr-linear-gram-decomposition}
\end{equation}
The first term captures variation in the conditional feature covariance across contexts and is unaffected by repeated actions, whereas the second captures within-context variation and decays with $N_a$.
Thus, when $\mG(X)$ is nearly constant, repeated actions improve concentration of the empirical Gram matrix much as additional independent contexts would; otherwise, sufficiently many distinct contexts remain necessary for lower isometry.
Because these quantities are defined after whitening, they are invariant to invertible linear reparameterizations of the features and can be large when feature directions are represented unevenly across contexts.
In the special case $\mG(X)=\mI_d$ almost surely,
$\bignormop{{\color{orange}\Cov(\mG_{N_a})}}\leq({\color{blue}\alpha_0^2}d-1)/N_a$,
so the sample-size requirement in \Cref{thm:vr-linear} is satisfied if $N_x\gtrsim\log\frac{d}{\delta}$ and $N\gtrsim{\color{blue}\alpha_0^2}d\log\frac{d}{\delta}$.
Importantly, the block Gram variance enters only the sample-size condition ensuring lower isometry, not the error bound itself.

\paragraph{Interpreting the error bound.}
The conditional mean $\E_A[{\color{red}e^\circ}\vz\mid X]$ is a context-specific misspecification score and need not vanish: the population normal equation guarantees only that
$\E_X[\E_A[{\color{red}e^\circ}\vz\mid X]]=\vzero$.
Consequently, the across-context term
$\tfrac{{\color{red}\varepsilon_{\mathrm{x}}}d}{N_x}$
can persist as $N_a\to\infty$, whereas the within-context term
$\tfrac{{\color{red}\varepsilon_{\mathrm{a}}}d}{N}$
is averaged over all observations and vanishes in the same limit.
We also note that ${\color{red}\varepsilon_{\mathrm{x}}} = 0$ does \emph{not} imply ${\color{red}\varepsilon_{\mathrm{a}}}=0$ and vice-versa.

The leverage bound yields ${\color{red}\varepsilon_{\mathrm{x}}}+{\color{red}\varepsilon_{\mathrm{a}}}/N_a \leq {\color{blue}\alpha_0^2}{\color{red}\varepsilon_{\rm aprx}}$, and hence the following \emph{crude} upper bound:
\begin{equation}
    \E_{x,a}\!\left[(\gC\hat f_N^{\VR}-\gC r^\star)^2\right] \leq \left( 1 + \widetilde{\gO}\del[2]{ \frac{{\color{blue}\alpha_0^2}d}{N_x} } \right) {\color{red}\varepsilon_{\rm aprx}} + \widetilde{\gO}\left( \frac{{\color{red}\varepsilon_{\mathrm{z}}}d}{N_x^2} \right) + \widetilde{\gO}\left( \frac{\sigma^2 d}{N} + \frac{{\color{red}\Rmax^2}{\color{blue}\alpha_0^2}d}{N^2} \right).
\end{equation}
Thus, for fixed $N_x$, the oracle inequality loses exactness and retains a misspecification-related bottleneck of order $N_x^{-2}$.
The almost-sure quantity ${\color{red}\varepsilon_{\mathrm{z}}}$ cannot be controlled by the $L_2$ error ${\color{red}\varepsilon_{\rm aprx}}$ alone, but satisfies ${\color{red}\varepsilon_{\mathrm{z}}}\leq{\color{blue}\alpha_0^2}\bignorm{{\color{red}e^\circ}}_\infty^2$.

Finally, in the \emph{well-specified} setting,
${\color{red}\varepsilon_{\rm aprx}}={\color{red}\varepsilon_{\mathrm{z}}}
={\color{red}\varepsilon_{\mathrm{x}}}={\color{red}\varepsilon_{\mathrm{a}}}=0$ in \Cref{thm:vr-linear},
and the error bound becomes
$\widetilde{\gO}\!\left(
\frac{\sigma^2d}{N}
+ \frac{{\color{red}\Rmax^2}{\color{blue}\alpha_0^2}d}{N^2}
\right)$.
Thus, in the well-specified case, no leading estimation term scales only as $N_x^{-1}$, and ${\color{red}\Rmax}$ and ${\color{blue}\alpha_0}$ appear only in the lower-order Bernstein correction.

\paragraph{Relation to random-design least squares.}
When \(N_a=1\), the residual and noise terms recover the first-order dependence of classical random-design OLS, including the tight well-specified rate \(\sigma^2d/N\) under homoscedastic noise \citep[Theorem~1 and Remark~10]{hsu2014random}.
Our contribution is the grouped case \(N_a\geq2\), where the decompositions above identify which fluctuations scale with \(N_x\) and which benefit from all \(N=N_xN_a\) observations.
We discuss connections to misspecified linear regression~\citep{amortila2024mitigating,maran2025misspecification} more broadly in \Cref{app:related-work}\textbf{(3)}.

\subsection{\texorpdfstring{\VDR for Linear Class}{VDR for Linear Class}}
\label{subsec:linear_vdr}

Define the \emph{centered feature} $\overline\vphi(x,a)
\coloneq
\vphi(x,a)-\E_{a\sim\piref(\cdot\mid x)}[\vphi(x,a)]$, and its covariance as $\mSigma_{\rm C}
\coloneq
\E_{x,a}\left[\overline\vphi(x,a) \overline\vphi(x,a)^\top\right].$
Let ${\color{blue}d_{\rm C}}\coloneq\operatorname{rank}(\mSigma_{\rm C})$.
Whenever ${\color{blue}d_{\rm C}}\geq1$, let $\mU_{\rm C}\in\sR^{d\times {\color{blue}d_{\rm C}}}$ satisfy
$\mU_{\rm C}^\top\mU_{\rm C}=\mI_{{\color{blue}d_{\rm C}}}$ and
$\operatorname{col}(\mU_{\rm C})=\operatorname{range}(\mSigma_{\rm C})$, and define
\begin{equation}
    \overline{\mSigma}_{\rm C}
    \coloneq
    \mU_{\rm C}^\top\mSigma_{\rm C}\mU_{\rm C}
    \succ\vzero,
    \qquad
    \vz_{\rm C}(x,a)
    \coloneq
    \overline{\mSigma}_{\rm C}^{-1/2}
    \mU_{\rm C}^\top\overline\vphi(x,a)
    \in\sR^{{\color{blue}d_{\rm C}}}.
    \label{eq:centered-whitened-feature}
\end{equation}
Then, $\E[\vz_{\rm C}\vz_{\rm C}^\top]=\mI_{{\color{blue}d_{\rm C}}}$ and
$\E_A[\vz_{\rm C}(X,A)\mid X]=\vzero$.

Unlike \VR, \VDR does not identify directions of the parameter $\vtheta$ that add the same value to every action at a given context, since its loss depends only on within-context differences.
Thus, we state the design assumption on its identifiable subspace as follows:
\begin{assumption}
\label{asm:linear-vdr-design}
    $d_{\rm C}\geq1$, and there is a ${\color{blue}\alpha_{\rm C}} \geq 1$ s.t. $\esssup
    \bignorm{\vz_{\rm C}(X,A)-\vz_{\rm C}(X,A')}_2
    \leq
    {\color{blue}\alpha_{\rm C}}\sqrt{2 d_{\rm C}}.$
\end{assumption}

For $N_a\geq2$, denoting $\overline{\vz}_{{\rm C},N_a}(X)
\coloneq
\frac1{N_a}\sum_{j=1}^{N_a}\vz_{\rm C}(X,A_j),$ define the random, \emph{sample} covariance matrix ${\color{orange}\mG^{\rm C}_{N_a}} \coloneq \frac1{N_a-1}\sum_{j=1}^{N_a}
\left(\vz_{\rm C}(X,A_j)-\overline{\vz}_{{\rm C},N_a}(X)\right)
\left(\vz_{\rm C}(X,A_j)-\overline{\vz}_{{\rm C},N_a}(X)\right)^\top.$
\begin{theorem}[\VDR Bound for $\gF_{\rm lin}$]
\label{thm:vdr-linear}
    Let $N_a\geq2$, $\delta\in(0,1)$, and suppose that
    \Cref{asm:weak_realizability,asm:linear-vdr-design} hold, and $N_x
        \geq
        \max
        \left\{
            18\bignormop{{\color{orange}\Cov(\mG^{\rm C}_{N_a})}},
            2
        \right\}
        \log\frac{2{\color{blue}d_{\rm C}}}{\delta}$.
    Then, w.p. at least $1-\delta$,
    \begin{equation}
        \E_{x,a}
        \left[
            \bigl(
                \gC\hat f_N^{\VDR}-\gC r^\star
            \bigr)^2
        \right]
        \leq
        \widetilde{\gO}\left(
            \frac{\sigma^2{\color{blue}d_{\rm C}}}{N}
            +\frac{{\color{red}\Rmax^2}{\color{blue}\alpha_{\rm C}^2}{\color{blue}d_{\rm C}}}{N^2}
        \right).
    \label{eq:vdr-linear-rate}
    \end{equation}
\end{theorem}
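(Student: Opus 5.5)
We argue directly through the centered, whitened coordinates; the localized recipe of \Cref{sec:localised} is used only to organize the argument. Because the \VDR loss depends only on within-context differences, $b^\star$ cancels. We have $\widehat\Delta_{i,j,k}(r)=\Delta_{i,j,k}(f^\star)+\eta_{i,j}-\eta_{i,k}$, so \VDR on data generated by $r^\star$ coincides with \VDR on data generated by the well-specified $f^\star(x,a)=\langle\vphi(x,a),\vtheta^\star\rangle$. Since $\Delta_{i,j,k}(f)$ depends on $\vtheta$ only through $\overline\vphi$-differences, we reparametrize by $\vw\coloneq\overline{\mSigma}_{\rm C}^{1/2}\mU_{\rm C}^\top\vtheta\in\sR^{d_{\rm C}}$. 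Two facts justify this. First, $\vphi(x,a)-\vphi(x,a')=\overline\vphi(x,a)-\overline\vphi(x,a')$. Second, $\overline\vphi$ lies in $\operatorname{range}(\mSigma_{\rm C})$ almost surely, because it has zero second moment in the orthogonal directions. Consequently $\Delta(f_\vtheta)=\langle\vz_{\rm C}(x,a)-\vz_{\rm C}(x,a'),\vw\rangle$ and $\gC f_\vtheta=\langle\vz_{\rm C},\vw\rangle$ a.s. The target becomes
\[
\E\bigl[(\gC\hat f-\gC r^\star)^2\bigr]=\E\bigl[\langle\vz_{\rm C},\hat\vw-\vw^\star\rangle^2\bigr]=\bignorm{\hat\vw-\vw^\star}_2^2,
\]
using $\gC r^\star=\gC f^\star$ (since $\gC b^\star=0$) and isotropy $\E[\vz_{\rm C}\vz_{\rm C}^\top]=\mI_{d_{\rm C}}$.

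By the pairwise identity $\binom{N_a}{2}^{-1}\sum_{j<k}(u_j-u_k)^2=\frac{2}{N_a-1}\sum_j(u_j-\bar u)^2$, the empirical objective is a quadratic in $\vw$ with Hessian $2\widehat{\mG}\coloneq\frac{2}{N_x}\sum_i\mG^{\rm C}_{N_a,i}$, where $\mG^{\rm C}_{N_a,i}$ is the block sample covariance defined before \Cref{thm:vdr-linear}. It satisfies $\E[\mG^{\rm C}_{N_a}]=\mI_{d_{\rm C}}$, since the unbiased sample covariance of the conditionally mean-zero $\vz_{\rm C}$ has mean $\E_X\E_A[\vz_{\rm C}\vz_{\rm C}^\top\mid X]$. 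The ERM condition $(i)$ of \Cref{prop:localised} with $\lambda=0$ reduces to the normal equations $\widehat{\mG}(\hat\vw-\vw^\star)=\vs$, where
\[
\vs\coloneq\frac1{N_x}\sum_i\frac{1}{N_a-1}\sum_j\bigl(\vz_{\rm C}(x_i,a_{i,j})-\overline{\vz}_{{\rm C},i}\bigr)\eta_{i,j}
\]
is the noise score. This score involves no misspecification residual: the residual vanishes identically because $\Delta(f^\star)$ fits the differences exactly. That is the source of the absence of every $\varepsilon$ term.

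The first step is lower isometry. The $\mG^{\rm C}_{N_a,i}$ are i.i.d. across blocks, PSD, have mean $\mI$, and have matrix variance $\Cov(\mG^{\rm C}_{N_a})$. A one-sided matrix Bernstein (or matrix Chernoff-type) bound over the $N_x$ blocks therefore gives $\widehat{\mG}\succeq\frac12\mI$ with probability $1-\delta/2$ under the stated condition $N_x\ge\max\{18\|\Cov(\mG^{\rm C}_{N_a})\|_{\rm op},2\}\log\frac{2d_{\rm C}}{\delta}$. Lower-tail PSD arguments need only the variance, together with the fact that blocks are PSD and bounded below by $0$. On this event, $\|\hat\vw-\vw^\star\|_2\le2\|\vs\|_2$.

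The second step bounds the score. Conditional on $\gZ$, $\vs$ is a sum of $N$ independent mean-zero vectors $\frac{1}{N_x(N_a-1)}\bigl(\vz_{\rm C}(x_i,a_{i,j})-\overline{\vz}_{{\rm C},i}\bigr)\eta_{i,j}$. Each has norm at most $\frac{2\Rmax}{N_x(N_a-1)}\cdot\alpha_{\rm C}\sqrt{2d_{\rm C}}$, because $\vz_{\rm C}(x,a_j)-\overline{\vz}_{{\rm C},i}$ is an average of pairwise differences. The total variance is
\[
\frac{\sigma^2}{N_x^2(N_a-1)^2}\sum_{i,j}\bigl\|\vz_{\rm C}(x_i,a_{i,j})-\overline{\vz}_{{\rm C},i}\bigr\|^2=\frac{\sigma^2}{N_x(N_a-1)}\tr\widehat{\mG}.
\]
We control $\tr\widehat{\mG}\lesssim d_{\rm C}$ either by taking the unconditional expectation, which equals $d_{\rm C}$ exactly, or with a scalar Bernstein bound using the bounded leverage. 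Vector Bernstein \citep{hsu2012bernstein} then yields, with probability $1-\delta/2$,
\[
\|\vs\|_2^2\lesssim\frac{\sigma^2d_{\rm C}\log\frac1\delta}{N}+\frac{\Rmax^2\alpha_{\rm C}^2d_{\rm C}\log^2\frac1\delta}{N^2},
\]
using $N_x(N_a-1)\ge N/2$. A union bound over the two events gives \eqref{eq:vdr-linear-rate}.

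I expect the main obstacle to be the first step: making the block-level lower isometry work with only the variance condition stated. The blocks $\mG^{\rm C}_{N_a}$ are not uniformly bounded above by a $d_{\rm C}$-free constant, so I would use a lower-tail matrix Bernstein for nonnegative summands. Alternatively, I would use a truncation that keeps the constant $18$ and only $\|\Cov\|_{\rm op}$, mirroring the \VR case in \Cref{thm:vr-linear}. A secondary subtlety is the degenerate case where $\vphi$ has nonzero components outside $\operatorname{range}(\mSigma_{\rm C})$. These are a.s. action-independent, so they are unidentifiable by \VDR but are annihilated by $\gC$. The claim is therefore unaffected, and any minimizer $\hat\vtheta$ is valid.
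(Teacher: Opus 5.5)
Your reduction to the identifiable coordinates $\vw$, the cancellation of $b^\star$, the normal equations $\widehat{\mG}(\hat{\vw}-\vw^\star)=\vs$, and the two-event structure all match the paper. That structure is a lower-isometry event followed by a conditional vector Bernstein bound on the noise score. Working directly with the normal equations rather than the localized modulus with $\lambda=1$ is an equivalent route for least squares. The step you flag as the main obstacle is in fact routine, and it is exactly the paper's Step~1. Since $\mG^{\rm C}_i\succeq\vzero$, you get $\lambda_{\max}(\mI-\mG^{\rm C}_i)\leq1$. The one-sided matrix Bernstein inequality under the stated sample-size condition then bounds the deviation by $\frac13+\frac13$. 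This gives $\widehat{\mG}\succeq\frac13\mI$ rather than $\frac12\mI$, which is harmless.

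The genuine gap is in the score step. You bound $\bignorm{\vs}_2$, whose conditional variance proxy $\frac{\sigma^2}{N_x(N_a-1)}\tr\widehat{\mG}$ is random, but vector Bernstein needs an almost-sure variance bound. Neither of your two remedies recovers the stated rate.

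\emph{Replacing $\tr\widehat{\mG}$ by its mean $d_{\rm C}$.} This forces an unconditional application of vector Bernstein. There the independent summands are only the $N_x$ block sums, whose natural almost-sure norm bound is of order $\Rmax\alpha_{\rm C}\sqrt{d_{\rm C}}/N_x$. The lower-order term then becomes $\Rmax^2\alpha_{\rm C}^2d_{\rm C}/N_x^2$, a factor $N_a^2$ worse than claimed.

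\emph{A scalar Bernstein bound on the trace.} Write $\tr\widehat{\mG}=\frac1{N_x}\sum_i\tr\mG^{\rm C}_i$ with $\tr\mG^{\rm C}_i\in[0,\alpha_{\rm C}^2d_{\rm C}]$. Then $\tr\widehat{\mG}\leq d_{\rm C}+O\bigl(\alpha_{\rm C}d_{\rm C}\sqrt{t/N_x}+\alpha_{\rm C}^2d_{\rm C}t/N_x\bigr)$ with $t=\log(2/\delta)$. This adds a term of order $\sigma^2\alpha_{\rm C}^2d_{\rm C}t/(NN_x)$. The hypothesis only controls $\bignormop{\Cov(\mG^{\rm C}_{N_a})}$, which can be of order $\alpha_{\rm C}^2/N_a$ (for instance in the Bernoulli example of \Cref{subsec:linear_comparison}). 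So $\alpha_{\rm C}^2$ may be of order $N/t$, and this extra term can exceed the claimed bound by a factor of order $N_a$.

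The fix is to self-normalize, as the paper does by bounding $\bignorm{\vs}_{\mH^{-1}}$. Write $\bignorm{\hat{\vw}-\vw^\star}_2\leq\bignormop{\widehat{\mG}^{-1/2}}\,\bignorm{\widehat{\mG}^{-1/2}\vs}_2$ and apply conditional vector Bernstein to $\widehat{\mG}^{-1/2}\vs$. Its conditional variance proxy is deterministic: $\frac{\sigma^2}{N_x(N_a-1)}\tr(\widehat{\mG}^{-1}\widehat{\mG})=\frac{\sigma^2d_{\rm C}}{N_x(N_a-1)}$. The almost-sure bound on each summand picks up only the factor $\bignormop{\widehat{\mG}^{-1/2}}\leq\sqrt3$ from the isometry event. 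Together these yield exactly the stated rate.
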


\paragraph{Relation to orthogonalized regression.}
\VDR shares the nuisance-cancellation principle of orthogonalized estimation in semiparametric contextual bandits and regression~\citep{krishnamurthy2018semiparametric,kim2019contextual,choi2023semiparametric,kim2025semiparametric,robinson1988root,chernozhukov2018double,foster2023orthogonal}.
These works typically observe a single action--reward outcome per observed context and construct an orthogonal score via centering or residualization.
Our setting instead observes $N_a\geq2$ action--reward samples per observed context and forms within-context pairs; our analysis separates fluctuations across contexts from those that average over the repeated actions.

\paragraph{Interpreting the centered-feature geometry.}
${\color{blue}d_{\rm C}}$ is the \emph{dimension of the feature subspace generated by within-context action variation}; intercepts and other action-invariant directions lie in $\ker(\mSigma_{\rm C})$.
For every $\vv\in\ker(\mSigma_{\rm C})$, $\E[(\vv^\top\gC\vphi)^2]=0$, so $\vv^\top\gC\vphi=0$ almost surely.
Thus, \VDR identifies $\vtheta^\star$ only modulo this kernel, which suffices to identify $\gC f^\star$ and all action gaps on the support of $\piref$.
In general, ${\color{blue}d_{\rm C}}=d-\dim\ker(\mSigma_{\rm C})$: it equals $d$ without action-invariant directions, $d-1$ with only an intercept, and less with additional such directions.
We assume ${\color{blue}d_{\rm C}}\geq1$ (otherwise all modeled action gaps vanish on the behavior-policy support).
Finally, ${\color{blue}\alpha_{\rm C}}$ controls the maximal leverage of \emph{differences} of whitened centered features on the identifiable subspace.
When both design conditions hold, ${\color{blue}\alpha_{\rm C}}$ can be smaller or larger than ${\color{blue}\alpha_0}$; neither condition dominates, as we show next.

\subsection{Comparing \VR and \VDR: Feature Geometry Perspective}
\label{subsec:linear_comparison}

\paragraph{Comparing the bounds.}
Under \Cref{asm:weak_realizability}, the action-independent nuisance $b^\star(x)$ cancels from every within-context reward difference.  Consequently, the \VDR bound (\Cref{thm:vdr-linear}) contains no misspecification-related terms, whereas the \VR bound (\Cref{thm:vr-linear}) retains ${\color{red}\varepsilon_{\rm aprx}}$ and the residual terms ${\color{red}\varepsilon_{\mathrm{x}}}$, ${\color{red}\varepsilon_{\mathrm{a}}}$, and ${\color{red}\varepsilon_{\mathrm{z}}}$, i.e., \VDR's bound outperforms that of \VR under large misspecification.

In the well-specified case (${\color{red}\varepsilon_{\rm aprx}=\varepsilon_{\mathrm{x}}=\varepsilon_{\mathrm{a}}=\varepsilon_{\mathrm{z}}=0}$), the comparison is instead governed by the feature geometry.  The displayed bounds reduce to $\widetilde{\gO}(\sigma^2d/N+{\color{red}\Rmax^2}{\color{blue}\alpha_0^2}d/N^2)$ for \VR and $\widetilde{\gO}(\sigma^2{\color{blue}d_{\rm C}}/N+{\color{red}\Rmax^2}{\color{blue}\alpha_{\rm C}^2}{\color{blue}d_{\rm C}}/N^2)$ for \VDR.  Since ${\color{blue}d_{\rm C}}\leq d$, the leading variance term in the displayed \VDR bound is no larger; with the sharper \VR bound of \Cref{app:linear-vr-alternate-error-bound}, however, it becomes $\sigma^2{\color{blue}d_{\rm eff}}/N$ with ${\color{blue}d_{\rm eff}}\leq{\color{blue}d_{\rm C}}$.
The lower-order terms have no analogous uniform ordering across feature geometries, and hence, neither upper-bound uniformly dominates the other.

To concretely show this, we provide two simple examples where ${\color{blue}\alpha_{\rm C}}$ can be arbitrarily larger than ${\color{blue}\alpha_0}$, and vice versa.
For $p\in(0,1)$, let $X=1$, $A\sim\Ber(p)$, and set $\vphi(A)=1+A\in\sR$. 
Then, $\mSigma=1+3p$, and $\mSigma_{\rm C}=p(1-p)$, which gives ${\color{blue}\alpha_0}=2/\sqrt{1+3p}=\Theta(1)$ and ${\color{blue}\alpha_{\rm C}}=1/{\sqrt{2p(1-p)}}=\Theta(p^{-1/2}).$
Thus ${\color{blue}\alpha_{\rm C}}/{\color{blue}\alpha_0}\to\infty$ as $p\to0^+$.
Second, let $X\sim\Ber(p)$, $A \sim \mathrm{Unif}(\{-1,+1\})$, and set $\vphi(X,A)=(X,A)\in\sR^2$.
Here $\mSigma=\operatorname{diag}(p,1)$, so ${\color{blue}\alpha_0}=\sqrt{(1+p)/(2p)}=\Theta(p^{-1/2})$.
Centering gives $\overline\vphi=(0,A)$, so $\vz_{\rm C}=A$ and ${\color{blue}\alpha_{\rm C}}=\sqrt{2}$.
Therefore ${\color{blue}\alpha_0}/{\color{blue}\alpha_{\rm C}}\to\infty$ as $p\to0^+$.

\paragraph{Synthetic experiment.}
We consider linear contextual bandits with five-dimensional contexts and three-dimensional cube actions, indexed by a sampling probability $\delta\in[0,1/2]$ (distinct from the confidence level above) and an action-independent nuisance magnitude $\beta\in[0,2]$.
The true value depends only on the first action coordinate. At each context, the behavior policy favors one sign of this coordinate and samples the opposite with probability $\delta$.
When $\beta=0$ and $\delta$ is small, \VR can exploit absolute rewards while \VDR has few comparisons between actions with different true values. Increasing $\beta$ misspecifies \VR through an action-independent shift that cancels from \VDR.

We fit both methods with $(N_x,N_a)=(200,8)$ and evaluate greedy-policy regret over $8$ independent instances; details and ablations are deferred to \Cref{app:controlled-synthetic-experiments}.
The left panel of \Cref{fig:experiments} shows that when $\beta=0$, \VR has lower regret in every instance for every tested $\delta>0$.
At fixed $\delta=0.02$, $\Reg_{\VR}-\Reg_{\VDR}$ changes from negative to positive as $\beta$ grows.
These results illustrate the bias--variance tradeoff: \VR benefits from absolute rewards under correct specification when informative within-context comparisons are rare, while \VDR becomes preferable as nuisance-induced bias grows.

\begin{figure}[t]
  \centering
  \begin{minipage}[c]{0.32\linewidth}
    \centering
    \includegraphics[width=\linewidth]
    {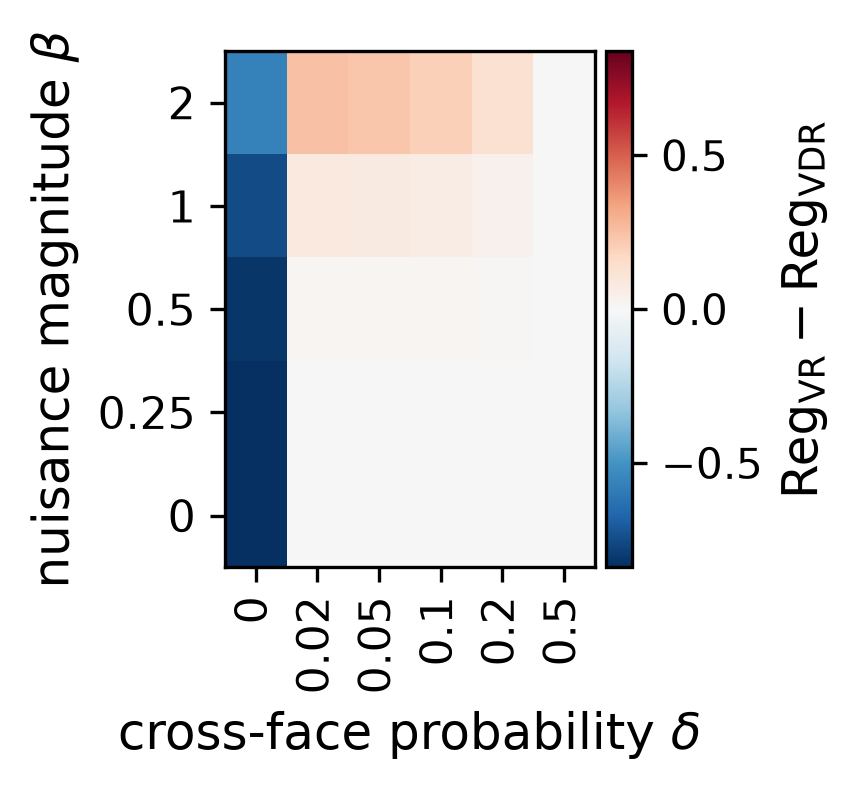}
  \end{minipage}
  \hfill
  \begin{minipage}[c]{0.64\linewidth}
    \centering
    \includegraphics[
      width=\linewidth,
      trim=0 0 0 0,
      clip
    ]{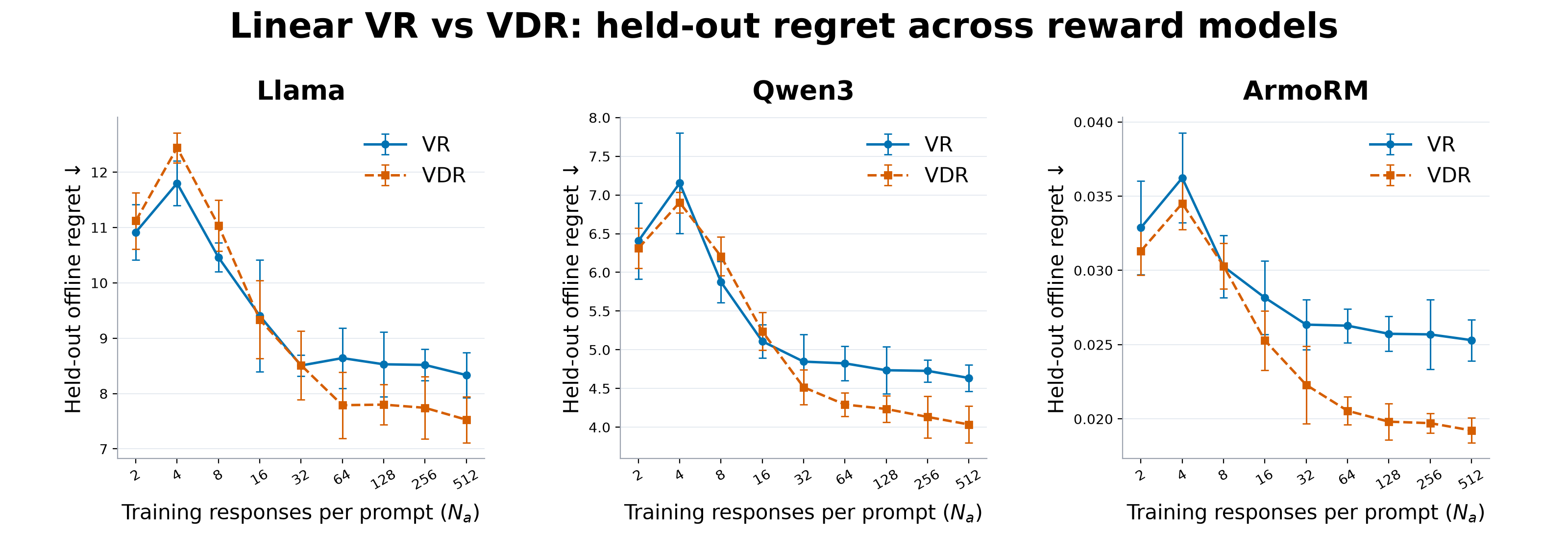}
  \end{minipage}
  \caption{
  \textbf{Left:} synthetic regret difference $\Reg_{\VR}-\Reg_{\VDR}$ as the cross-face probability $\delta$ and nuisance magnitude $\beta$ vary; negative values favor \VR and positive values favor \VDR.
  \textbf{Right:} real-data regret versus the number of labeled responses per training prompt; lower is better. Error bars are pointwise 95\% Student-$t$ intervals over five seeds.
  }
  \label{fig:experiments}
\end{figure}

\paragraph{LLM experiment.}
We compare $\ell_2$-regularized linear \VR and \VDR scorers on frozen LLM representations in a response-selection task.  We collect 1,000 prompts from WildChat, UltraFeedback, GSM8K, MATH, MBPP, HelpSteer2, and TL;DR, and cache 1,000 Llama-3.2-3B-Instruct responses per prompt.  Each response is represented by its 3,072-dimensional mean-pooled final-layer hidden state.  Keeping the prompts, responses, and representations fixed, we repeat the experiment with rewards from Skywork-Reward-V2-Llama-3.1-8B, Skywork-Reward-V2-Qwen3-8B, and ArmoRM-Llama3-8B-v0.1.
Across five seeds, both methods use the same 800 training prompts and sampled responses, are evaluated on 200 held-out prompts, and vary the number of labeled responses per training prompt from $N_a=2$ to $512$; full details are deferred to \Cref{app:controlled-real-experiments}.

The right panel of \Cref{fig:experiments} shows that for $N_a \geq 64$, \VDR has significantly lower mean regret for all three reward models (see paired intervals in \Cref{fig:empirical-paired-three-rewards}, \Cref{app:controlled-real-experiments}).
At $N_a=512$, the reductions are $9.7\%$, $13.0\%$, and $24.0\%$ for Llama, Qwen3, and ArmoRM, respectively.
Thus the preferred objective changes with the amount of within-prompt supervision.
Although outside our theoretical scope, we also report a controlled \MCTS math reasoning experiment in \Cref{app:math-reasoning-experiment}, where a binary-preference variant of \VDR achieves higher observed final-answer accuracy than \VR with both linear and MLP heads.

%% file: 005Conclusion.tex
\section{Conclusion and Future Work}
\label{sec:conclusion}
We establish finite-sample prediction guarantees for \ValueRegression{} (\VR) and \ValueDifferenceRegression{} (\VDR) with grouped offline data (which imply offline-regret bounds under coverage). Our analysis distinguishes two sampling scales: $N_x$ governs generalization across contexts, while $N_a$ controls the averaging of within-context variability. For finite classes, \VDR avoids the approximation-error term in the \VR bound and improves a reward-scale term through repeated action observations.
For linear classes, the comparison depends on feature geometry: differencing removes nuisance-induced bias, while fitting absolute rewards can reduce estimation variance under correct specification.
Our experiments illustrate the resulting bias--variance tradeoff, with neither method uniformly preferred.

We conclude with some potential future directions.
One is to assess whether the weak realizability assumption holds in LLM applications and other domains, and to identify weaker assumptions under which meaningful guarantees remain possible.
Other directions include determining whether analogous tradeoffs arise under the Bradley--Terry model with logistic losses, understanding how pair selection affects \VDR, and extending the analysis beyond pairwise comparisons (e.g., ranking).

%% file: 900RelatedWorks.tex
\section{\texorpdfstring{Related Work}{Related Work}}
\label{app:related-work}

We review related work on reward modeling objectives, regression under misspecification, statistical guarantees for pointwise/pairwise regression, and decision-making in contextual bandits.

\paragraph{(1) Reward modeling objectives.} 
Reward estimators can be trained to predict absolute numerical values, numerical differences between alternatives~\citep{wetzel2022twin,xie2022automated}, or preferences.
These objectives impose different requirements: pointwise \ValueRegression{} (\VR) fits individual reward labels, \ValueDifferenceRegression{} (\VDR) fits their numerical gaps, and preference-ranking objectives fit comparison outcomes without directly matching observed gap magnitudes. 
Numerical-gap objectives have been studied in magnitude-preserving ranking and conditional ranking~\citep{cortes2007magnitude,pahikkala2013efficient}.
Recent works have also demonstrated that combining pointwise regression with pairwise losses can improve predictive performance~\citep{zhu2024gradient} and value-guided reasoning~\citep{chen2024step}.
Preference-based reward modeling instead commonly fits comparison probabilities using the Bradley--Terry model~\citep{bradleyterry1952,christiano2017deep,zhu2023rlhf}, with related comparison objectives used in LLM reward models~\citep{zhang2026bradley,guan2025rstar}.
These findings show that there is an intricate distinction between absolute prediction, gap prediction, and downstream decision quality.
Our work studies \VR and \VDR constructed from the same numerical observations, separating their statistical behavior under a shared, \emph{grouped} sampling model.

\paragraph{(2) Semiparametric estimation.}
Our weak realizability assumption has the semiparametric form $r^\star(x,a)=f^\star(x,a)+b^\star(x)$, where $f^\star\in\gF$ captures action-dependent variation and $b^\star$ is an unknown action-independent nuisance function; neither $b^\star$ nor the complete reward function must belong to $\gF$.
For linear $\gF$, our assumption has the structure of a classical partially linear model~\citep{speckman1988kernel}, with a finite-dimensional action-dependent component and a nonparametric context-dependent nuisance.
Paralleling the partialling-out construction of \citet{robinson1988root}, our \VDR eliminate the nuisance $b^\star$ and target the within-context differences of $f^\star$.
These differences identify its centered component on the behavior-policy support, while action-invariant parameter directions remain unidentified.
The related principle of removing shared variation also appears in partial regression and panel-data within transformations~\citep{frisch1933partial,mundlak1978pooling}.
Modern approaches use orthogonal scores and nuisance estimation to reduce sensitivity to nuisance-estimation errors~\citep{chernozhukov2018double,foster2023orthogonal}; heterogeneous-treatment-effect estimation similarly isolates treatment contrasts through residualized objectives~\citep{nie2021quasi}.
In sequential decision-making, action-centered and semiparametric contextual bandits allow complex action-independent baselines while modeling action effects~\citep{greenewald2017action,krishnamurthy2018semiparametric,kim2019contextual,choi2023semiparametric}.
\citet{carranza2023flexible} develop contextual bandit algorithms based on heterogeneous-treatment-effect oracles, while \citet{kim2025semiparametric} use experimental design and orthogonalized regression for semiparametric bandits.
We share the motivation of removing decision-irrelevant nuisance variation, but our setting instead compares pointwise regression with regression on within-context reward differences, examining how the regression objective and repeated action observation affect estimation and decision guarantees.

\paragraph{(3) Misspecified linear regression.}
Random-design linear regression under misspecification has been studied under different prediction and distribution-shift criteria.
For out-of-sample $L_2$ prediction, \citet{hsu2012random,hsu2014random} analyze ordinary least squares and ridge regression, quantifying the effects of approximation residuals and empirical-covariance errors.
Our \VR analysis (\Cref{thm:vr-linear}) belongs to this line: when $N_a=1$, its bound has the same first-order structure, while for $N_a\geq2$ it further separates residual fluctuations across contexts from those averaged over all observations.
Under adversarial covariate shift, \citet{amortila2024mitigating} show that standard ERM can amplify $L_\infty$ misspecification through distribution mismatch and develop a disagreement-based regression procedure that avoids this amplification.
\citet{maran2025misspecification} instead study uniform prediction for misspecified random-design linear models, identifying the Lebesgue constant of the population $L_2$ projection as the sharp amplification factor for uniform approximation error in $L_\infty$.
Related work characterizes optimal approximation factors for misspecified linear off-policy value-function estimation~\citep{amortila2023optimal}.
For downstream decisions, \citet{lattimore2020learning} quantify the effect of uniform linear approximation error on bandit and RL guarantees, while \citet{krishnamurthy2021adapting} develop contextual bandit algorithms whose regret adapts to average reward-model misspecification using offline regression oracles.
Our \VR analysis extends the prediction-error perspective to repeated observations within contexts.
The \VDR comparison concerns a narrower form of misspecification: action-independent residuals can be removed before regression by differencing, whereas action-dependent residuals generally remain.
Related coverage-dependent approximation errors and decision lower bounds are discussed in Appendix~\ref{app:agnostic}.

\paragraph{(4) Statistical guarantees with grouped observations.}
Classical pairwise-learning theory addresses the dependence caused by reusing observations across pairs, using U-statistic and stability analyses \citep{clemenccon2008ranking,lei2020sharper}. 
Our observations have an additional grouped structure: $N_a$ action--reward observations share each of $N_x$ contexts, and within-context differencing cancels the common nuisance. 
Treating each context block as an independent observation permits block-level analysis, but exploiting repeated action samples requires tracking within-block variation, which we do throughout our proofs.

Our analysis also connects to localization methods for squared-loss learning.
Local Rademacher complexities characterize estimation rates through critical radii~\citep{bousquet2002local,bartlett2005local,koltchinskii2011}, while offset Rademacher complexity uses a negative quadratic term to control stochastic fluctuations~\citep{liang15learning}.
The separation of quadratic and multiplier processes is also central to small-ball analyses of least squares~\citep{mendelson2014without}.
Our localized block-offset modulus follows these principles but is defined directly through the unsymmetrized excess-loss process.
Evaluating it with the grouped sampling structure preserves the distinction between fluctuations across contexts and within contexts.

\paragraph{(5) Offline contextual bandits and RL.} 
Offline contextual bandits study policy selection and learning from previously collected action--reward observations.
Prior work addresses logging-policy mismatch through propensity-weighted objectives and variance control~\citep{swaminathan2015counterfactual}, and establishes policy guarantees through pessimism under limited coverage~\citep{rashidinejad2022offline}.
In batch RL, \citet{chen2019information} study the roles of distribution coverage and function approximation in sample-efficient learning.
Related analyses of approximate policy and value iteration connect value-function approximation errors to policy performance~\citep{munos2003error,farahmand2010error}.
More recently, \citet{ryu2025offline} develop betting-based confidence bounds for off-policy selection and a freezing approach to off-policy learning, obtaining guarantees that adapt to second-order information.
Related policy-optimization methods also use comparison objectives: DPO applies a preference loss to policy log-ratios~\citep{rafailov2023direct}, whereas REBEL fits differences of policy log-ratios onto numerical reward differences~\citep{gao2024rebel}.
Our work studies a complementary question: how pointwise versus pairwise value regression affects regret under structured misspecification and repeated action observations.

%% file: 901Regret.tex
\section{\texorpdfstring{Estimation Error to Offline Regret}{Estimation Error to Offline Regret}}
\label{app:offline-regret}

\subsection{\texorpdfstring{Coverage Coefficients and Regret Guarantees}{Coverage Coefficients and Regret Guarantees}}

In this appendix, we present two offline RL algorithms along with their offline regret guarantees.
To do so, we first define two notions of \textbf{coverage coefficients}, as commonly done in offline RL literature~\citep{chen2019information,rashidinejad2022offline}.
First, the \emph{policy-specific coverage}: for each deterministic $\pi : \gX \rightarrow \gA$, 
\begin{equation}
    C(\pi) \coloneq \E_{x \sim \rho}\left[ \frac{1}{\piref(\pi(x) \mid x)} \right], \quad
    C^\star \coloneq C(\pi^\star),
\end{equation}
where $\pi^\star$ is a fixed deterministic optimal policy, $\pi^\star(x)\in\argmax_{a\in\gA}r^\star(x,a)$, and we take the convention that $1/0 = \infty$.
Next, the \emph{($\gF$-restricted) all-policy coverage}:
\begin{equation}
    C^\infty_\gF \coloneq \E_{x \sim \rho}\left[ \max_{a \in \gA_\gF(x)} \frac{1}{\piref(a \mid x)} \right], \quad \gA_\gF(x) \coloneq \left\{ a \in \gA : \exists f \in \gF \textit{ s.t. } f(x, a) \geq \max_{b \in \gA} f(x, b) \right\}.
\end{equation}
Note that such a restricted variant of the all-policy coefficient in an $\gF$-dependent manner has been considered before, e.g., \citet[Appendix G]{chen2019information}.
For every policy satisfying $\pi(x)\in\gA_\gF(x)$, one has
$C(\pi)\leq C^\infty_\gF$.  There is no general ordering between
$C(\pi)$ and $C^\star=C(\pi^\star)$ for arbitrary $\pi$; we can nevertheless
construct instances in which $C^\star \ll C^\infty_\gF$ (e.g., Remark~\ref{rem:gmin-coverage}).

We now present offline regret upper bounds for the \Greedy and \Pessimism policies:
\begin{proposition}
\label{prop:regret}
    Let $\hat{f}_N \in \gF$ be our estimator, and let $\gC : g \mapsto \gC g$ be the centering operator $\gC g(x,a) \coloneq g(x,a)-\E_{a'\sim\piref(\cdot\mid x)}[g(x,a')]$, which preserves the action gap.
    Suppose that \Cref{asm:weak_realizability} holds and that {\color{blue} $\E_{x,a}[(\gC \hat{f}_N - \gC r^\star)^2]\leq\gamma_N^2$}.
    Then,
    \begin{itemize}
        \item The
        \Greedy policy satisfies
        $$\Reg(\hat{\pi}^{\Greedy}_N)\leq 2 \sqrt{C^\infty_\gF-1} {\color{blue}\gamma_N}, \quad \hat{\pi}_N^{\Greedy}(x) \coloneq \argmax_{a \in \gA} \hat{f}_N(x, a);$$
        \item The \Pessimism policy satisfies
        \[
        \begin{aligned}
            \Reg(\hat{\pi}^{\Pessimism}_N)
            &\leq 2 \sqrt{C^\star-1} {\color{blue}\gamma_N}, \\
            \hat{\pi}_N^{\Pessimism}
            &\in \argmax_{\pi : \gX \rightarrow \gA}
            \left\{
                \E_{x \sim \rho}\left[\hat{f}_N(x, \pi(x))\right]
                - {\color{blue}\gamma_N} \sqrt{C(\pi)-1}
            \right\}.
        \end{aligned}
        \]
    \end{itemize}
\end{proposition}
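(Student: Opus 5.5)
Write $\Delta(x) \coloneq \gC\hat f_N(x,\cdot)-\gC r^\star(x,\cdot)$. The plan is to prove a per-context regret inequality and then aggregate over contexts with Cauchy--Schwarz against the coverage weights. Since $\gC$ preserves action gaps, regret can be written in terms of $\gC r^\star$: for any deterministic $\pi$,
\[
\Reg(\pi)=\E_{x}\bigl[\gC r^\star(x,\pi^\star(x))-\gC r^\star(x,\pi(x))\bigr].
\]
The key tool will be the following bound. For a fixed $x$ and any action $a$ with $p_a\coloneq\piref(a\mid x)>0$, let $g(\cdot)\coloneq\Delta(x)(\cdot)$, so that $\E_{a'\sim\piref}[g(a')]=0$. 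Then
\[
|g(a)|=\Bigl|\sum_{a'\neq a}p_{a'}g(a')\Bigr|/p_a\cdot\ \text{(rearranged)},
\]
and applying Cauchy--Schwarz gives
\[
p_a^2 g(a)^2\le (1-p_a)\sum_{a'\neq a}p_{a'}g(a')^2.
\]
Combining this with $p_a g(a)^2$ yields $g(a)^2\le \frac{1-p_a}{p_a}\E_{a'}[g(a')^2]$. Consequently
\[
|g(a)|\le\sqrt{\tfrac{1}{p_a}-1}\ \sqrt{\E_{a'}[g(a')^2\mid x]}.
\]
This is exactly where the factor $C-1$ (rather than $C$) comes from.

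\textbf{Greedy.} Let $\hat a=\hat\pi(x)$ and $a^\star=\pi^\star(x)$. Since $\hat a$ maximizes $\hat f_N$, it also maximizes $\gC\hat f_N$, so
\[
\gC r^\star(a^\star)-\gC r^\star(\hat a)\le \bigl[\gC r^\star(a^\star)-\gC\hat f_N(a^\star)\bigr]+\bigl[\gC\hat f_N(\hat a)-\gC r^\star(\hat a)\bigr]\le |g(a^\star)|+|g(\hat a)|.
\]
Each of these terms is at most $\sqrt{\max_{a\in\gA_\gF(x)}1/\piref(a\mid x)-1}\,\sqrt{\E[g^2\mid x]}$. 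Here I need $a^\star\in\gA_\gF(x)$, which follows because $f^\star\in\gF$ and $r^\star=f^\star+b^\star$ share the same argmax; likewise $\hat a\in\gA_\gF(x)$ since $\hat f_N\in\gF$. Taking expectations over $x$ and applying Cauchy--Schwarz gives $\Reg\le 2\sqrt{C^\infty_\gF-1}\,\gamma_N$.

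A subtlety needs care: the quantity $1/\piref$ must be finite. If $C^\infty_\gF=\infty$ the bound is trivial. The other point to check is that $\gC\hat f_N$ only involves $\piref$-support averages.

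\textbf{Pessimism.} The first step is the per-policy bound $|\E_x[\hat f_N(x,\pi(x))-\E_{a}\hat f_N]-V_{\gC}(\pi)|\le\gamma_N\sqrt{C(\pi)-1}$, obtained by aggregating the same pointwise inequality. Here $V_\gC(\pi)\coloneq\E_x[\gC r^\star(x,\pi(x))]$, which differs from $V(\pi)$ by a $\pi$-independent constant. Because the baseline term $\E_x\E_a\hat f_N$ is also $\pi$-independent, the pessimistic objective can equivalently use $\gC\hat f_N$. From the bound, $\hat J(\pi)\coloneq\E\gC\hat f_N(x,\pi(x))-\gamma_N\sqrt{C(\pi)-1}$ is a lower bound on $V_\gC(\pi)$. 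Then the standard chain gives
\[
V_\gC(\hat\pi)\ge\hat J(\hat\pi)\ge\hat J(\pi^\star)\ge V_\gC(\pi^\star)-2\gamma_N\sqrt{C^\star-1},
\]
which is the claimed bound.

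The main obstacle is the pointwise leave-one-out Cauchy--Schwarz bound with the sharp $\frac1{p_a}-1$ factor. A second point to handle is the handling of $\E_x$ of a product via Cauchy--Schwarz, $\E_x[\sqrt{u(x)}\sqrt{v(x)}]\le\sqrt{\E u\,\E v}$, so that the expression $\E_x[1/\piref(\pi(x)\mid x)]-1$ assembles correctly into $C(\pi)-1$.
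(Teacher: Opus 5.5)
Your proof is correct and follows the paper's argument. You use the same per-context decomposition for \Greedy with $\hat\pi(x),\pi^\star(x)\in\gA_\gF(x)$, the same pointwise bound $|\Delta(x,a)|\le\sqrt{1/\piref(a\mid x)-1}\,\sqrt{\E_{a'\sim\piref(\cdot\mid x)}[\Delta(x,a')^2]}$ followed by Cauchy--Schwarz over $x$, and the same pessimism chain after replacing $\hat f_N$ by $\gC\hat f_N$. The only differences are cosmetic: you derive the pointwise bound by a leave-one-out Cauchy--Schwarz rather than from the identity $\Delta(x,a)=\E_{a'}[(\indicator\{a'=a\}/\piref(a\mid x)-1)\Delta(x,a')]$, and you aggregate it over $x$ rather than applying one joint Cauchy--Schwarz over $(x,a)$; both routes give exactly $\gamma_N\sqrt{C(\pi)-1}$.
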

\begin{proof}[Proof Sketch]
    We utilize the conditional mean-zero property of the centering operator and the Cauchy--Schwarz inequality.
    Although this is standard in offline RL~\citep{rashidinejad2022offline}, we provide the full proof in \Cref{app:regret}.
\end{proof}

\paragraph{Implementation details.}
Despite the benefit of improving the coverage from $C^\infty_\gF$ to $C^\star$, \Pessimism introduces some implementation difficulties.
\Pessimism requires computing $\E_{x \sim \rho}$ and knowing $\gamma_N$ and $C(\pi)$ for each $\pi$.
The former can be approximated to an arbitrary precision via sampling from $\rho$.
The latter requires knowledge of $\piref$ and ${\color{blue}\gamma_N}$, which may be problematic.

For instance, consider the error bound of \VR for finite class (\Cref{thm:finite}), where ${\color{blue}\gamma_N^2} = {\color{red}2\varepsilon_{\rm aprx}}
        + 
        \left(\frac{22F^2 + 4 F {\color{red}\Rmax}}{N_x}+\frac{18\sigma^2}{N}\right)
        \log\!\frac{|\gF|}{\delta}.$
In this case, the learner needs a useful \emph{a priori} upper bound on ${\color{red}\varepsilon_{\rm aprx}}$ in addition to knowing $\piref$.
A worst-case bound of ${\color{red}\varepsilon_{\rm aprx}} \leq 2F^2 + 2{\color{red}\Rmax^2}$ may be utilized, but it can be too crude especially under small misspecification.

\subsection{\texorpdfstring{Proof of \Cref{prop:regret}: Offline Regret of \Greedy and \Pessimism}{Proof of Proposition B.1: Offline Regret of Greedy and Pessimism}}
\label{app:regret}

Recall that $r^\star(x,a) = f^\star(x,a)+b^\star(x)$ by \Cref{asm:weak_realizability}, so $f^\star(x,a) - f^\star(x,a') = r^\star(x,a) - r^\star(x, a')$ for any context $x \in \gX$.
Thus, $r^\star$ and $f^\star$ share the optimal policy $\pi^\star(x) \in \argmax_{a \in \gA} r^\star(x,a) = \argmax_{a \in \gA} f^\star(x,a)$, and $\pi^\star(x) \in \gA_\gF(x)$.
For notational simplicity, we denote $\Delta(x, a) \coloneq \gC \hat{f}_N(x, a) - \gC r^\star(x, a)$.
By the definition of $\gC$, for every $x\in\gX$, $\E_{a\sim\piref(\cdot\mid x)}[\Delta(x,a)\mid x]=0$ and
\begin{equation}
    \E_{x,a}[\Delta(x,a)^2]
    =
    \E_x\left[
        \Var_{a\sim\piref(\cdot\mid x)}
        \bigl(\hat f_N(x,a)-r^\star(x,a)\mid x\bigr)
    \right]
    \leq\gamma_N^2.
    \label{eq:regret-centering}
\end{equation}
If the relevant coverage coefficient is infinite, the corresponding bound is vacuous; hence, in what follows, we only use the inverse-propensity identities in the finite-coverage case.

\paragraph{Part 1: offline regret of the \Greedy policy.}

Since $\gC$ preserves the action gap, for any $f \in \gF$, $f$ and $\gC f$ have the same \Greedy policy $\hat{\pi}$ since $f(x,a) - f(x,a') = \gC f(x,a)-\gC f(x,a')$, and $\hat{\pi}(x) \in \gA_\gF(x)$.
Let $\hat{\pi}(x) \coloneq \hat{\pi}_N^{\Greedy}(x) = \argmax_{a \in \gA} \hat{f}_N(x, a)$, then $\hat{\pi}(x) \in \gA_\gF(x)$.
For any context $x \in \gX$, the suboptimality gap can be bounded by the maximum estimation error across all actions in $\gA_\gF(x)$ as:
\begin{align}
    &r^\star(x, \pi^\star(x)) - r^\star(x, \hat{\pi}(x)) \\
    & = \gC r^\star(x, \pi^\star(x)) - \gC r^\star(x, \hat{\pi}(x))\\
    &= \left( \gC r^\star(x, \pi^\star(x)) - \gC \hat{f}_N(x, \pi^\star(x)) \right) + \left( \gC \hat{f}_N(x, \pi^\star(x)) - \gC \hat{f}_N(x, \hat{\pi}(x)) \right) \\
    &\quad + \left( \gC \hat{f}_N(x, \hat{\pi}(x)) - \gC r^\star(x, \hat{\pi}(x)) \right) \\
    &\leq \left( \gC r^\star(x, \pi^\star(x)) - \gC \hat{f}_N(x, \pi^\star(x)) \right) + \left( \gC \hat{f}_N(x, \hat{\pi}(x)) - \gC r^\star(x, \hat{\pi}(x)) \right) \\
    &\leq 2 \max_{a \in \gA_\gF(x)} |\Delta(x, a)|,
\end{align}
where the first inequality holds since $\hat{\pi}$ is the \Greedy policy of $\gC \hat{f}_N$ as well as of $\hat{f}_N$, and the last inequality holds since $\pi^\star(x), \hat{\pi}(x) \in \gA_\gF(x)$.
By Eqn.~\eqref{eq:regret-centering}, for every $a\in\gA_\gF(x)$,
\begin{align}
    |\Delta(x,a)|
    &=
    \left|
        \E_{a'\sim\piref(\cdot\mid x)}
        \left[
            \left(
                \frac{\indicator\{a'=a\}}{\piref(a\mid x)}-1
            \right)
            \Delta(x,a')
            \,\middle|\,x
        \right]
    \right|
    \notag\\
    &\leq
    \sqrt{
        \frac{1}{\piref(a\mid x)}-1
    }
    \sqrt{
        \E_{a'\sim\piref(\cdot\mid x)}
        [\Delta(x,a')^2\mid x]
    }.
\end{align}
Taking the expectation over $x \sim \rho$ and applying the Cauchy--Schwarz inequality ($\E[Y Z] \leq \sqrt{\E[Y^2]\E[Z^2]}$), we obtain:
\begin{align}
    \Reg(\hat{\pi}_N^{\Greedy}) & = \E_{x \sim \rho} \left[ r^\star\big( x, \pi^\star(x) \big) - r^\star \big( x, \hat{\pi} (x) \big)\right]\\
    &\leq 2 \E_{x \sim \rho}\left[ \sqrt{ \max_{a \in \gA_\gF(x)} \left\{\frac{1}{\piref(a \mid x)}-1\right\} } \sqrt{ \E_{a \sim \piref}[\Delta(x, a)^2 \mid x] } \right] \\
    &\leq 2 \sqrt{ \E_{x \sim \rho}\left[ \max_{a \in \gA_\gF(x)} \left\{\frac{1}{\piref(a \mid x)}-1\right\} \right] } \sqrt{ \E_{x \sim \rho, a \sim \piref}[\Delta(x, a)^2] } \\
    &\leq 2 \sqrt{C^\infty_\gF-1} \gamma_N.
\end{align}

\vspace{1em}
\paragraph{Part 2: offline regret of the \Pessimism policy.}

First, for any deterministic policy $\pi : \gX \rightarrow \gA$, we bound the policy evaluation error. Let $p_\pi(x)\coloneq\piref(\pi(x)\mid x)$. By Eqn.~\eqref{eq:regret-centering} and the Cauchy--Schwarz inequality, we have
\begin{align}
    \left| \E_{x \sim \rho}\left[ \Delta(x, \pi(x)) \right] \right|
    &=
    \left|
        \E_{x,a}
        \left[
            \left(
                \frac{\indicator\{a=\pi(x)\}}{p_\pi(x)}-1
            \right)
            \Delta(x,a)
        \right]
    \right|
    \notag\\
    &\leq
    \sqrt{
        \E_{x,a}
        \left[
            \left(
                \frac{\indicator\{a=\pi(x)\}}{p_\pi(x)}-1
            \right)^2
        \right]
    }
    \sqrt{\E_{x,a}[\Delta(x,a)^2]}
    \notag\\
    &\leq
    \gamma_N\sqrt{C(\pi)-1}.
\end{align}
Thus, for any policy $\pi$, the evaluation error is bounded by:
\begin{equation}
    \label{eq:eval_bound}
    \left| \E_{x \sim \rho} \left[ \gC \hat{f}_N(x, \pi(x)) \right] - \E_{x\sim\rho} \left[ \gC r^\star \big(x, \pi(x) \big) \right] \right| = \left| \E_{x \sim \rho} \left[ \Delta \big(x , \pi(x) \big) \right] \right| \leq \gamma_N \sqrt{C(\pi)-1}.
\end{equation}
Recall that the pessimistic objective is $\E_{x \sim \rho}[\hat{f}_N(x, \pi(x))] - \gamma_N \sqrt{C(\pi)-1}$. Since $\gC$ preserves the action-gap for the fixed context $x$, we have
\begin{align}
    \hat\pi_N^{\Pessimism} &\in \argmax_{\pi:\gX\to\gA} \left\{ \E_{x\sim\rho} \left[ \hat f_N(x,\pi(x)) \right] -\gamma_N\sqrt{C(\pi)-1} \right\} \\
    &= \argmax_{\pi:\gX\to\gA} \left\{ \E_{x\sim\rho} \left[ \gC\hat f_N(x,\pi(x)) \right] -\gamma_N\sqrt{C(\pi)-1} \right\}.
\end{align}

Thus, by the optimality of \(\hat\pi_N^{\Pessimism}\) w.r.t. this objective, it holds
\begin{align}\label{eqn:pessimitic-optimal}
    &\E_{x\sim\rho} \left[ \gC\hat f_N(x,\hat\pi_N^{\Pessimism}(x)) \right] - \gamma_N\sqrt{C(\hat\pi_N^{\Pessimism})-1} \ge \E_{x\sim\rho} \left[ \gC\hat f_N(x,\pi^\star(x)) \right] - \gamma_N\sqrt{C(\pi^\star)-1}.
\end{align}
Together with the bound implied by Eqn.~\eqref{eq:eval_bound},
\begin{align}
    &\Reg (\hat{\pi}_N^{\Pessimism}) \\
    & = V(\pi^\star) - V (\hat{\pi}_N^{\Pessimism})\\
    & = \E_{x\sim\rho}\left[ r^\star \big(x, \pi^\star(x) \big) - r^\star \big(x, \hat{\pi}_N^{\Pessimism}(x) \big) \right]\\
    & = \E_{x\sim\rho}\left[ \gC r^\star \big(x, \pi^\star(x) \big) - \gC r^\star \big(x, \hat{\pi}_N^{\Pessimism}(x) \big) \right]\\
    & \leq \E_{x\sim\rho}\left[ \gC r^\star \big(x, \pi^\star(x) \big) \right] - \left( \E_{x \sim \rho} \left[ \gC \hat{f}_N \big( x ,\hat{\pi}_N^{\Pessimism}(x) \big) \right] - \gamma_N \sqrt{C(\hat{\pi}_N^{\Pessimism})-1} \right) \tag{Eqn.~\eqref{eq:eval_bound}}\\
    & \leq \E_{x\sim\rho}\left[ \gC r^\star \big(x, \pi^\star(x) \big) \right] - \left( \E_{x \sim \rho} \left[ \gC \hat{f}_N \big( x ,\pi^\star(x) \big) \right] - \gamma_N \sqrt{C(\pi^\star)-1} \right) \tag{Eqn.~\eqref{eqn:pessimitic-optimal}}\\
    & \leq \left|  \E_{x \sim \rho} \left[ \gC \hat{f}_N \big( x ,\pi^\star(x) \big) \right] - \E_{x\sim\rho}\left[ \gC r^\star \big(x, \pi^\star(x) \big) \right] \right| + \gamma_N \sqrt{C(\pi^\star)-1}\\
    & \leq \gamma_N \sqrt{C(\pi^\star)-1} + \gamma_N \sqrt{C(\pi^\star)-1} = 2 \sqrt{C^\star-1} \gamma_N. \tag{Eqn.~\eqref{eq:eval_bound}}
\end{align}
which completes the proof for both policies.
\qed

%% file: 902Regression.tex
\section{\texorpdfstring{Auxiliary Results for Grouped Regression}{Auxiliary Results for Grouped Regression}}
\label{app:grouped-regression}

For a comparator $f^\circ\in\gF$ and a perturbation $h\in\gF-f^\circ$, define
the population and empirical excess losses by
\begin{equation}
    \Delta(h;f^\circ)
    \coloneq
    \gL(f^\circ+h)-\gL(f^\circ),
    \qquad
    \widehat\Delta(h;f^\circ)
    \coloneq
    \widehat\gL(f^\circ+h)-\widehat\gL(f^\circ).
\end{equation}

\subsection{\texorpdfstring{\VDR Population Loss and Centered Prediction Error}{VDR Population Loss and Centered Prediction Error}}
\label{app:vdr-centering}

Recall that
$\gC f(x,a)=f(x,a)-\E_{a'\sim\piref(\cdot\mid x)}[f(x,a')]$.
The following identity will be used in both the finite- and linear-class \VDR
proofs.

\begin{lemma}[\VDR population-loss identity]
\label{lem:vdr-centering}
For every $f:\gX\times\gA\to\sR$,
\begin{equation}
    \gL^{\VDR}(f)
    =
    2\E_{x,a}\left[
        \bigl(\gC f(x,a)-\gC r^\star(x,a)\bigr)^2
    \right].
    \label{eq:vdr-population-loss}
\end{equation}
Under \Cref{asm:weak_realizability}, $\gC f^\star=\gC r^\star$ and
$\gL^{\VDR}(f^\star)=0$. In particular,
\begin{equation}
    \E_{x,a}\left[
        \bigl(\gC\hat f_N^{\VDR}(x,a)-\gC r^\star(x,a)\bigr)^2
    \right]
    =
    \frac12\left(
        \gL^{\VDR}(\hat f_N^{\VDR})-\gL^{\VDR}(f^\star)
    \right)
    =
    \frac12\gL^{\VDR}(\hat f_N^{\VDR}).
    \label{eqn:VDR-finite-oracle}
\end{equation}
\end{lemma}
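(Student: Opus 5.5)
The plan is to compute $\gL^{\VDR}(f)$ directly from the definition of $\gT^{\VDR}$ and then use the within-context i.i.d.\ structure. Put $g \coloneq f - r^\star$. Since $\gT^{\VDR}$ is linear,
\[
\bignorm{\gT_i(f-r^\star)}_2^2 = \binom{N_a}{2}^{-1}\sum_{1\le j<k\le N_a}\bigl(g(x_i,a_{i,j})-g(x_i,a_{i,k})\bigr)^2 .
\]
Conditional on $x_i$, the actions $a_{i,j},a_{i,k}$ are i.i.d.\ from $\piref(\cdot\mid x_i)$ for $j\neq k$. Hence every summand has the same expectation, namely $\E_{x}\E_{A,A'}[(g(x,A)-g(x,A'))^2]$, and averaging over the $\binom{N_a}{2}$ pairs returns exactly this quantity.

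The key step is the elementary identity $\E[(Y-Y')^2]=2\Var(Y)$ for i.i.d.\ $Y,Y'$. We apply it conditionally on $x$, with $Y=g(x,A)$. This gives $\E_{A,A'}[(g(x,A)-g(x,A'))^2\mid x]=2\Var_{A}(g(x,A)\mid x)=2\E_A[(\gC g(x,A))^2\mid x]$. The last equality holds because $\gC g(x,\cdot)$ is $g(x,\cdot)$ minus its conditional mean. The operator $\gC$ is linear, so $\gC g=\gC f-\gC r^\star$. Taking $\E_x$ then yields Eqn.~\eqref{eq:vdr-population-loss}. Integrability is not an issue, because the rewards are bounded and we may take $f$ to be such that the expectations are finite; if they are infinite, both sides are $+\infty$.

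For the second claim, \Cref{asm:weak_realizability} gives $r^\star-f^\star=b^\star(x)$, which does not depend on the action. Therefore $\gC(r^\star-f^\star)=b^\star(x)-b^\star(x)=0$, so $\gC f^\star=\gC r^\star$. Plugging this into the identity gives $\gL^{\VDR}(f^\star)=0$. Finally, we apply the identity to $f=\hat f_N^{\VDR}$ (fixed after conditioning on the data, since the population loss is an expectation over fresh samples) and divide by two. This gives Eqn.~\eqref{eqn:VDR-finite-oracle}, where subtracting $\gL^{\VDR}(f^\star)=0$ is free.

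The only mildly delicate point is making sure the population loss $\gL$ is read as the expectation over a fresh block, with $f$ held fixed. Once that is granted, the argument is a direct computation, and no step is a real obstacle.
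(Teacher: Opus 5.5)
Your proposal is correct and follows the paper's proof: you set $g=f-r^\star$, apply $\E[(Y-Y')^2]=2\Var(Y)$ conditionally on $x$ to obtain $2\E[(\gC g)^2\mid x]$, integrate over $x$, and use that $f^\star-r^\star=-b^\star(x)$ is action-independent, so $\gC f^\star=\gC r^\star$. The only difference is that you write out the reduction of the block loss $\E\|\gT_i(f-r^\star)\|_2^2$ (an average over the $\binom{N_a}{2}$ within-context pairs) to a single-pair expectation, a step the paper leaves implicit.
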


\begin{proof}
Let $A,A' \sim \piref(\cdot\mid X)$ independently and set $g=f-r^\star$. Conditional on $X$,
\begin{align}
    \E\left[
        \bigl(g(X,A)-g(X,A')\bigr)^2
        \,\middle|\,X
    \right]
    &=
    2\E\left[
        \bigl(g(X,A)-\E[g(X,A)\mid X]\bigr)^2
        \,\middle|\,X
    \right] \\
    &=
    2\E\left[(\gC g(X,A))^2\mid X\right].
\end{align}
Taking expectation over $X$ proves Eqn.~\eqref{eq:vdr-population-loss}.
Under \Cref{asm:weak_realizability},
$f^\star(x,a)-r^\star(x,a)=-b^\star(x)$ is action-independent, which gives
$\gC f^\star=\gC r^\star$ and the remaining claims.
\end{proof}

\subsection{\texorpdfstring{Proof of \Cref{prop:localised}: Localized Fixed-Point Inequality}{Proof of Proposition 2.1: Localized Fixed-Point Inequality}}
\label{app:proof-prop-localised}

Fix the comparator $f^\circ\in\gF$ and denote
\begin{equation}
    h \coloneq \hat f_N-f^\circ,
    \qquad
    \hat r \coloneq
    \Delta( h;f^\circ)
    =
    \gL(\hat f_N)-\gL(f^\circ).
\end{equation}
By condition $(i)$ and the definition of the empirical excess loss,
\begin{equation}
    \widehat{\Delta}(h;f^\circ)
    =
    \widehat{\gL}(\hat f_N)
    -
    \widehat{\gL}(f^\circ)
    \leq
    -\lambda\widehat Q( h).
\end{equation}
Therefore,
\begin{align}
    \hat r
    &=
    \left[
        \Delta(h;f^\circ)
        -
        \widehat{\Delta}(h;f^\circ)
        -
        \lambda\widehat Q(h)
    \right]
    +
    \left[
        \widehat{\Delta}(h;f^\circ)
        +
        \lambda\widehat Q(h)
    \right]
    \notag\\
    &\leq
    \Delta(h;f^\circ)
    -
    \widehat{\Delta}(h;f^\circ)
    -
    \lambda\widehat Q(h).
    \label{eq:localised-basic-inequality}
\end{align}

Suppose first that $\hat r>0$.
Since $h\in\gF-f^\circ$ and
$\Delta(h;f^\circ)=\hat r$, the function $h$ belongs to
the set over which
$\Psi(\hat r;\lambda,f^\circ)$ takes its supremum.
Hence, by \eqref{eq:localised-basic-inequality},
\begin{equation}
    \hat r
    \leq
    \Psi(\hat r;\lambda,f^\circ)
    \leq
    \psi(\hat r),
\end{equation}
where the second inequality follows from condition $(ii)$.
Therefore,
\begin{equation}
    \hat r
    \in
    \{r\geq0:r\leq\psi(r)\} \Longrightarrow
    \gL(\hat f_N)-\gL(f^\circ)
    =
    \hat r
    \leq
    \sup\{r\geq0:r\leq\psi(r)\}.
\end{equation}

It remains to consider $\hat r\leq0$.
Since $0\in\gF-f^\circ$ and
\begin{equation}
    \Delta(0;f^\circ)
    =
    \widehat{\Delta}(0;f^\circ)
    =
    \widehat Q(0)
    =
    0,
\end{equation}
we have $\Psi(0;\lambda,f^\circ)\geq0.$
Condition $(ii)$ therefore implies $\psi(0)\geq0$, so that
\begin{equation}
    0\in\{r\geq0:r\leq\psi(r)\}.
\end{equation}
Consequently,
\begin{equation}
    \gL(\hat f_N)-\gL(f^\circ)
    =
    \hat r
    \leq
    0
    \leq
    \sup\{r\geq0:r\leq\psi(r)\}.
\end{equation}
This completes the proof.
\qed

%% file: 903Finite.tex
\section{\texorpdfstring{Deferred Proofs from \Cref{sec:finite}: Finite Class}{Deferred Proofs from Section 3: Finite Class}}
\label{app:finite}

\subsection{\texorpdfstring{\ValueRegression{} (\VR)}{Value Regression (VR)}}

\subsubsection{\texorpdfstring{Proof of \Cref{lem:finite-vr-modulus}: Localized Modulus for \VR in Finite Class}{Proof of Lemma 3.1: Localized Modulus for VR in Finite Class}}
\label{app:proof-lem-finite-vr-modulus}

Recall that
\begin{equation}
    f^\circ
    \coloneq
    f^\circ_{\VR}
    \in
    \argmin_{g\in\gF}
    \E_{x,a}\left[(g-r^\star)^2\right],
    \qquad
    \E_{x,a}\left[(f^\circ-r^\star)^2\right]
    =
    \varepsilon_{\rm aprx}.
\end{equation}
Let
\begin{equation}
    t_\delta
    \coloneq
    \log\frac{|\gF|}{\delta},
    \qquad
    v_\delta
    \coloneq
    \left(
        \frac{F^2}{N_x}
        +
        \frac{\sigma^2}{N}
    \right)t_\delta.
\end{equation}
For any $f\in\gF$, let
\begin{equation}
    h\coloneq f-f^\circ,
    \qquad
    \Delta_f
    \coloneq
    \Delta(h;f^\circ)
    =
    \E_{x,a}\left[(f-r^\star)^2\right]
    -
    \varepsilon_{\rm aprx}.
\end{equation}
Since $f^\circ$ is a population minimizer, $\Delta_f\geq0$.

Since $\lambda=0$, the localized block-offset modulus (\Cref{def:localised-modulus}) specialized
to \VR is simply
\begin{equation}
    \Psi(r;0,f^\circ)
    =
    \sup_{\substack{
        f\in\gF\\
        \Delta_f\leq r
    }}
    \left[
        \Delta_f
        -
        \left(
            \widehat{\gL}(f)-\widehat{\gL}(f^\circ)
        \right)
    \right].
    \label{eq:finite-vr-modulus-rewrite}
\end{equation}
It therefore suffices to uniformly control the deviation between the population and empirical excess losses over $f\in\gF$.

More precisely, we will prove that, with probability at least
$1-\delta$, simultaneously for all $f\in\gF$,
\begin{equation}
    \Delta_f
    -
    \left(
        \widehat{\gL}(f)-\widehat{\gL}(f^\circ)
    \right)
    \leq
    4\sqrt{
        v_\delta
        \left(
            \Delta_f+2\varepsilon_{\rm aprx}
        \right)
    }
    +
    \frac{8F(F+\Rmax)t_\delta}{3N_x}.
    \label{eq:finite-vr-uniform-deviation}
\end{equation}
Once this uniform deviation is established, the desired modulus bound
follows by restricting to $\Delta_f\leq r$.

\paragraph{Reduction to blockwise concentration.}

For each $f\in\gF$, define
\begin{equation}
    U_{i,j}(f)
    \coloneq
    \bigl(f(x_i,a_{i,j})-r_{i,j}\bigr)^2
    -
    \bigl(f^\circ(x_i,a_{i,j})-r_{i,j}\bigr)^2,
    \qquad
    U_i(f)
    \coloneq
    \frac1{N_a}\sum_{j=1}^{N_a}U_{i,j}(f).
\end{equation}
Then $\{U_i(f)\}_{i=1}^{N_x}$ are independent across $i\in[N_x]$, and
\begin{equation}
    \E[U_i(f)]
    =
    \Delta_f,
    \qquad
    \frac1{N_x}\sum_{i=1}^{N_x}U_i(f)
    =
    \widehat{\gL}(f)-\widehat{\gL}(f^\circ).
    \label{eq:finite-vr-U-mean}
\end{equation}

\paragraph{Range and two-level block variance.}
We next bound the range and variance of $U_i(f)$, the two inputs needed
for blockwise Bernstein concentration.
Let
\begin{equation}
    q_f(x,a)
    \coloneq
    (f(x,a)-r^\star(x,a))^2
    -
    (f^\circ(x,a)-r^\star(x,a))^2.
\end{equation}
Writing $h=f-f^\circ$ and using
$r_{i,j}=r^\star(x_i,a_{i,j})+\eta_{i,j}$ gives
\begin{equation}
    U_{i,j}(f)
    =
    q_f(x_i,a_{i,j})
    -
    2h(x_i,a_{i,j})\eta_{i,j}.
    \label{eq:finite-vr-U-decomposition}
\end{equation}
Since $\|f\|_\infty,\|f^\circ\|_\infty\leq F$ and
$|r_{i,j}|\leq\Rmax$, one can show that
\begin{equation}
    |U_{i,j}(f)|
    \leq
    4F(F+\Rmax).
    \label{eq:finite-vr-U-range}
\end{equation}

Moreover,
\begin{align}
    q_f(x,a)^2
    &=
    h(x,a)^2
    \left(
        (f(x,a)-r^\star(x,a))
        +
        (f^\circ(x,a)-r^\star(x,a))
    \right)^2
    \notag\\
    &\leq
    8F^2
    \left(
        (f(x,a)-r^\star(x,a))^2
        +
        (f^\circ(x,a)-r^\star(x,a))^2
    \right),
    \label{eq:finite-vr-q-bound}
\end{align}
while
\begin{equation}
    h(x,a)^2
    \leq
    2
    \left(
        (f(x,a)-r^\star(x,a))^2
        +
        (f^\circ(x,a)-r^\star(x,a))^2
    \right).
    \label{eq:finite-vr-h-bound}
\end{equation}
Since $\E[\eta_{i,j}\mid x_i,a_{i,j}]=0$ and
$\E[\eta_{i,j}^2\mid x_i,a_{i,j}]\leq\sigma^2$,
Eqns.~\eqref{eq:finite-vr-U-decomposition}--\eqref{eq:finite-vr-h-bound}
imply
\begin{align}
    \E[U_{i,j}(f)^2]
    &\leq
    8(F^2+\sigma^2)
    \E_{x,a}\left[
        (f-r^\star)^2+(f^\circ-r^\star)^2
    \right]
    \notag\\
    &=
    8(F^2+\sigma^2)
    \left(
        \Delta_f+2\varepsilon_{\rm aprx}
    \right).
    \label{eq:finite-vr-single-second-moment}
\end{align}

To retain the benefit of the $N_a$ repeated action samples,
define
\begin{equation}
    m_f(x)
    \coloneq
    \E_{a\sim\piref(\cdot\mid x)}
    [q_f(x,a)].
\end{equation}
By Jensen's inequality and Eqn.~\eqref{eq:finite-vr-q-bound},
\begin{equation}
    \Var(m_f(X))
    \leq
    8F^2
    \left(
        \Delta_f+2\varepsilon_{\rm aprx}
    \right).
    \label{eq:finite-vr-between-variance}
\end{equation}
Conditional on $x_i$, the variables
$\{U_{i,j}(f)\}_{j=1}^{N_a}$ are independent with common
conditional mean $m_f(x_i)$. Hence, by the law of total variance,
\begin{align}
    \Var(U_i(f))
    &=
    \left(1-\frac1{N_a}\right)
    \Var(m_f(X))
    +
    \frac1{N_a}\Var(U_{i,1}(f))
    \notag\\
    &\leq
    8
    \left(
        F^2+\frac{\sigma^2}{N_a}
    \right)
    \left(
        \Delta_f+2\varepsilon_{\rm aprx}
    \right).
    \label{eq:finite-vr-block-variance}
\end{align}

\paragraph{Uniform concentration and localization.}
Finally, Eqn.~\eqref{eq:finite-vr-U-range} implies
\begin{equation}
    \E[U_i(f)]-U_i(f)
    \leq
    8F(F+\Rmax)
    \qquad\text{almost surely}.
\end{equation}
Applying the one-sided Bernstein inequality to the independent
mean-zero variables
$\{\E[U_i(f)]-U_i(f)\}_{i=1}^{N_x}$ gives, for every fixed
$f\in\gF$, with probability at least $1-e^{-t}$,
\begin{align}
    \Delta_f
    -
    \frac1{N_x}\sum_{i=1}^{N_x}U_i(f)
    &\leq
    4\sqrt{
        \left(
            \frac{F^2}{N_x}
            +
            \frac{\sigma^2}{N_xN_a}
        \right)
        \left(
            \Delta_f+2\varepsilon_{\rm aprx}
        \right)t
    }
    +
    \frac{8F(F+\Rmax)t}{3N_x}.
    \label{eq:finite-vr-bernstein}
\end{align}
Taking $t=t_\delta$ and applying a union bound over $f\in\gF$, we obtain an event of
probability at least $1-\delta$ on which
Eqn.~\eqref{eq:finite-vr-bernstein} holds simultaneously for every
$f\in\gF$.

Since $N=N_xN_a$, Eqns.~\eqref{eq:finite-vr-U-mean} and \eqref{eq:finite-vr-bernstein} give precisely the uniform deviation in
Eqn.~\eqref{eq:finite-vr-uniform-deviation}.
Restricting Eqn.~\eqref{eq:finite-vr-uniform-deviation} to
$\Delta_f\leq r$ and taking the supremum in
Eqn.~\eqref{eq:finite-vr-modulus-rewrite} yields
\begin{equation}
    \Psi(r;0,f^\circ_{\VR})
    \leq
    4\sqrt{
        v_\delta
        \left(
            r+2\varepsilon_{\rm aprx}
        \right)
    }
    +
    \frac{8F(F+\Rmax)t_\delta}{3N_x},
    \qquad
    \forall r\geq0,
\end{equation}
which proves the claim.
\qed
\subsection{\texorpdfstring{\ValueDifferenceRegression{} (\VDR)}{Value Difference Regression (VDR)}}
\label{app:vdr}

\subsubsection{\texorpdfstring{Proof of the \VDR Bound in \Cref{thm:finite}}{Proof of the VDR Bound in Theorem 3.1}}
\label{app:proof-thm-finite-vdr}

We take $f^\circ_{\VDR}=f^\star$ and $\lambda=0$. The proof has three
steps: empirical risk minimization verifies condition~$(i)$ of
\Cref{prop:localised}, the modulus lemma below verifies condition~$(ii)$
and determines the fixed point, and Eqn.~\eqref{eqn:VDR-finite-oracle}
converts the resulting excess-loss bound into centered prediction error.

Since $\hat f_N^{\VDR}$ is an empirical risk minimizer, condition~$(i)$
holds.
For condition~$(ii)$, let us first define
\begin{equation}
    t_\delta
    \coloneq
    \log\frac{2|\gF|}{\delta},\quad
    \beta_\delta
    \coloneq
    \Biggl[
        \frac{F^2}{N_x}
        \left(
            \frac83
            +
            \frac{
                32\lfloor N_a^2/4\rfloor
            }{
                N_a(N_a-1)
            }
        \right)
        +
        \frac{
            16\sigma^2
        }{
            N_x(N_a-1)
        }
        +
        \frac{
            64F\Rmax
        }{
            3N
        }
    \Biggr]
    t_\delta.
\end{equation}
The following lemma, proved in
\Cref{app:proof-lem-finite-vdr-modulus}, controls the localized modulus.
\begin{lemma}
\label{lem:finite-vdr-modulus}
    With probability at least $1-\delta$,
    \begin{equation}
    \label{eq:finite-vdr-modulus}
        \Psi(r;0,f^\circ_{\VDR})
        \leq
        \frac34r+\frac12\beta_\delta,
        \qquad
        \forall r\geq0.
    \end{equation}
\end{lemma}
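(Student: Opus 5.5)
We take $f^\circ=f^\star$ and decompose the block excess loss. For each $f\in\gF$ let $h\coloneq f-f^\star$ and write $h_{i,j}\coloneq h(x_i,a_{i,j})$. By \Cref{asm:weak_realizability}, $\widehat\Delta_{i,j,k}(r)=\Delta_{i,j,k}(f^\star)+\eta_{i,j}-\eta_{i,k}$, because $b^\star(x_i)$ cancels in the difference. Hence
\[
\widehat\gL(f)-\widehat\gL(f^\star)=\widehat Q(h)-2\widehat M(h),
\qquad
\widehat M(h)\coloneq\frac1{N_x}\sum_{i=1}^{N_x}\binom{N_a}{2}^{-1}\sum_{j<k}(h_{i,j}-h_{i,k})(\eta_{i,j}-\eta_{i,k}).
\]
By \Cref{lem:vdr-centering}, $\Delta_f\coloneq\gL^{\VDR}(f)=\E[\widehat Q(h)]$ and $\Delta_f\ge 0$. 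The modulus is therefore
\[
\sup_{\Delta_f\le r}\bigl[(\Delta_f-\widehat Q(h))+2\widehat M(h)\bigr].
\]
We control the two brackets separately on two events, each of probability at least $1-\delta/2$. This split is the source of the $\log\frac{2|\gF|}{\delta}$ in $t_\delta$.

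\textbf{Step 1 (noise term).} Use the pair-sum identity
\[
\sum_{j<k}(h_j-h_k)(\eta_j-\eta_k)=N_a\sum_j(h_j-\bar h)\eta_j .
\]
It shows $\widehat M(h)=\sum_{i,j}w_{i,j}\eta_{i,j}$ with $w_{i,j}=\frac{2}{N_x(N_a-1)}(h_{i,j}-\bar h_i)$, where $\bar h_i$ is the block average. Conditionally on $\gZ$ the summands are independent and mean zero. Each satisfies $|w_{i,j}\eta_{i,j}|\le \frac{8F\Rmax}{N_x(N_a-1)}\lesssim F\Rmax/N$. The conditional variance is at most
\[
\sigma^2\sum_{i,j}w_{i,j}^2=\frac{2\sigma^2}{N_x(N_a-1)}\,\widehat Q(h),
\]
using $\binom{N_a}{2}^{-1}\sum_{j<k}(h_j-h_k)^2=\frac{2}{N_a-1}\sum_j(h_j-\bar h)^2$. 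Conditional Bernstein plus a union bound over $\gF$ then gives
\[
2\widehat M(h)\le 2\sqrt{\tfrac{4\sigma^2 t_\delta}{N_x(N_a-1)}\widehat Q(h)}+\tfrac{c\,F\Rmax t_\delta}{N}.
\]
Applying AM--GM yields $2\widehat M(h)\le \tfrac14\widehat Q(h)+\tfrac{16\sigma^2t_\delta}{N_x(N_a-1)}+\tfrac{cF\Rmax t_\delta}{N}$. These are exactly the $\sigma^2$ and $F\Rmax$ pieces of $\beta_\delta$.

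\textbf{Step 2 (quadratic lower tail).} It remains to bound $\Delta_f-\tfrac34\widehat Q(h)$. The block terms $Q_i(h)\coloneq\binom{N_a}{2}^{-1}\sum_{j<k}(h_{i,j}-h_{i,k})^2$ are i.i.d. across $i$, nonnegative, bounded by $16F^2$, and have mean $\Delta_f$. The delicate part is a variance bound of the form $\Var(Q_i)\le cF^2\Delta_f\bigl(1+\frac{\lfloor N_a^2/4\rfloor}{N_a(N_a-1)}\bigr)$.

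A crude bound $\E Q_i^2\le 16F^2\,\E Q_i$ already gives $\lesssim F^2\Delta_f$, which suffices up to constants. To get the stated constant, I would instead apply the Hoeffding decomposition of the within-block U-statistic. The $\lfloor N_a^2/4\rfloor$ factor should come from counting overlapping pairs $\{j,k\},\{j',k'\}$ sharing an index, bounding each covariance by $16F^2\cdot\E(h_j-h_k)^2$.

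One-sided Bernstein over the $N_x$ blocks with a union bound then gives
\[
\Delta_f-\widehat Q(h)\le\sqrt{2cF^2\Delta_f t_\delta/N_x}+\tfrac{16F^2t_\delta}{3N_x}.
\]
Combining this with $\tfrac14\widehat Q(h)\le\tfrac14\Delta_f+\tfrac14(\text{same deviation})$, i.e.\ writing $\Delta_f-\tfrac34\widehat Q=\tfrac14\Delta_f+\tfrac34(\Delta_f-\widehat Q)$, and using AM--GM $\sqrt{aF^2\Delta_f t/N_x}\le \tfrac{\Delta_f}{2}\cdot(\text{const})+\ldots$, I aim for $\Delta_f-\tfrac34\widehat Q(h)\le\tfrac34\Delta_f+\tfrac12(F^2\text{-part of }\beta_\delta)$.

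\textbf{Step 3 (conclusion).} Summing Steps 1 and 2 and restricting to $\Delta_f\le r$ gives $\Psi(r;0,f^\star)\le\tfrac34r+\tfrac12\beta_\delta$ on the intersection of the two events.

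The main obstacle is Step 2's sharp within-block variance bound, which produces the $\lfloor N_a^2/4\rfloor$ coefficient. A secondary issue is that the noise variance in Step 1 is empirical, namely $\widehat Q(h)$ rather than $\Delta_f$. That is why I absorb it into $\widehat Q$ before doing the population concentration, rather than concentrating $\widehat M$ at the population scale.
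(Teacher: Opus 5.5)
Your architecture is the paper's: the decomposition $\widehat\gL(f)-\widehat\gL(f^\star)=\widehat Q(h)-2\widehat M(h)$ via the pair-sum identity, conditional Bernstein for $\widehat M(h)$ with the empirical variance proxy $\tfrac{2\sigma^2}{N_x(N_a-1)}\widehat Q(h)$ absorbed by AM--GM, and a one-sided blockwise Bernstein for the lower tail of $\widehat Q$, each on an event of probability $1-\delta/2$. As written, however, it does not give the stated constants, and your Step~3 addition is inconsistent.

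\textbf{The AM--GM split.} Step~1 already spends the \emph{full} $\sigma^2$- and $F\Rmax$-parts of $\beta_\delta$, Step~2 adds half the $F^2$-part, and the sum exceeds $\tfrac12\beta_\delta$. The cause is absorbing only $\tfrac14\widehat Q$. This costs $16c_\delta$ with $c_\delta\coloneq\sigma^2t_\delta/(N_x(N_a-1))$, which is twice what $\tfrac12\beta_\delta$ allows. It also leaves $\tfrac34\widehat Q$, so you would need $\Delta_f-\widehat Q\le\tfrac23\Delta_f+\tfrac23d_\delta$, where $d_\delta\coloneq(K_{N_a}+\tfrac83F^2)t_\delta/N_x$ and $K_{N_a}$ is defined below. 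Even with the sharp variance and range bounds below, Bernstein plus AM--GM then leaves $\tfrac34K_{N_a}$ and $\tfrac83F^2$ instead of $\tfrac23K_{N_a}$ and $\tfrac{16}{9}F^2$.

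The paper absorbs $\tfrac12\widehat Q$ instead: $4\sqrt{c_\delta\widehat Q}\le\tfrac12\widehat Q+8c_\delta$. After that you only need the lower isometry $\tfrac12\Delta_f\le\widehat Q(h)+d_\delta$, which is Bernstein plus $\sqrt{ab}\le(a+b)/2$. This gives $\Delta_f-\tfrac12\widehat Q\le\tfrac34\Delta_f+\tfrac12d_\delta$, and the total is exactly $\tfrac34\Delta_f+\tfrac12d_\delta+8c_\delta+b_\delta=\tfrac34\Delta_f+\tfrac12\beta_\delta$.

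\textbf{Where $\lfloor N_a^2/4\rfloor$ comes from.} The ``delicate'' coefficient is not a covariance count from a Hoeffding decomposition; it comes from a sharp \emph{almost-sure} range. Your ``crude'' bound $\E[Q_i^2]\le(\text{range})\cdot\E[Q_i]$ is exactly the paper's argument, once the range is sharpened:
\begin{itemize}
\item $Q_i(h)$ equals $\tfrac{2N_a}{N_a-1}$ times the empirical variance of $h_{i,1},\dots,h_{i,N_a}$, which lie in an interval of length $4F$.
\item The largest such empirical variance is $(4F)^2\lfloor N_a^2/4\rfloor/N_a^2$, attained by splitting the points between the two endpoints.
\item Hence $0\le Q_i(h)\le K_{N_a}\coloneq 32F^2\lfloor N_a^2/4\rfloor/(N_a(N_a-1))$, and so $\Var(Q_i)\le\E[Q_i^2]\le K_{N_a}\Delta_f$.
\end{itemize}
Your $16F^2$ equals $K_2$ but is strictly larger for $N_a\ge3$.

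\textbf{Remaining range bounds.} For the lower-tail Bernstein, the range term should use $\E Q_i-Q_i\le\Delta_f\le8F^2$ (Popoviciu for $h\in[-2F,2F]$). That, not $16F^2$, is the source of $\tfrac83F^2$. In Step~1, the correct bound is $|w_{i,j}\eta_{i,j}|\le\tfrac{2}{N_x(N_a-1)}\cdot4F\tfrac{N_a-1}{N_a}\cdot2\Rmax=\tfrac{16F\Rmax}{N}$; your $\tfrac{8F\Rmax}{N_x(N_a-1)}$ is too small for $N_a\ge3$. This yields exactly $b_\delta=\tfrac{32F\Rmax t_\delta}{3N}$ for $2\widehat M$. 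Up to constants your argument is sound, and with these corrections it proves the lemma as stated.
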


On the event in \Cref{lem:finite-vdr-modulus}, condition~$(ii)$ holds.
Therefore, \Cref{prop:localised} gives
\begin{equation}
    \gL^{\VDR}(\hat f_N^{\VDR})-\gL^{\VDR}(f^\star)
    \leq
    \sup
    \left\{
        r\geq0:
        r\leq
        \frac34r+\frac12\beta_\delta
    \right\}
    =
    2\beta_\delta.
\end{equation}
By \Cref{lem:vdr-centering},
\begin{equation}
    \E_{x,a}
    \left[
        \bigl(
            \gC\hat f_N^{\VDR}-\gC r^\star
        \bigr)^2
    \right]
    =
    \frac12\gL^{\VDR}(\hat f_N^{\VDR})
    \leq
    \beta_\delta.
\end{equation}
It remains only to compare $\beta_\delta$ with the displayed rate in
\Cref{thm:finite}. Since $N_a\geq2$ and $N=N_xN_a$,
\begin{equation}
    \frac{\lfloor N_a^2/4\rfloor}{N_a(N_a-1)}
    \leq
    \frac12,
    \qquad
    \frac1{N_x(N_a-1)}
    =
    \frac{N_a}{N(N_a-1)}
    \leq
    \frac2N.
\end{equation}
Consequently,
\begin{align}
    \beta_\delta
    &\leq
    \left(
        \frac{56F^2}{3N_x}
        +
        \frac{32\sigma^2}{N}
        +
        \frac{64F\Rmax}{3N}
    \right)
    \log\frac{2|\gF|}{\delta}
    \notag\\
    &=
    \frac83
    \left(
        \frac{7F^2}{N_x}
        +
        \frac{12\sigma^2+8F\Rmax}{N}
    \right)
    \log\frac{2|\gF|}{\delta},
\end{align}
which proves Eqn.~\eqref{eq:vdr-finite-rate} of \Cref{thm:finite}.
\qed

\subsubsection{\texorpdfstring{Proof of \Cref{lem:finite-vdr-modulus}: Localized Modulus for \VDR in Finite Class}{Proof of Lemma D.1: Localized Modulus for VDR in Finite Class}}
\label{app:proof-lem-finite-vdr-modulus}

Here, let $f^\circ \coloneq f^\circ_{\VDR} = f^\star$, and for any $f\in\gF$, write $g\coloneq f-f^\star.$

Define the population transformed squared error
\begin{equation}
    q_f
    \coloneq
    \Delta(g;f^\star)
    =
    \gL^{\VDR}(f)
    =
    2\E_{x,a}\left[(\gC g(x,a))^2\right]
    =
    2\E_x\left[
        \Var_{a\sim\piref(\cdot\mid x)}
        \bigl(g(x,a)\bigr)
    \right].
    \label{eq:vdr-finite-q}
\end{equation}
By the definition of the localized modulus, it is enough to prove that,
with probability at least $1-\delta$, simultaneously for all
$f\in\gF$,
\begin{equation}
    q_f-\widehat\Delta(g;f^\star)
    \leq
    \frac34q_f+\frac12\beta_\delta.
    \label{eq:vdr-finite-modulus-pointwise}
\end{equation}
Indeed, restricting this inequality to $q_f\leq r$ and taking the
supremum immediately gives Eqn.~\eqref{eq:finite-vdr-modulus}.

For the empirical quantities, let us denote
\begin{equation}
    g_{i,j}\coloneq g(x_i,a_{i,j}),
    \qquad
    \bar g_i
    \coloneq
    \frac1{N_a}\sum_{j=1}^{N_a}g_{i,j},
    \qquad
    \widetilde g_{i,j}
    \coloneq
    g_{i,j}-\bar g_i.
\end{equation}
We will repeatedly use the pairwise-centering identity
\begin{equation}
    \sum_{j<k}(u_j-u_k)^2
    =
    N_a\sum_{j=1}^{N_a}(u_j-\bar u)^2.
    \label{eq:vdr-pairwise-identity}
\end{equation}
It gives the empirical transformed squared error
\begin{equation}
    \widehat q_f
    \coloneq
    \frac1{N_x\binom{N_a}{2}}
    \sum_{i=1}^{N_x}
    \sum_{j<k}
    (g_{i,j}-g_{i,k})^2
    =
    \frac{2}{N_x(N_a-1)}
    \sum_{i,j}\widetilde g_{i,j}^2.
    \label{eq:vdr-finite-qhat}
\end{equation}

Since $r^\star=f^\star+b^\star$ and the action-independent
$b^\star(x_i)$ cancels from every within-context difference,
another application of Eqn.~\eqref{eq:vdr-pairwise-identity} gives
\begin{equation}
    \widehat\Delta(g;f^\star)
    =
    \widehat q_f
    -
    Z_f,
    \qquad
    Z_f
    \coloneq
    \frac{4}{N_x(N_a-1)}
    \sum_{i,j}\widetilde g_{i,j}\eta_{i,j}.
    \label{eq:vdr-finite-empirical-excess}
\end{equation}
Thus, the pointwise deviation has the master decomposition
\begin{equation}
    q_f-\widehat\Delta(g;f^\star)
    =
    (q_f-\widehat q_f)+Z_f.
\end{equation}
We control its two terms separately.

\paragraph{Step 1: control of the empirical quadratic term.}
For each block, define
\begin{equation}
    A_i(f)
    \coloneq
    \frac1{\binom{N_a}{2}}
    \sum_{j<k}
    (g_{i,j}-g_{i,k})^2.
\end{equation}
Then $\{A_i(f)\}_{i=1}^{N_x}$ are independent,
$\E[A_i(f)]=q_f$, and
\begin{equation}
    0\leq A_i(f)\leq K_{N_a},
    \qquad
    K_{N_a}
    \coloneq
    \frac{
        32F^2\lfloor N_a^2/4\rfloor
    }{
        N_a(N_a-1)
    }.
    \label{eq:vdr-finite-K}
\end{equation}
Indeed, $g=f-f^\star$ takes values in an interval of length at
most $4F$, and the maximum empirical variance of $N_a$ numbers
in an interval of length $4F$ gives Eqn.~\eqref{eq:vdr-finite-K}.
Moreover,
\begin{equation}
    q_f\leq8F^2,
    \qquad
    \Var(A_i(f))
    \leq
    K_{N_a}q_f.
\end{equation}
Hence, a one-sided Bernstein inequality followed by a union bound
over $f\in\gF$ gives, with probability at least $1-\delta/2$,
simultaneously for all $f\in\gF$,
\begin{equation}
    q_f-\widehat q_f
    \leq
    \sqrt{
        \frac{
            2K_{N_a}q_ft_\delta
        }{
            N_x
        }
    }
    +
    \frac{
        8F^2t_\delta
    }{
        3N_x
    }.
    \label{eq:vdr-finite-design-transfer}
\end{equation}
Using
$\sqrt{ab}\leq(a+b)/2$ gives\footnote{Note that this is the lower isometry condition of \citet[Definition 5]{liang15learning}.}
\begin{equation}
    \frac12q_f
    \leq
    \widehat q_f+d_\delta,
    \qquad
    d_\delta
    \coloneq
    \left(
        K_{N_a}+\frac83F^2
    \right)
    \frac{t_\delta}{N_x}.
    \label{eq:vdr-finite-lower-isometry}
\end{equation}

\paragraph{Step 2: control of the reward-noise term.}
Let
\begin{equation}
    c_\delta
    \coloneq
    \frac{
        \sigma^2t_\delta
    }{
        N_x(N_a-1)
    },
    \qquad
    b_\delta
    \coloneq
    \frac{
        32F\Rmax t_\delta
    }{
        3N
    }.
\end{equation}
Conditional on all sampled contexts and actions,
the variables
$\{\widetilde g_{i,j}\eta_{i,j}\}_{i,j}$ are independent and
mean zero.  Moreover,
\begin{equation}
    \sum_{i,j}
    \E\left[
        \widetilde g_{i,j}^2\eta_{i,j}^2
        \,\middle|\,
        \{x_k,a_{k,\ell}\}_{k,\ell}
    \right]
    \leq
    \frac{
        \sigma^2N_x(N_a-1)
    }{
        2
    }
    \widehat q_f,
\end{equation}
and, since $g$ takes values in an interval of length at most $4F$,
\begin{equation}
    |\widetilde g_{i,j}\eta_{i,j}|
    \leq
    8F\Rmax\frac{N_a-1}{N_a}.
\end{equation}
Thus, conditional Bernstein and a union bound over $f\in\gF$
imply, with conditional probability at least $1-\delta/2$,
simultaneously for all $f\in\gF$,
\begin{equation}
    Z_f
    \leq
    4\sqrt{c_\delta\widehat q_f}
    +
    b_\delta.
    \label{eq:vdr-finite-noise}
\end{equation}

\paragraph{Step 3: combining the two events.}
Work on the intersection of Eqns.~\eqref{eq:vdr-finite-lower-isometry} and \eqref{eq:vdr-finite-noise}, which has probability at least
$1-\delta$.
By Eqns.~\eqref{eq:vdr-finite-empirical-excess} and \eqref{eq:vdr-finite-noise},
\begin{align}
    q_f-\widehat\Delta(g;f^\star)
    &\leq
    q_f-\widehat q_f
    +
    4\sqrt{c_\delta\widehat q_f}
    +
    b_\delta
    \notag\\
    &\leq
    q_f-\frac12\widehat q_f
    +
    8c_\delta+b_\delta
    \tag{Young's inequality: $4\sqrt{c_\delta\widehat q_f}
    \leq
    \frac12\widehat q_f+8c_\delta$.} \\
    &\leq
    \frac34q_f
    +
    \frac12d_\delta
    +
    8c_\delta
    +
    b_\delta. \tag{Eqn.~\eqref{eq:vdr-finite-lower-isometry}} \\
    &= \frac{3}{4}q_f + \frac{1}{2}\beta_\delta \tag{Definitions of $K_{N_a}$, $d_\delta$, $c_\delta$, and $b_\delta$}
\end{align}

Thus, the preceding display proves
Eqn.~\eqref{eq:vdr-finite-modulus-pointwise}. Restricting it to $q_f\leq r$
and taking the supremum in the definition of
$\Psi(r;0,f^\circ_{\VDR})$ yields
\begin{equation}
    \Psi(r;0,f^\circ_{\VDR})
    \leq
    \frac34r+\frac12\beta_\delta,
    \qquad
    \forall r\geq0.
\end{equation}
This proves the claim.
\qed

%% file: 904Star_Estimator.tex
\section{\texorpdfstring{Exact Oracle Inequality for Finite Class via the \StarEstimator}{Exact Oracle Inequality for Finite Class via the Star Estimator}}
\label{app:star}

\def\starh{\operatorname{star}}

\subsection{\texorpdfstring{\StarEstimator and Its Guarantee}{Star Estimator and Its Guarantee}}

Here, we show that a different squared-loss estimator, namely the \StarEstimator~\citep{audibert07star,audibert09star,liang15learning}, admits an exact oracle inequality in our problem setting, unlike the usual ERM, which achieves a non-exact oracle inequality (see the \VR bound of \Cref{thm:finite}).

Recall that $1\le |\gF|<\infty$ with $\sup_{f\in\gF}\|f\|_\infty\le F$, and that the population and empirical \VR losses are defined as
\begin{equation}
    \gL^{\VR}(f)\coloneq \E_{x,a}\left[(f-r^\star)^2\right],\quad \widehat{\gL}^{\VR}(f)\coloneq\frac1N \sum_{i=1}^{N_x}\sum_{j=1}^{N_a}(f(x_i,a_{i,j})-r_{i,j})^2.
\end{equation}

We now recall the \StarEstimator{} (\STAR), first introduced by \citet{audibert07star} and later revisited by \citet{liang15learning}:
\begin{definition}[\StarEstimator]\label{def:star-estimator}
    For a given class $\gF$ and function $g$, define the \textbf{star hull}
    \begin{equation}
      \starh(\gF, g) \coloneq \{\lambda g + (1-\lambda)f : f\in\gF, \lambda\in[0,1]\}.
    \end{equation}

    The \StarEstimator $\hat f_N^{\STAR}$ is the following two-step squared-loss estimator:
    \begin{equation}
       \hat f_N^{\STAR} \in \argmin_{f\in\starh(\gF, \hat g_N)}\widehat\gL^{\VR}(f), \quad \text{where} \quad \hat g_N\in\argmin_{f\in\gF}\widehat \gL^{\VR}(f).
    \end{equation}
\end{definition}
Although $\starh(\gF,\hat g_N)$ is data-dependent, the output $\hat f_N^{\STAR}$ belongs to the deterministic function class
\begin{equation}
    \StarClass \coloneq \bigcup_{g \in \gF} \starh(\gF, g) = \{\lambda g + (1-\lambda)f : g, f \in \gF, \lambda\in[0,1]\}.
    \label{eq:star-hull}
\end{equation}

Since $\StarClass \subseteq \mathrm{conv}(\gF)$, we immediately have $\sup_{g\in\StarClass}\|g\|_\infty\le F$.

We now state the error bound for \STAR in the finite class, whose proof is presented in \Cref{app:proof-star-estimator}:
\begin{theorem}[\StarEstimator Bound for Finite $\gF$]
\label{thm:star-oracle}
    Let $\delta\in(0,1)$, $t_x
        \coloneq
        \log
        \frac{
            2|\gF|^2(N_x+1)
        }{
            \delta
        }$, and
    \begin{align}
        \rho_\delta
        \coloneq\;
        \left(
            \frac{1331}{27}F^2
            +
            \frac{260}{3}F{\color{red}\Rmax}
            +
            38{\color{red}\Rmax^2}
        \right)
        \frac{t_x}{N_x}
        +
            38\sigma^2
        \frac{t_x}{N}.
        \label{eq:star-rho}
    \end{align}
    Then, with probability at least $1-\delta$,
    \begin{equation}
        \E_{x,a}\left[ \left( \gC\hat{f}_N^{\STAR} - \gC r^\star \right)^2 \right]
        \leq\;
        {\color{red}\varepsilon_{\rm aprx}}
        +\rho_\delta
        = {\color{red}\varepsilon_{\rm aprx}} + \widetilde{\gO}\left( \frac{F^2 + {\color{red}\Rmax^2}}{N_x} + \frac{\sigma^2}{N} \right).
    \label{eq:star-oracle-rate}
    \end{equation}
\end{theorem}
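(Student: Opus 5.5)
We will follow the Audibert / Liang--Rakhlin--Sridharan analysis of the star estimator, adapted to the grouped blocks, and close through \Cref{prop:localised} applied to the deterministic class $\StarClass$. Take $f^\circ\coloneq f^\circ_{\VR}\in\argmin_{f\in\gF}\gL^{\VR}(f)$, so that $\gL^{\VR}(f^\circ)=\varepsilon_{\rm aprx}$. As in Eqn.~\eqref{eqn:VR-finite-oracle}, $\E[(\gC\hat f^{\STAR}_N-\gC r^\star)^2]\le \gL^{\VR}(\hat f^{\STAR}_N)=\varepsilon_{\rm aprx}+(\gL^{\VR}(\hat f_N^{\STAR})-\gL^{\VR}(f^\circ))$. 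It therefore suffices to show that the excess loss relative to $f^\circ$ is at most $\rho_\delta$, without any multiplicative loss on $\varepsilon_{\rm aprx}$.

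\textbf{Step 1: condition $(i)$ with $\lambda>0$.} This is the deterministic geometric inequality of the star estimator, for which we follow \citet{liang15learning}. Since $\hat f^{\STAR}_N$ minimizes the empirical quadratic $\widehat\gL^{\VR}$ over the star-shaped set $\starh(\gF,\hat g_N)$, which contains both segments $[\hat g_N,f^\circ]$ and $[\hat g_N,\hat f_N^{\STAR}]$, the Pythagorean-type argument gives
\[
\widehat\gL^{\VR}(\hat f^{\STAR}_N)-\widehat\gL^{\VR}(f^\circ)\le -c\,\widehat Q(\hat f^{\STAR}_N-f^\circ)
\]
for a universal constant $c$ (e.g.\ $c=1/18$ in Liang et al.). We take $\lambda=c$. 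Here $\widehat Q(h)=\frac1N\sum_{i,j}h(x_i,a_{i,j})^2$ is exactly the empirical norm induced by $\widehat\gL^{\VR}$, so the argument goes through verbatim; it uses only Hilbert-space geometry in $\sR^N$ and does not care about the grouping.

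\textbf{Step 2: condition $(ii)$.} Write $f=f^\circ+h$ with $f\in\StarClass$. The offset process decomposes as
\[
\Delta-\widehat\Delta-\lambda\widehat Q
=\underbrace{\bigl(\E q_f-\widehat{\E}q_f\bigr)}_{\text{quadratic/bias}}
+\underbrace{\tfrac{2}{N}\sum_{i,j} h\,\eta}_{\text{noise}}
-\lambda\widehat Q(h).
\]
Here $q_f=h^2+2h(f^\circ-r^\star)$, so the first term contains a misspecification multiplier $2h e^\circ$ with $e^\circ=f^\circ-r^\star$, bounded by $F+\Rmax$. We control each term blockwise over $i\in[N_x]$ with Bernstein's inequality, as in \Cref{lem:finite-vr-modulus}.

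\begin{itemize}
\item \emph{Noise term.} Condition on $\gZ$ and apply conditional Bernstein. The variance proxy is $\sigma^2\widehat Q(h)/N$, so the term is absorbed by $\frac\lambda2\widehat Q(h)$ at cost $O(\sigma^2t/N)$.
\item \emph{Lower isometry.} The term $\E h^2-\widehat\E h^2$ is handled by a lower isometry at cost $O(F^2t/N_x)$, as in Step~1 of \Cref{lem:finite-vdr-modulus}.
\item \emph{Multiplier term.} The term $2(\E-\widehat\E)[he^\circ]$ has blockwise variance at most $4\Rmax^2\|h\|^2_{L_2}$ (after bounding $|e^\circ|\lesssim F+\Rmax$), so Young's inequality yields $\frac14\|h\|_{L_2}^2+O((F+\Rmax)^2t/N_x)$. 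This is the source of the $\Rmax^2/N_x$ term in $\rho_\delta$.
\end{itemize}

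Finally, the population excess satisfies $\Delta\ge\|h\|_{L_2}^2-2\langle h,e^\circ\rangle$, which is not obviously nonnegative on the hull. We handle this by convexity of the population risk along segments together with the optimality of $f^\circ$ over $\gF$, as in Audibert's argument. This gives $\Delta\gtrsim\|h\|^2$ on the relevant pieces up to lower-order terms, which lets us convert $\|h\|^2$ into $r$ and obtain $\psi(r)=\frac12 r+O(\rho_\delta)$. The fixed point is then $\le\rho_\delta$.

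\textbf{Main obstacle: uniformity over the infinite class $\StarClass$.} The class is a union of $|\gF|^2$ segments. For each pair $(g,f)$ we discretize $\lambda\in[0,1]$ on a grid of size $N_x+1$. This is the origin of $t_x=\log(2|\gF|^2(N_x+1)/\delta)$. The discretization error is $O(F(F+\Rmax)/N_x)$, absorbed into the first term of $\rho_\delta$. Alternatively, one can exploit the fact that along a segment each process is a quadratic polynomial in $\lambda$ and is controlled by its values at the endpoints plus the midpoint.

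Steps~1 and~2 must be combined carefully, because the negative offset must dominate the cross terms that arise in the star-hull geometry. We expect the constant bookkeeping ($1331/27$, $260/3$, $38$) to come from choosing the Young's-inequality weights so that the coefficient of $r$ in $\psi$ stays below $1$.
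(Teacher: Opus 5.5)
\textbf{The gap: the conversion of $\|h\|_{L_2}^2$ into $r$ fails.} Your comparator, your Step~1 (Liang et al.'s geometric inequality with $\lambda=1/18$) and your discretization of $\StarClass$ all agree with the paper. The discretization uses an $(N_x+1)$-point grid on each of the $|\gF|^2$ segments, with error $O(F(F+\Rmax)/N_x)$, which is where $t_x$ comes from. The problem is the end of Step~2. Your multiplier step leaves $+\frac14\|h\|_{L_2}^2$, and the lower-isometry step leaves a further positive multiple. Both are far larger than the offset $\frac{\lambda}{2}\widehat Q(h)\approx\frac1{36}\|h\|_{L_2}^2$ that survives the noise step. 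You then plan to convert these terms into $r$ via $\Delta\gtrsim\|h\|_{L_2}^2$. No such inequality holds, even up to lower-order terms, and even on $\gF$ itself. Note that $\Delta(h;f^\circ)=\|h\|_{L_2}^2+2\langle h,e^\circ\rangle$ exactly, so along a segment
\[
\Delta\bigl(t(g-f^\circ);f^\circ\bigr)=t\,\Delta(g-f^\circ;f^\circ)-t(1-t)\|g-f^\circ\|_{L_2}^2 .
\]
Take $g\in\gF$, $g\neq f^\circ$, with $\gL^{\VR}(g)$ equal or very close to $\varepsilon_{\rm aprx}$. Then $\Delta\approx 0$ at $t=1$ and $\Delta<0$ at $t=\frac12$, while $\|h\|_{L_2}^2$ is of constant order in both cases. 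These $h$ lie in the localized set for every $r\geq0$. Convexity along segments plus optimality of $f^\circ$ over $\gF$ give at best $\|h\|_{L_2}^2\le 2(\Delta+2\varepsilon_{\rm aprx})$. That reinserts $\varepsilon_{\rm aprx}$ with a constant factor, which is exactly the non-exact \VR bound of \Cref{thm:finite} that the star estimator is meant to beat. So $\psi(r)=\frac12 r+O(\rho_\delta)$ cannot be obtained this way.

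\textbf{The missing idea: let the offset do all the absorbing.} The offset from condition~$(i)$ must by itself absorb every deviation term scaling like $\sqrt{\|h\|_{L_2}^2}$. Then $\Delta$ is never lower-bounded, and the modulus bound does not depend on $r$ at all. The paper proceeds in four steps.
\begin{itemize}
\item It applies blockwise Bernstein to the full loss difference $Z_i(h)$. Because $\ell_h^2\le 4(F+\Rmax)^2d_h^2$, the variance is at most $4((F+\Rmax)^2+\sigma^2/N_a)D_h$. This gives $\Delta_h-\widehat\Delta_h\le\sqrt{8aD_h}+\frac{2F(F+4\Rmax)b}{3}$. The variance proxy is the distance $D_h$, not $\Delta+2\varepsilon_{\rm aprx}$; this is what buys exactness, at the price of $\Rmax^2/N_x$.
\item It combines this with the lower isometry $\widehat D_h\ge D_h-\sqrt{8F^2bD_h}-\frac43F^2b$.
\item With $c_0=\frac1{18}$, what remains is $-c_0D_h+(\sqrt{8a}+c_0\sqrt{8F^2b})\sqrt{D_h}$ plus constants.
\item Maximizing over $\sqrt{D_h}\ge0$ gives $36a+\frac19F^2b+4F\sqrt{ab}$. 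Hence $\Psi^{\STAR}(r)\le\rho_\delta$ for all $r\ge0$, and the fixed point is $\rho_\delta$ itself.
\end{itemize}
Within your own decomposition, the repair is to use Young weights of order $\lambda$, rather than $\frac14$ or $\frac12$, in the multiplier and lower-isometry steps. Then their total $\|h\|_{L_2}^2$ coefficient stays below the roughly $\frac{\lambda}{2}$ of offset left after the noise step. The stated constants $\frac{1331}{27}$, $\frac{260}{3}$ and $38$ come from this $K^2/(4c_0)$ optimization. They do not come from keeping a coefficient of $r$ below one; in the paper's $\psi$ that coefficient is zero.
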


Recall the error bound guarantee of \VR (\Cref{thm:finite}): $2 {\color{red}\varepsilon_{\rm aprx}} + \widetilde{\gO}\left( \frac{F^2 + F{\color{red}\Rmax}}{N_x} + \frac{\sigma^2}{N} \right)$.
Comparing the two, \STAR reduces the coefficient of ${\color{red}\varepsilon_{\rm aprx}}$ from two to one, at the price of ${\color{red}\Rmax^2}$ instead of $F{\color{red}\Rmax}$ in the $N_x^{-1}$ term.
Order-wise, the two bounds are meaningfully different when ${\color{red}\Rmax}\gg F$ and ${\color{red}\Rmax^2}/{N_x}\gg {\color{red}\varepsilon_{\rm aprx}}, \sigma^2/N$, up to logarithmic factors, in which case the \VR bound is smaller than the \STAR bound.

\subsection{\texorpdfstring{Proof of \Cref{thm:star-oracle}: \StarEstimator Bound for Finite Class}{Proof of Theorem E.1: Star Estimator Bound for Finite Class}}
\label{app:proof-star-estimator}

As done for \VR, we set our comparator as $f^\circ \in \argmin_{f \in \gF} \left\{ \gL^{\VR}(f) \coloneq \E\left[ (f - r^\star)^2 \right] \right\}$, which satisfies $\gL^{\VR}(f^\circ) = {\color{red}\varepsilon_{\rm aprx}}$, i.e.,
\begin{equation}
\label{eqn:star-finite-oracle}
    \E_{x,a}\left[ (\hat{f}_N^{\STAR} - r^\star)^2 \right] = {\color{red}\varepsilon_{\rm aprx}} + \underbrace{\gL^{\VR}(\hat{f}_N^{\STAR}) - \gL^{\VR}(f^\circ)}_{(*)}.
\end{equation}
We again use \Cref{prop:localised} to tightly upper-bound $(*)$.

For $h\in\StarClass-f^\circ$, define as before
\begin{equation}
    \Delta(h;f^\circ)
    \coloneq
    \gL^{\VR}(f^\circ+h)-\gL^{\VR}(f^\circ),
\end{equation}
\begin{equation}
    \widehat\Delta(h;f^\circ)
    \coloneq
    \widehat{\gL}^{\VR}(f^\circ+h)
    -
    \widehat{\gL}^{\VR}(f^\circ),
    \qquad
    \widehat Q(h)
    \coloneq
    \frac1N
    \sum_{i,j}h(x_i,a_{i,j})^2.
\end{equation}

First, unlike \VR, \citet[Lemma 1]{liang15learning} showed that the \StarEstimator satisfies condition $(i)$ of \Cref{prop:localised} with \emph{nonzero} $\lambda = \frac{1}{18}$, \emph{deterministically}:\footnote{Precisely, $\gF$ in \Cref{prop:localised} should change to $\StarClass$ for this analysis.}
\begin{equation}
    \widehat\Delta(\hat f_N^{\STAR} - f^\circ;f^\circ)\le -\frac{1}{18}\widehat{Q}(\hat f_N^{\STAR} - f^\circ).
\end{equation}

For the \StarEstimator, we use the localized block-offset modulus with respect to $\StarClass$, not $\gF$, whose definition we recall here:
\begin{equation}
    \Psi^{\STAR}(r)
    \coloneq
    \sup_{\substack{
        h\in\StarClass-f^\circ\\
        \Delta(h;f^\circ)\leq r
    }}
    \left[
        \Delta(h;f^\circ)
        -
        \widehat\Delta(h;f^\circ)
        -
        \frac1{18}\widehat Q(h)
    \right].
    \label{eq:star-modulus}
\end{equation}

The following lemma, whose proof is provided in the next section, verifies condition $(ii)$:
\begin{lemma}
\label{lem:star-modulus}
    Let $\rho_\delta$ be as defined in \Cref{thm:star-oracle}.
    With probability at least $1-\delta$,
    \begin{equation}
        \Psi^{\STAR}(r)
        \leq
        \rho_\delta,
        \qquad
        \forall r\geq0.
        \label{eq:star-modulus-bound}
    \end{equation}
\end{lemma}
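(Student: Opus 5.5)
The plan is to prove a bound on the block-offset process that is \emph{uniform} over $\StarClass$, without using the localization constraint $\Delta(h;f^\circ)\le r$. The negative offset $-\frac1{18}\widehat Q(h)$ will absorb every square-root fluctuation term, which is exactly what produces the $r$-independent bound $\rho_\delta$.

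For $h\in\StarClass-f^\circ$, write $e^\circ\coloneq f^\circ-r^\star$. As in the proof of \Cref{lem:finite-vr-modulus}, each block contributes
\begin{equation}
U_i(h)=\frac1{N_a}\sum_{j}\Bigl(h_{i,j}^2+2h_{i,j}e^\circ_{i,j}-2h_{i,j}\eta_{i,j}\Bigr),
\end{equation}
where $h_{i,j}\coloneq h(x_i,a_{i,j})$ and $e^\circ_{i,j}\coloneq e^\circ(x_i,a_{i,j})$. This splits $\Delta(h;f^\circ)-\widehat\Delta(h;f^\circ)-\frac1{18}\widehat Q(h)$ into three pieces:
\begin{itemize}
\item[(a)] the quadratic piece $\E[h^2]-\widehat Q(h)-\frac1{18}\widehat Q(h)$;
\item[(b)] the misspecification cross piece $2\bigl(\E[he^\circ]-\frac1N\sum_{i,j}h_{i,j}e^\circ_{i,j}\bigr)$;
\item[(c)] the noise piece $\frac2N\sum_{i,j}h_{i,j}\eta_{i,j}$.
\end{itemize}

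For (a), I will use a one-sided blockwise Bernstein bound on $\frac1{N_a}\sum_j h_{i,j}^2\in[0,4F^2]$, whose block variance is at most $4F^2\E[h^2]$. This gives the lower isometry $\E[h^2]\le c\,\widehat Q(h)+c'F^2t_x/N_x$, as in Eqn.~\eqref{eq:vdr-finite-lower-isometry}. For (b), the block variance is at most $4(F+\Rmax)^2\E[h^2]$, because $|e^\circ|\le F+\Rmax$. Bernstein then gives a deviation of order $\sqrt{(F+\Rmax)^2\E[h^2]\,t_x/N_x}+F(F+\Rmax)t_x/N_x$. Young's inequality turns this into $\epsilon\,\E[h^2]+O((F+\Rmax)^2t_x/N_x)$, which is the source of the $\Rmax^2/N_x$ term in $\rho_\delta$. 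For (c), I will argue conditionally on the design, as in Step 2 of \Cref{lem:finite-vdr-modulus}. The conditional variance is at most $\sigma^2\widehat Q(h)/N$ and the range is $O(F\Rmax)$, so (c) is at most $\epsilon\widehat Q(h)+O(\sigma^2t_x/N+F\Rmax t_x/N)$. Finally, I will convert every $\E[h^2]$ into $\widehat Q(h)$ using (a), and choose $\epsilon$ small enough that the total coefficient of $\widehat Q(h)$ is nonpositive. The odd constants in $\rho_\delta$ (e.g.\ $1331/27=(11/3)^3$) will come out of this bookkeeping with $\lambda=\frac1{18}$.

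The main obstacle is uniformity, since $\StarClass$ is infinite. Every $h$ has the form $\lambda(g-f^\circ)+(1-\lambda)(f-f^\circ)$ with $(g,f)\in\gF^2$ and $\lambda\in[0,1]$. For each fixed pair $(g,f)$, all the empirical and population quantities above are polynomials of degree at most two in $\lambda$. I will therefore take the grid $\lambda_k=k/N_x$, $k=0,\dots,N_x$, and union-bound over $|\gF|^2(N_x+1)$ points, with the confidence split in two between the design event and the noise event. This union bound explains $t_x=\log\frac{2|\gF|^2(N_x+1)}{\delta}$.

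To pass from the grid to all of $[0,1]$, note that moving $\lambda$ by at most $1/N_x$ changes $h$ by at most $2F/N_x$ in sup norm. Hence the process changes by $O(F(F+\Rmax)/N_x)$, and this slack is absorbed into the $F^2$ and $F\Rmax$ terms of $\rho_\delta$. The discretization argument needs care in one place: the noise piece is linear in $h$, so it must be handled on the same conditional event. If this turns out to be delicate, my fallback is to note that each piece, being a degree-two polynomial in $\lambda$, can be handled at the grid point nearest its extremizer.
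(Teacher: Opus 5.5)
Your architecture matches the paper's in outline. You bound the process uniformly in $r$, use the same deterministic $1/N_x$-grid net of size $|\gF|^2(N_x+1)$, apply blockwise Bernstein with variance proportional to $\E[h^2]$, and transfer to the full hull in sup norm. There is, however, a step that fails as written: converting every $\E[h^2]$ into $\widehat Q(h)$ via a lower isometry ``as in Eqn.~\eqref{eq:vdr-finite-lower-isometry}''. Piece (a) carries $+\E[h^2]$ with coefficient one against only $-(1+\frac1{18})\widehat Q(h)$. An isometry $\E[h^2]\le c\,\widehat Q(h)+c'F^2t_x/N_x$ therefore closes the bookkeeping only if $c<\frac{19}{18}$. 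With the factor $c=2$ of that equation, (a) alone leaves $+\frac{17}{18}\widehat Q(h)$, and shrinking $\epsilon$ only adds more positive $\widehat Q$ mass. You would end with a term of order $F^2$ rather than a rate. The fix is to use the Bernstein deviation itself, $\E[h^2]-\widehat Q(h)\le\sqrt{8F^2b\,\E[h^2]}+\frac43F^2b$ with $b=t_x/N_x$, and apply it to the offset, replacing the empirical offset by the population one. This gives (a) $\le-\frac1{18}\E[h^2]+\frac{19}{18}\bigl(\sqrt{8F^2b\,\E[h^2]}+\frac43F^2b\bigr)$. The only absorbing term is then $-\frac1{18}\E[h^2]$, less whatever $\epsilon\widehat Q$ you reserve for the noise. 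You finish by maximizing a concave quadratic in $\sqrt{\E[h^2]}$, which is what the paper does with $D_h-\widehat D_h$.

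Second, your three-piece split will not produce $\rho_\delta$ as stated. The claim that $1331/27$ ``comes out of the bookkeeping'' is unfounded: it equals $38+\frac{59}{27}+\frac23+\frac49+8$, which is specific to the paper's combination of terms. The paper does not separate noise from misspecification. It applies one unconditional blockwise Bernstein to the full per-block excess loss $Z_i(h)$, with variance $4\bigl((F+\Rmax)^2+\sigma^2/N_a\bigr)D_h$ and range $F(F+4\Rmax)$. Both fluctuations then sit under a single $\sqrt{8aD_h}$, one absorption by $-\frac1{18}D_h$ gives $36a$, and after one more AM--GM the coefficient is $38$ on both $\Rmax^2/N_x$ and $\sigma^2/N$. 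In your scheme, absorbing the noise with $\epsilon''\widehat Q$ costs $2\sigma^2t_x/(\epsilon''N)$. This is at most $38\sigma^2t_x/N$ only if $\epsilon''\ge\frac1{19}$, which leaves less than $\frac1{18}-\frac1{19}=\frac1{342}$ of offset for the cross term. Its bound then carries a coefficient above $684$ on $(F+\Rmax)^2t_x/N_x$. You also need three union-bounded events (quadratic, cross, noise), which turns the $2$ inside $t_x$ into a $3$. Your route, once corrected, proves the lemma only up to constants. Finally, the discretization of the noise piece is not delicate: $|\eta_{i,j}|\le2\Rmax$ gives a deterministic change of at most $8F\Rmax/N_x$, so your polynomial-extremizer fallback is unnecessary.
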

On the event \eqref{eq:star-modulus-bound}, the fixed-point argument of \Cref{prop:localised} gives
\begin{equation}
    \gL^{\VR}(\hat f_N^{\STAR})
    -
    \gL^{\VR}(f^\circ)
    \leq
    \sup
    \left\{
        r\geq0:
        r\leq\rho_\delta
    \right\}
    =
    \rho_\delta.
\end{equation}

Combining Eqn.~\eqref{eqn:star-finite-oracle} with the fact that the variance is bounded by the second moment\footnote{\(\E[(\gC g(x,a))^2]=\E_x[\Var[g(x,a)\mid x]]\leq \E[g(x,a)^2]\)} gives,
\begin{equation}
    \E_{x,a}\left[ \left( \gC\hat{f}_N^{\STAR} - \gC r^\star \right)^2 \right]
        \leq\;
        \E_{x,a}\left[ \left( 
        \hat{f}_N^{\STAR} - r^\star \right)^2 \right]
        \leq\;
        {\color{red}\varepsilon_{\rm aprx}}
        +\rho_\delta.
\end{equation}
This concludes the proof.
\qed

\subsection{\texorpdfstring{Proof of \Cref{lem:star-modulus}: Localized Modulus for the \StarEstimator in Finite Class}{Proof of Lemma E.1: Localized Modulus for the Star Estimator in Finite Class}}
To derive a uniform high-probability bound over $\StarClass$, we first construct a finite net of $\StarClass$.
For each element of the net, we apply Bernstein's inequality to control the loss difference and the squared distance from $f^\circ$.
A union bound makes these bounds hold simultaneously over the net.
We then extend them to every $s\in\StarClass$ using the approximation property of the net.
Finally, combining these bounds yields a uniform bound on the localized modulus.

\paragraph{Step 1: construct a finite net over $\StarClass$.}
We construct a finite net of the deterministic class $\StarClass$ in Eqn.~\eqref{eq:star-hull}.
Let 
\begin{equation}
    \StarNet
    \coloneq
    \left\{
        (1-\alpha)f+\alpha g:
        f,g\in\gF,\;
        \alpha\in
        \left\{
            0,\frac1{N_x},\ldots,1
        \right\}
    \right\}.
\end{equation}
Then 
$|\StarNet|\leq|\gF|^2(N_x+1)\label{eq:star-net-size},$
and for any given $s\in\StarClass$, there exist $\alpha_s\in[0,1]$ and $f_1, f_2\in \gF$ such that
$s = (1-\alpha_s)f_1 +\alpha_sf_2$.

Let $\bar\alpha\coloneq\lfloor N_x\alpha_s\rfloor/N_x$ and $h=(1-\bar\alpha)f_1 +\bar\alpha f_2$. Then since $|\alpha_s - \bar\alpha|\le 1/N_x$, 
\begin{align}
    \|s-h\|_\infty &=\|(\bar\alpha-\alpha_s)f_1+ (\alpha_s -\bar\alpha)f_2\|_\infty\\
    &\le \|(\bar\alpha-\alpha_s)f_1\|_\infty +\|(\alpha_s -\bar\alpha)f_2\|_\infty
    \le \frac{2F}{N_x}.
\end{align}
Therefore, for a given $s\in\StarClass$, there exists $h\in\StarNet$ such that
\begin{equation}
    \qquad
    \bignorm{s-h}_\infty
    \leq
    \frac{2F}{N_x}.
    \label{eq:star-net}
\end{equation}

\paragraph{Step 2: concentration bound via Bernstein's inequality.}
Fix $h\in\StarNet$ and denote
\begin{equation}
    d_h
    \coloneq
    h-f^\circ,
    \qquad
    D_h
    \coloneq
    \E_{x,a}[d_h(x,a)^2],
    \qquad
    \widehat D_h
    \coloneq
    \frac1N\sum_{i,j}d_h(x_i,a_{i,j})^2,
\end{equation}
as well as
\begin{align}
    &\ell_h(x,a)\coloneq (h(x,a)-r^\star(x,a))^2 - (f^\circ(x,a)-r^\star(x,a))^2, \\
    &\Delta_h\coloneq
    \gL^{\VR}(h)-\gL^{\VR}(f^\circ),
    \qquad
    \widehat\Delta_h
    \coloneq
    \widehat{\gL}^{\VR}(h)
    -
    \widehat{\gL}^{\VR}(f^\circ).
\end{align}
We first obtain uniform one-sided concentration bounds for $\Delta_h - \widehat\Delta_h$ and $D_h-\widehat D_h$.
For each within-block observation, define the loss difference and the corresponding block average by
\begin{equation}
    Z_{i,j}(h)
    \coloneq
    \bigl(
        h(x_i,a_{i,j})-r_{i,j}
    \bigr)^2
    -
    \bigl(
        f^\circ(x_i,a_{i,j})-r_{i,j}
    \bigr)^2,
    \quad
    Z_i(h)
    \coloneq
    \frac1{N_a}
    \sum_{j=1}^{N_a}
    Z_{i,j}(h).
\end{equation}
Then the variables $\{Z_i(h)\}_{i=1}^{N_x}$ are independent, with $\E[Z_i(h)]=\Delta_h$ and $\frac1{N_x}\sum_iZ_i(h)=\widehat\Delta_h$.
Since every function in $\StarClass$ and $f^\circ$ is bounded by $F$ and $|r_{i,j}|\le {\color{red}\Rmax}$,
\begin{equation}
    |Z_{i,j}(h)|
    \leq
    {F(F+4{\color{red}\Rmax})}.
    \label{eq:star-range}
\end{equation}
Moreover, 
    $\ell_h(x,a)
    =
    d_h(x,a)
    \bigl(
        h(x,a)+f^\circ(x,a)-2r^\star(x,a)
    \bigr),$
    so
\begin{equation}
    \ell_h(x,a)^2
    \leq
    4(F+{\color{red}\Rmax})^2d_h(x,a)^2.
    \label{eq:star-ell-square}
\end{equation}
Since 
$Z_{i,j}(h)=\ell_h(x_i, a_{i,j})-2d_h(x_i,a_{i,j})\eta_{i,j}$,
the conditional mean-zero property of the reward noise gives
\begin{equation}
    \E[Z_{i,j}(h)^2]
    \leq
    4\bigl(
        (F+{\color{red}\Rmax})^2+\sigma^2
    \bigr)D_h.
    \label{eq:star-single-second-moment}
\end{equation}
Let $m_h(x)\coloneq\E_{a\sim\piref(\cdot\mid x)}[\ell_h(x,a)]$.
By Jensen's inequality and Eqn.~\eqref{eq:star-ell-square},
\begin{align}
    \Var(m_h(X)) &\le \E_X[m_h(X)^2] = \E_X[(\E_a[\ell_h(X, a)])^2] \nonumber\\
    &\le \E_X[\E_a[\ell_h(X,a)^2]] \le 4(F+{\color{red}\Rmax})^2\E_{X,a}[d_h (X,a)^2] \nonumber\\
    &\le 4(F+{\color{red}\Rmax})^2D_h
\end{align}
Conditional on $x_i$, the variables
$\{Z_{i,j}(h)\}_{j=1}^{N_a}$ are independent with common
conditional mean $m_h(x_i)$.  Therefore, the law of total variance gives
\begin{align}
    \Var(Z_{1,1}(h))&=\Var(\E[Z_{1,1}(h)|X]) + \E[\Var(Z_{1,1}(h)|X)] \nonumber\\
    &=\Var(m_h(X)) + \E[\Var(Z_{1,1}(h)|X)]
    \label{eq:star-u1-var}
\end{align}
Applying the law of total variance again and using Eqn.~\eqref{eq:star-u1-var},
\begin{align}
    \Var(Z_i(h))&=\Var(\E[Z_i(h)|X]) + \E[\Var(Z_i(h)|X)]\nonumber \\
    &=\Var(m_h(X)) + \frac{1}{N_a}\E\left[\Var(Z_{1,1}(h)\middle| X)\right]\nonumber\\
    &= \left(1-\frac{1}{N_a}\right)\Var(m_h(X)) + \frac{1}{N_a}\Var(Z_{1,1}(h))\nonumber\\
    &\le 4\left((F+{\color{red}\Rmax})^2+\frac{\sigma^2}{N_a}\right)D_h.
    \label{eq:star-block-variance}
\end{align}
Define
\begin{equation}
    a
    \coloneq
    t_x
    \left(
        \frac{(F+{\color{red}\Rmax})^2}{N_x}
        +
        \frac{\sigma^2}{N}
    \right),
    \qquad
    b
    \coloneq
    \frac{t_x}{N_x}.
    \label{eq:star-ab}
\end{equation}
By one-sided Bernstein's inequality, Eqn.~\eqref{eq:star-range}, and a union bound over $h\in\StarNet$, with probability at least $1-\delta/2$, simultaneously for all $h\in\StarNet$,
\begin{equation}
    \Delta_h-\widehat\Delta_h
    \leq
    \sqrt{8aD_h}
    +
    \frac{
        {2F(F+4{\color{red}\Rmax})}b
    }{
        3
    }.
    \label{eq:star-loss-concentration}
\end{equation}

We also require a one-sided comparison between the population and empirical squared distances.
For each block, let
\begin{equation}
    V_i(h)
    \coloneq
    \frac1{N_a}
    \sum_{j=1}^{N_a}
    d_h(x_i,a_{i,j})^2.
\end{equation}
Since $|d_h|\le 2F$,
$0\leq V_i(h)\leq4F^2,\;
    \E[V_i(h)]=D_h,$ and $\Var(V_i(h))
    \leq
    4F^2D_h$.
    
A second one-sided Bernstein inequality and union bound therefore give, with probability at least $1-\delta/2$, simultaneously for all $h\in\StarNet$,
\begin{equation}
    D_h-\widehat D_h
    \leq
    \sqrt{8F^2bD_h}
    +
    \frac{4F^2b}{3}.
    \label{eq:star-distance-concentration}
\end{equation}

\paragraph{Step 3: upper bound on the localized modulus.}
We then work on the intersection of Eqn.~\eqref{eq:star-loss-concentration} and Eqn.~\eqref{eq:star-distance-concentration}, which has probability at least $1-\delta$.

Let $s\in\StarClass$ be arbitrary and choose $h\in\StarNet$, whose existence is proved in Eqn.~\eqref{eq:star-net}.
Since $s, h, f^\circ$ are all bounded by $F$, while $|r^\star| ,|r_{i,j}|\le {\color{red}\Rmax}$, we have:
\begin{align}
    |\Delta(s-f^\circ;f^\circ)-\Delta_h|
    &\leq
    \frac{4F(F+{\color{red}\Rmax})}{N_x},
    \label{eq:star-net-pop} \\
    \left|
        \widehat\Delta(s-f^\circ;f^\circ)
        -
        \widehat\Delta_h
    \right|
    &\leq
    \frac{4F(F+{\color{red}\Rmax})}{N_x},
    \label{eq:star-net-emp} \\
    \left|
        \widehat Q(s-f^\circ)
        -
        \widehat D_h
    \right|
    &\leq
    \frac{8F^2}{N_x}.
    \label{eq:star-net-distance}
\end{align}
Then, with $c_0\coloneq\frac1{18}$, combining
Eqn.~\eqref{eq:star-loss-concentration}--~\eqref{eq:star-net-distance} gives the following:
for every $s\in\StarClass$,
\begin{align}
    &
    \Delta(s-f^\circ;f^\circ)
    -
    \widehat\Delta(s-f^\circ;f^\circ)
    -
    c_0\widehat Q(s-f^\circ)
    \notag\\
    &\leq
    -c_0D_h
    +
    \left(
        \sqrt{8a}
        +
        c_0\sqrt{8F^2b}
    \right)
    \sqrt{D_h}
    +
    \frac{4c_0F^2b}{3}
    +
    \frac{2F(F+4{\color{red}\Rmax})b}{3}
    \notag\\
    &\qquad+
    \frac{
        8F(F+{\color{red}\Rmax})+8c_0F^2
    }{
        N_x
    }.
    \label{eq:star-offset-before-max}
\end{align}
The right-hand side is now independent of the population excess loss of $s$. Maximizing its first two terms over $\sqrt{D_h}\ge 0$ and using $c_0=1/18$ gives
\begin{equation}
    \sup_{D_h\geq0}
    \left\{
        -c_0D_h
        +
        \left(
            \sqrt{8a}
            +
            c_0\sqrt{8F^2b}
        \right)
        \sqrt{D_h}
    \right\}
    =
    36a+\frac19F^2b+4F\sqrt{ab}.
\end{equation}
Using $4F\sqrt{ab}\le 2a+2F^2b$ and $t_x>1$, we obtain from Eqn.~\eqref{eq:star-offset-before-max}
\begin{align}
    &
    \Delta(s-f^\circ;f^\circ)
    -
    \widehat\Delta(s-f^\circ;f^\circ)
    -
    c_0\widehat Q(s-f^\circ)
    \notag\\
    &\leq
    38t_x
    \left(
        \frac{(F+{\color{red}\Rmax})^2}{N_x}
        +
        \frac{\sigma^2}{N}
    \right)
    +
    \left(
        \frac{59}{27}F^2
        +
        \frac{2F(F+4{\color{red}\Rmax})}{3}
    \right)
    \frac{t_x}{N_x}
    \notag\\
    &\qquad+
    \left(
        \frac49F^2
        +
        8F(F+{\color{red}\Rmax})
    \right)
    \frac{t_x}{N_x}.
    \label{eq:star-offset-expanded}
\end{align}
Expanding $(F+{\color{red}\Rmax})^2$ and collecting terms yields exactly
\begin{align}
    &
    \Delta(s-f^\circ;f^\circ)
    -
    \widehat\Delta(s-f^\circ;f^\circ)
    -
    \frac1{18}\widehat Q(s-f^\circ)
    \notag\\
    &\leq
    \underbrace{\left(
        \frac{1331}{27}F^2
        +
        \frac{260}{3}F{\color{red}\Rmax}
        +
        38{\color{red}\Rmax}^2
    \right)
    \frac{t_x}{N_x}+
        38\sigma^2
    \frac{t_x}{N}}_{=\rho_\delta}.
    \label{eq:star-offset-final}
\end{align}
Since Eqn.~\eqref{eq:star-offset-final} holds for every $s\in\StarClass$,
restricting to
$\Delta(s-f^\circ;f^\circ)\leq r$
and taking the supremum in
Eqn.~\eqref{eq:star-modulus} gives
\begin{equation}
    \Psi^{\STAR}(r)
    \leq
    \rho_\delta,
    \qquad
    \forall r\geq0.
\end{equation}
This proves the lemma.
\qed

%% file: 905Linear.tex
\section{\texorpdfstring{Deferred Proofs from \Cref{sec:linear}: Linear Class}{Deferred Proofs from Section 4: Linear Class}}
\label{app:linear}

\subsection{\texorpdfstring{\ValueRegression{} (\VR)}{Value Regression (VR)}}

\subsubsection{\texorpdfstring{Proof of \Cref{thm:vr-linear}: \VR Bound for Linear $\gF_{\rm lin}$}{Proof of Theorem 4.1: VR Bound for a Linear Function Class}}
\label{app:proof-thm-linear-vr}

Write
$f^\circ_{\VR}(x,a)=f^\circ(x,a)=\langle\vphi(x,a),\vtheta^\circ\rangle$
for the population least-squares projection defined in
\Cref{asm:linear-vr-design}, where
\begin{equation}
    \vtheta^\circ \in \argmin_{\vtheta \in \sR^d} \E_{x,a}\left[ \left( \langle\vphi(x,a),\vtheta\rangle - r^\star(x, a) \right)^2 \right].
\end{equation}

Throughout this subsection, $\gL^{\VR}$ and $\widehat{\gL}^{\VR}$ denote
the generic losses in \Cref{sec:localised} instantiated with
$\gT=\gT^{\VR}$. For $h=f-f^\circ_{\VR}$, the corresponding population
and empirical excess losses are
\begin{align}
    \Delta(h;f^\circ_{\VR})
    &\coloneq
    \gL^{\VR}(f^\circ_{\VR}+h)-\gL^{\VR}(f^\circ_{\VR}) \\
    &=
    \E_{x,a}\left[
        (f^\circ_{\VR}+h-r^\star)^2
        -(f^\circ_{\VR}-r^\star)^2
    \right],
\end{align}
and
\begin{align}
    \widehat\Delta(h;f^\circ_{\VR})
    &\coloneq
    \widehat{\gL}^{\VR}(f^\circ_{\VR}+h)
    -\widehat{\gL}^{\VR}(f^\circ_{\VR}) \\
    &=
    \frac1{N_xN_a}\sum_{i=1}^{N_x}\sum_{j=1}^{N_a}
    \left[
        \bigl(f^\circ_{\VR}(x_i,a_{i,j})+h(x_i,a_{i,j})-r_{i,j}\bigr)^2
        -
        \bigl(f^\circ_{\VR}(x_i,a_{i,j})-r_{i,j}\bigr)^2
    \right].
    \label{eq:linear-vr-Deltahat-recall}
\end{align}
The empirical quadratic term in the localized modulus is
\begin{equation}
    \widehat Q(h)
    =
    \widehat Q^{\VR}(h)
    \coloneq
    \frac1{N_xN_a}
    \sum_{i=1}^{N_x}\sum_{j=1}^{N_a}h(x_i,a_{i,j})^2.
    \label{eq:linear-vr-Qhat-recall}
\end{equation}
Thus, for this proof,
\begin{equation}
    \Psi(r;1,f^\circ_{\VR})
    =
    \sup_{\substack{
        h\in\gF_{\rm lin}-f^\circ_{\VR}\\
        \Delta(h;f^\circ_{\VR})\leq r
    }}
    \left[
        \Delta(h;f^\circ_{\VR})
        -\widehat\Delta(h;f^\circ_{\VR})
        -\widehat Q^{\VR}(h)
    \right].
    \label{eq:linear-vr-Psi-recall}
\end{equation}

For the proof, let us denote ${\color{red}\varepsilon_{\mathrm{blk}}}
\coloneq
{\color{red}\varepsilon_{\mathrm{x}}}
+
\frac{{\color{red}\varepsilon_{\mathrm{a}}}}{N_a}$, $t_\delta\coloneq\log\frac{3}{\delta}$, and
\begin{align}
\Gamma_\delta
\coloneq{}&
3\sqrt{\frac{d\,{\color{red}\varepsilon_{\mathrm{blk}}}}{N_x}}
    \left(1+\sqrt{8t_\delta}\right)
+\frac{4\sqrt{{\color{red}\varepsilon_{\mathrm{z}}}d}\,t_\delta}{N_x}
+\sqrt{\frac{3\sigma^2d}{N}}
    \left(1+\sqrt{8t_\delta}\right)
+\frac{8{\color{red}\Rmax}{\color{blue}\alpha_0}\sqrt d\,t_\delta}{N}.
\label{eq:vr-linear-Gamma}
\end{align}

We now state the following lemma, whose proof is deferred to \Cref{app:proof-lem-linear-vr-modulus}, that bounds the localized modulus of \VR in $\gF_{\rm lin}$:
\begin{lemma}[Localized modulus for linear \VR]
\label{lem:linear-vr-modulus}
Suppose the assumptions of \Cref{thm:vr-linear} hold.
Then, with probability at least $1-\delta$,
\begin{equation}
    \Psi(r;1,f^\circ_{\VR})
    \leq
    \frac23r+\frac13\Gamma_\delta^2,
    \qquad
    \forall r\geq0.
    \label{eq:linear-vr-modulus}
\end{equation}
\end{lemma}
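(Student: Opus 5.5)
The plan is to exploit the linear structure and the normal equations so that the modulus reduces to a quadratic form in one whitened parameter vector. Write $h=\langle\vphi,\vu\rangle$ and $\vv\coloneq\mSigma^{1/2}\vu$, so that $h(x,a)=\langle\vz(x,a),\vv\rangle$. Then $\E[h^2]=\bignorm{\vv}_2^2$. The population normal equation for $\vtheta^\circ$ gives $\E[{\color{red}e^\circ}\vz]=\vzero$, so the cross term vanishes and $\Delta(h;f^\circ_{\VR})=\bignorm{\vv}_2^2$. On the empirical side, $f^\circ_{\VR}(x_i,a_{i,j})-r_{i,j}={\color{red}e^\circ}_{i,j}-\eta_{i,j}$. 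Hence
\begin{equation}
    \widehat\Delta(h;f^\circ_{\VR})=\vv^\top\widehat\mG\vv+2\langle\vv,\vw\rangle,
    \qquad
    \widehat\mG\coloneq\frac1N\sum_{i,j}\vz_{i,j}\vz_{i,j}^\top,
    \qquad
    \vw\coloneq\frac1N\sum_{i,j}\vz_{i,j}\bigl({\color{red}e^\circ}_{i,j}-\eta_{i,j}\bigr).
\end{equation}
Since $\widehat Q^{\VR}(h)=\vv^\top\widehat\mG\vv$, the bracket in Eqn.~\eqref{eq:linear-vr-Psi-recall} equals $\bignorm{\vv}_2^2-2\vv^\top\widehat\mG\vv-2\langle\vv,\vw\rangle$.

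\textbf{Step 1 (lower isometry).} I will show $\widehat\mG\succeq\frac12\mI_d$ with probability at least $1-\delta/3$. Note that $\widehat\mG=\frac1{N_x}\sum_i\mG_i$, where $\mG_i$ are i.i.d.\ copies of ${\color{orange}\mG_{N_a}}$ with mean $\mI_d$. Apply the one-sided matrix Bernstein inequality to $\mI_d-\mG_i$. The key point is that $\lambda_{\max}(\mI_d-\mG_i)\leq1$ because $\mG_i\succeq\vzero$, so only the variance $\bignormop{{\color{orange}\Cov(\mG_{N_a})}}$ enters and no dependence on $\alpha_0$ appears. This yields the deviation bound $\sqrt{2\bignormop{\Cov}\log(3d/\delta)/N_x}+\log(3d/\delta)/(3N_x)$. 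The sample-size condition makes this at most $\frac13+\frac16=\frac12$. On this event, $\bignorm{\vv}^2-2\vv^\top\widehat\mG\vv\leq0$. Then Young's inequality gives $-2\langle\vv,\vw\rangle\leq\frac23\bignorm{\vv}^2+\frac32\bignorm{\vw}^2$. Restricting to $\bignorm{\vv}^2=\Delta\leq r$, it remains to show $\frac32\bignorm{\vw}_2^2\leq\frac13\Gamma_\delta^2$, i.e.\ $\bignorm{\vw}_2\lesssim\Gamma_\delta$ with the stated constants.

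\textbf{Step 2 (misspecification vector).} Set $\xi_i\coloneq\frac1{N_a}\sum_j{\color{red}e^\circ}_{i,j}\vz_{i,j}$. These block vectors are i.i.d.\ with mean zero by the normal equation. By the law of total variance, $\E\bignorm{\xi_i}^2=d({\color{red}\varepsilon_{\mathrm{x}}}+{\color{red}\varepsilon_{\mathrm{a}}}/N_a)=d\,{\color{red}\varepsilon_{\mathrm{blk}}}$, and $\bignorm{\xi_i}\leq\sqrt{{\color{red}\varepsilon_{\mathrm{z}}}d}$ almost surely. A vector Bernstein inequality over the $N_x$ blocks (e.g.\ \citet{hsu2012bernstein}) then gives, with probability $1-\delta/3$, $\bignorm{\frac1{N_x}\sum_i\xi_i}\lesssim\sqrt{d{\color{red}\varepsilon_{\mathrm{blk}}}/N_x}(1+\sqrt{8t_\delta})+\sqrt{{\color{red}\varepsilon_{\mathrm{z}}}d}\,t_\delta/N_x$. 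These are the first two terms of $\Gamma_\delta$.

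\textbf{Step 3 (noise vector).} Conditional on $\gZ$, the summands $\vz_{i,j}\eta_{i,j}/N$ are independent and mean zero, with total variance at most $\sigma^2\tr(\widehat\mG)/N$ and range at most $2\Rmax\alpha_0\sqrt d/N$. I also need an upper bound $\tr(\widehat\mG)\leq3d$. I will obtain it either on the same matrix event (via the upper tail) or through a scalar Bernstein bound using $\E\tr\widehat\mG=d$, charging the probability budget accordingly. Conditional vector Bernstein then gives the last two terms $\sqrt{3\sigma^2d/N}(1+\sqrt{8t_\delta})+8\Rmax\alpha_0\sqrt d\,t_\delta/N$.

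A union bound over the three events and the triangle inequality yield $\bignorm{\vw}\leq\Gamma_\delta$ up to the constant bookkeeping needed for the $\frac13$ factor. I expect the main obstacle to be exactly this bookkeeping: fitting the trace control into the $\log(3/\delta)$ budget alongside the isometry event, and tuning the Young split so the final constants $\frac23$ and $\frac13$ come out. If $\frac32\bignorm{\vw}^2$ overshoots $\frac13\Gamma_\delta^2$, I would first try sharpening the isometry to $\widehat\mG\succeq\frac23\mI_d$, which leaves a spare $-\frac13\bignorm{\vv}^2$ to absorb part of the cross term.
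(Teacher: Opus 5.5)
\textbf{There is a genuine gap: your route cannot deliver the lemma's constants $\frac23$ and $\frac13\Gamma_\delta^2$.} Your reduction to whitened coordinates, the bracket $\bignorm{\vv}_2^2-2\vv^\top\widehat{\mG}\vv-2\langle\vv,\vw\rangle$, and the three events (lower isometry, block vector Bernstein for the misspecification score, conditional vector Bernstein for the noise) all match the paper. The problem is how you absorb the linear term. Going through Euclidean norms fails on the constants, and it also needs a trace bound that the hypotheses do not supply.

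\emph{Step 1 overstates the isometry.} The tail form of the one-sided matrix Bernstein inequality inverts to $\sqrt{2v\ell}+\frac{2L\ell}{3}$. Your $\sqrt{2v\ell}+\frac{L\ell}{3}$ undershoots the root of $t^2/2=(v+Lt/3)\ell$.
\begin{itemize}
\item Under $N_x\geq\max\{18\bignormop{\Cov(\mG_{N_a})},2\}\log\frac{3d}{\delta}$, this certifies only $\widehat{\mG}\succeq\frac13\mI_d$ (about $0.46\,\mI_d$ with the exact root).
\item So $\bignorm{\vv}_2^2-2\vv^\top\widehat{\mG}\vv\leq 0$ is not available, and neither is your fallback $\widehat{\mG}\succeq\frac23\mI_d$.
\end{itemize}

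\emph{The Euclidean split cannot meet the target constant.} Suppose $\widehat{\mG}\succeq c\,\mI_d$.
\begin{itemize}
\item The best Euclidean split gives bracket $\leq\frac23\bignorm{\vv}_2^2+\bignorm{\vw}_2^2/(2c-\frac13)$. You therefore need $\bignorm{\vw}_2\leq\sqrt{(2c-\frac13)/3}\,\Gamma_\delta$, which is $\frac{\sqrt2}{3}\Gamma_\delta$ even at your $c=\frac12$.
\item Your Step 3 bound reproduces the $\sigma$-term $\sqrt{3\sigma^2 d/N}\,(1+\sqrt{8t_\delta})$ of $\Gamma_\delta$ with coefficient one. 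Even $\tr\widehat{\mG}\leq d$ would only lower that coefficient to $1/\sqrt3>\frac{\sqrt2}{3}$.
\item So in the noise-dominated regime your split yields about $\frac32\Gamma_\delta^2$ where $\frac13\Gamma_\delta^2$ is claimed.
\item The bound $\tr\widehat{\mG}\leq 3d$ also does not follow from the isometry event, which controls only $\lambda_{\max}(\mI_d-\widehat{\mG})$. An upper-tail Bernstein bound brings in the range $\alpha_0^2 d$, and the theorem's sample-size condition (which involves only $\bignormop{\Cov(\mG_{N_a})}$) does not control it.
\end{itemize}
At best, your plan gives a version of the lemma with a larger constant in front of $\Gamma_\delta^2$ under an extra condition. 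That suffices for the $\widetilde{\gO}$ form of \Cref{thm:vr-linear}, but not for the lemma as stated.

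\textbf{The missing idea is to self-normalize instead of working in $\bignorm{\cdot}_2$.}
\begin{itemize}
\item Keep one copy of $-\vv^\top\widehat{\mG}\vv$ and apply Cauchy--Schwarz in the $\widehat{\mG}$-geometry: $-2\langle\vv,\vw\rangle\leq\bignorm{\vv}_{\widehat{\mG}}^2+\bignorm{\vw}_{\widehat{\mG}^{-1}}^2$. On $\widehat{\mG}\succeq\frac13\mI_d$ the bracket is then at most $\bignorm{\vv}_2^2-\vv^\top\widehat{\mG}\vv+\bignorm{\vw}_{\widehat{\mG}^{-1}}^2\leq\frac23\bignorm{\vv}_2^2+\bignorm{\vw}_{\widehat{\mG}^{-1}}^2$.
\item For the noise part, apply conditional vector Bernstein to $\frac1N\widehat{\mG}^{-1/2}\vz_{i,j}\eta_{i,j}$. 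Its variance proxy is $\frac{\sigma^2}{N^2}\tr\bigl(\widehat{\mG}^{-1}\cdot N\widehat{\mG}\bigr)=\frac{\sigma^2 d}{N}$ exactly, so no trace control is needed. Its range is at most $2\Rmax\alpha_0\sqrt{3d}/N$, since $\bignormop{\widehat{\mG}^{-1/2}}\leq\sqrt3$. This yields $B_\eta=\sigma\sqrt{d/N}\,(1+\sqrt{8t_\delta})+\frac{8\Rmax\alpha_0\sqrt{3d}\,t_\delta}{3N}$.
\item For the misspecification part, use $\bignorm{\cdot}_{\widehat{\mG}^{-1}}\leq\sqrt3\bignorm{\cdot}_2$ together with your Step 2 bound $B_{\rm mis}$, taking the Bernstein range constant $\frac43$.
\item Combining, $\bignorm{\vw}_{\widehat{\mG}^{-1}}\leq\sqrt3 B_{\rm mis}+B_\eta=\Gamma_\delta/\sqrt3$. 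This is exactly where the constants $3$, $4$, $\sqrt3$, $8$ in $\Gamma_\delta$ come from, and it gives the stated $\frac23 r+\frac13\Gamma_\delta^2$.
\end{itemize}
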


The proof of \Cref{thm:vr-linear} then proceeds using the same recipe.

Let
\begin{equation}
    \hat{\vtheta}_N^{\VR}
    \in
    \argmin_{\vtheta\in\sR^d}
    \frac1{N_x}\sum_{i=1}^{N_x}\frac1{N_a}
    \sum_{j=1}^{N_a}
    \left(
        \langle\vtheta,\vphi(x_i,a_{i,j})\rangle-r_{i,j}
    \right)^2.
\end{equation}
Let $\widehat h_{\VR}\coloneq\hat f_N^{\VR}-f^\circ_{\VR}$. The OLS
normal equation gives
\begin{equation}
    \widehat\Delta(\widehat h_{\VR};f^\circ_{\VR})
    =
    \widehat{\gL}^{\VR}(\hat f_N^{\VR})
    -\widehat{\gL}^{\VR}(f^\circ_{\VR})
    =
    -\widehat Q^{\VR}(\widehat h_{\VR}).
\end{equation}
Thus condition~$(i)$ of \Cref{prop:localised} holds with $\lambda=1$.
By \Cref{lem:linear-vr-modulus}, condition~$(ii)$ holds on an event of
probability at least $1-\delta$. On this event,
\begin{equation}
    \gL^{\VR}(\hat f_N^{\VR})-\gL^{\VR}(f^\circ_{\VR})
    \leq
    \sup\left\{
        r\geq0:
        r\leq\frac23r+\frac13\Gamma_\delta^2
    \right\}
    =
    \Gamma_\delta^2.
\end{equation}
Since
$\gL^{\VR}(f^\circ_{\VR})={\color{red}\varepsilon_{\rm aprx}}$
and we have \(\E[(\gC g(x,a))^2]=\E_x[\Var[g(x,a)\mid x]]\leq \E[g(x,a)^2]\), the original argument gives
\begin{equation}
    \E_{x,a}\left[
        \bigl(\gC\hat f_N^{\VR}-\gC r^\star\bigr)^2
    \right]
    \leq
    \E_{x,a}\left[
        \bigl(\hat f_N^{\VR}-r^\star\bigr)^2
    \right]
    \leq
    {\color{red}\varepsilon_{\rm aprx}}+\Gamma_\delta^2.
    \label{eq:linear-vr-contraction-bound}
\end{equation}
Finally, since
${\color{red}\varepsilon_{\mathrm{blk}}}
={\color{red}\varepsilon_{\mathrm{x}}}
+{\color{red}\varepsilon_{\mathrm{a}}}/N_a$ and $N=N_xN_a$, we have that
\begin{equation}
    \Gamma_\delta^2
    =
    \widetilde{\gO}\left(
        \frac{{\color{red}\varepsilon_{\mathrm{x}}}d}{N_x}
        +\frac{{\color{red}\varepsilon_{\mathrm{z}}}d}{N_x^2}
        +\frac{({\color{red}\varepsilon_{\mathrm{a}}}+\sigma^2)d}{N}
        +\frac{{\color{red}\Rmax^2}{\color{blue}\alpha_0^2}d}{N^2}
    \right).
    \label{eq:linear-vr-Gamma-rate}
\end{equation}
Combining Eqns.~\eqref{eq:linear-vr-contraction-bound}--\eqref{eq:linear-vr-Gamma-rate}
proves \Cref{thm:vr-linear}.
\qed

\subsubsection{\texorpdfstring{Proof of \Cref{lem:linear-vr-modulus}: Localized Modulus for Linear \VR}{Proof of Lemma F.1: Localized Modulus for Linear VR}}
\label{app:proof-lem-linear-vr-modulus}

Write
\begin{equation}
    \vq(x,a)\coloneq {\color{red}e^\circ}(x,a)\vz(x,a),
    \qquad
    \vm(x)\coloneq
    \E_{A\sim\piref(\cdot\mid x)}[\vq(x,A)].
\end{equation}
Since \(\vtheta^\circ\) is the population least-squares projection over
\(\sR^d\), its normal equation gives
\begin{equation}
    \E_{x,a}[\vphi(x,a){\color{red}e^\circ}(x,a)]
    =
    \vzero,
    \qquad\text{and hence,}\qquad
    \E_X[\vm(X)]=\vzero.
    \label{eq:linear-vr-population-normal-equation}
\end{equation}

For the observed dataset, we define the following random quantities: for each $i \in [N_x]$,
\begin{equation}
    \mG_i
    \coloneq
    \frac1{N_a}\sum_{j=1}^{N_a}
    \vz(x_i,a_{i,j})\vz(x_i,a_{i,j})^\top,
    \qquad
    \vq_i
    \coloneq
    \frac1{N_a}\sum_{j=1}^{N_a}
    \vq(x_i,a_{i,j}),
\end{equation}
and ``score vectors'' averaged over $i \in [N_x]$:
\begin{equation}
\label{eq:linear-vr-scores}
    \mH
    \coloneq
    \frac1{N_x}\sum_{i=1}^{N_x}\mG_i,
    \qquad
    \vt_{\rm mis}
    \coloneq
    \frac1{N_x}\sum_{i=1}^{N_x}\vq_i,
    \qquad
    \vt_\eta
    \coloneq
    \frac1N\sum_{i=1}^{N_x}\sum_{j=1}^{N_a}
    \vz(x_i,a_{i,j})\eta_{i,j}.
\end{equation}

We now express $\Delta$, $\widehat\Delta$, and $\widehat Q^{\VR}$ in the whitened coordinates. Fix
\(h=f-f^\circ_{\VR}\in\gF_{\rm lin}-f^\circ_{\VR}\), and write
\begin{equation}
    \vw\coloneq\mSigma^{1/2}(\vtheta-\vtheta^\circ),
    \qquad
    h(x,a)=\vz(x,a)^\top \vw.
\end{equation}
The population normal equation gives
\begin{equation}
    \Delta(h;f^\circ_{\VR})
    =
    \E_{x,a}[h(x,a)^2]
    =
    \bignorm{\vw}_2^2.
    \label{eq:linear-vr-excess-is-quadratic}
\end{equation}
Moreover, \(\widehat Q^{\VR}(h)=\vw^\top\mH \vw\), and direct expansion of
the empirical excess loss gives
\begin{equation}
    \widehat\Delta(h;f^\circ_{\VR})
    =
    \vw^\top\mH \vw
    +
    2\vw^\top \vt_{\rm mis}
    -
    2\vw^\top \vt_\eta.
\end{equation}
Consequently, the process to be controlled is
\begin{equation}
    \Delta(h;f^\circ_{\VR})
    -
    \widehat\Delta(h;f^\circ_{\VR})
    -
    \widehat Q^{\VR}(h)
    =
    \bignorm{\vw}_2^2
    -
    2\vw^\top\mH \vw
    +
    2\vw^\top(\vt_\eta-\vt_{\rm mis}).
    \label{eq:linear-vr-modulus-process}
\end{equation}
It is therefore enough to show, uniformly over $h$, that the
right-hand side of Eqn.~\eqref{eq:linear-vr-modulus-process} is at most
$2\Delta(h;f^\circ_{\VR})/3+\Gamma_\delta^2/3$; restricting this bound to
$\Delta(h;f^\circ_{\VR})\leq r$ concludes the proof.

We now prove the uniform bound through a lower-isometry event for \(\mH\) and concentration of the two score vectors, $\vt_{\rm mis}$ and $\vt_\eta$.
We use the vector Bernstein inequality of
\citet[Proposition 1.2]{hsu2012bernstein}: if independent\footnote{The original statement holds for martingale difference vector sequences; for our purpose, it suffices to apply it to independent vector sequences, as done in \citet{hsu2014random}.} mean-zero random
vectors \(\vy_k\) satisfy
\(\sum_k\E\bignorm{\vy_k}_2^2\leq v\) and
\(\bignorm{\vy_k}_2\leq L\) almost surely, then, with probability at
least \(1-e^{-t}\),
\begin{equation}
    \bignorm{\textstyle\sum_k\vy_k}_2
    \leq
    \sqrt v\left(1+\sqrt{8t}\right)
    +
    \frac43Lt.
    \label{eq:linear-vr-vector-bernstein}
\end{equation}
This lets us avoid a separate covering argument.

Recall from \Cref{subsec:linear_vr} that $\mG_{N_a} = \frac{1}{N_a}\sum_{j=1}^{N_a} \vz(X, A_j) \vz(X, A_j)^\top$ where $\vz(x,a) = \mSigma^{-1/2} \vphi(x,a)$ and $\mSigma = \E_{x,a}[\vphi(x,a)\vphi(x,a)^\top]$.

\paragraph{Step 1: lower isometry of the empirical Gram matrix.}
The matrices $\mG_i$ are i.i.d. copies of $\mG_{N_a}$ and satisfy $\E[\mG_i]=\mI_{d}$.
Moreover, $\mG_i\succeq\vzero$, and hence
$\lambda_{\max}(\mI_d-\mG_i)\leq1$.

The one-sided matrix Bernstein inequality
\citep[Theorem 6.6.1]{tropp2015survey} therefore gives, with probability at least
\(1-\delta/3\),
\begin{align}
    \lambda_{\max}(\mI_d-\mH)
    \leq
    \sqrt{
        \frac{
            2\bignormop{\Cov(\mG_{N_a})}
            \log(3d/\delta)
        }{N_x}
    }
    +
    \frac{2\log(3d/\delta)}{3N_x}
    \leq
    \frac23,
\end{align}
where the last step uses the sample-size condition from \Cref{thm:vr-linear}.
Thus, we have\footnote{This is precisely the lower isometry condition of \citet[Definition 5]{liang15learning}.}
\begin{equation}
    \mH\succeq\frac13\mI_d.
    \label{eq:linear-vr-lower-isometry}
\end{equation}

\paragraph{Step 2: control of the misspecification score.}
The blocks are independent, and
Eqn.~\eqref{eq:linear-vr-population-normal-equation} implies
\(\E[\vq_i]=\vzero\).
Moreover, by conditioning on the context,
\begin{equation}
    \E\left[\bignorm{\vq_i}_2^2\right]
    =
    d\left(
        {\color{red}\varepsilon_{\mathrm{x}}}
        +
        \frac{{\color{red}\varepsilon_{\mathrm{a}}}}{N_a}
    \right)
    =
    d{\color{red}\varepsilon_{\mathrm{blk}}}.
    \label{eq:linear-vr-block-second-moment}
\end{equation}
Assumption~\ref{asm:linear-vr-design} gives
\(\bignorm{\vq_i}_2\leq
    \sqrt{{\color{red}\varepsilon_{\mathrm{z}}}d}\).
Applying Eqn.~\eqref{eq:linear-vr-vector-bernstein} to the
independent block averages and using
Eqn.~\eqref{eq:linear-vr-block-second-moment}, we obtain, with
probability at least \(1-\delta/3\),
\begin{equation}
    \bignorm{\vt_{\rm mis}}_2
    \leq
    \underbrace{
    \sqrt{\frac{d{\color{red}\varepsilon_{\mathrm{blk}}}}{N_x}}
       \left(1+\sqrt{8t_\delta}\right)
    +
    \frac{4\sqrt{{\color{red}\varepsilon_{\mathrm{z}}}d}\,t_\delta}{3N_x}
    }_{\eqqcolon B_{\rm mis}}.
    \label{eq:linear-vr-mis-score}
\end{equation}

\paragraph{Step 3: control of the reward-noise score.}
On the event in Eqn.~\eqref{eq:linear-vr-lower-isometry}, condition on
the complete context--action design $\{(x_i,a_{i,j})\}_{i,j}$.
The vectors
\begin{equation}
    \vy_{i,j}
    \coloneq
    \frac1N \mH^{-1/2}\vz(x_i,a_{i,j})\eta_{i,j}
\end{equation}
are conditionally independent and mean zero. Since
\(\sum_{i,j}\vz_{i,j}\vz_{i,j}^\top=N\mH\),
\begin{align}
    \sum_{i,j}
    \E\left[
        \bignorm{\vy_{i,j}}_2^2
        \,\middle|\,
        \{(x_i,a_{i,j})\}_{i,j}
    \right]
    &\leq
    \frac{\sigma^2}{N^2}
    \tr\left(
        \mH^{-1}\sum_{i,j}
        \vz_{i,j}\vz_{i,j}^\top
    \right)
    =
    \frac{\sigma^2d}{N},
    \label{eq:linear-vr-noise-second-moment}
\end{align}
where we denote \(\vz_{i,j}\coloneq\vz(x_i,a_{i,j})\).
Also, using \(|\eta_{i,j}|\leq2{\color{red}\Rmax}\),
Assumption~\ref{asm:linear-vr-design}, and
Eqn.~\eqref{eq:linear-vr-lower-isometry},
\begin{equation}
    \bignorm{\vy_{i,j}}_2
    \leq
    \frac{
        2{\color{red}\Rmax}{\color{blue}\alpha_0}\sqrt{3d}
    }{N}.
\end{equation}
A conditional application of
Eqn.~\eqref{eq:linear-vr-vector-bernstein} therefore yields, with
conditional probability at least \(1-\delta/3\),
\begin{equation}
    \bignorm{\vt_\eta}_{\mH^{-1}}
    \leq
    \underbrace{
    \sigma\sqrt{\frac dN}
       \left(1+\sqrt{8t_\delta}\right)
    +
    \frac{
        8{\color{red}\Rmax}{\color{blue}\alpha_0}\sqrt{3d}\,t_\delta
    }{3N}
    }_{\eqqcolon B_\eta}.
    \label{eq:linear-vr-noise-score}
\end{equation}
Here
\(\bignorm{\vv}_{\mH^{-1}}\coloneq\sqrt{\vv^\top \mH^{-1}\vv}\).
The conditional failure probability remains at most
\(\delta/3\) after averaging over the design. Hence, by a union
bound, the three events above hold simultaneously with probability
at least \(1-\delta\).

\paragraph{Step 4: combining the three events.}
Let
\(D\coloneq\bignorm{\vt_\eta-\vt_{\rm mis}}_{\mH^{-1}}\).
Cauchy--Schwarz and AM-GM inequality gives
\begin{equation}
    2\vw^\top(\vt_\eta-\vt_{\rm mis})
    \leq
    2\bignorm{\vw}_\mH \bignorm{\vt_\eta-\vt_{\rm mis}}_{\mH^{-1}}
    \leq
    \bignorm{\vw}_\mH^2 + D^2.
\end{equation}
Together with Eqn.~\eqref{eq:linear-vr-modulus-process} and \eqref{eq:linear-vr-lower-isometry}, this yields
\begin{align}
    \Delta(h;f^\circ_{\VR})
    -
    \widehat\Delta(h;f^\circ_{\VR})
    -
    \widehat Q^{\VR}(h)
    &\leq
    \bignorm{\vw}_2^2 - \bignorm{\vw}_\mH^2 + D^2
    \notag\\
    &\leq
    \frac23\Delta(h;f^\circ_{\VR})+D^2.
    \label{eq:linear-vr-modulus-bound}
\end{align}
On the same event,
\begin{equation}
    D
    \leq
    \sqrt3\bignorm{\vt_{\rm mis}}_2
    +
    \bignorm{\vt_\eta}_{\mH^{-1}}
    \leq
    \sqrt3 B_{\rm mis}+B_\eta
    =
    \frac{\Gamma_\delta}{\sqrt3},
\end{equation}
where the last equality follows from
Eqn.~\eqref{eq:vr-linear-Gamma}. Thus
\(D^2\leq\Gamma_\delta^2/3\).
Restricting Eqn.~\eqref{eq:linear-vr-modulus-bound} to
\(\Delta(h;f^\circ_{\VR})\leq r\) and taking the supremum in the
definition of the localized block-offset modulus gives
\begin{equation}
    \Psi(r;1,f^\circ_{\VR})
    \leq
    \frac23r+\frac13\Gamma_\delta^2,
    \qquad
    \forall r\geq0.
\end{equation}
This proves the claim.
\qed

\subsubsection{\texorpdfstring{Alternate Error Bound for Linear \VR}{Alternate Error Bound for Linear VR}}
\label{app:linear-vr-alternate-error-bound}

Here, we present an alternate error bound for \VR in $\gF_{\rm lin}$.
To do so, we introduce additional notation regarding centered feature geometry.

With $\overline\vphi(x,a)\coloneq\vphi(x,a)-\E_{a\sim\piref(\cdot\mid x)}[\vphi(x,a)]$, define the following quantities (some of which appear in \Cref{subsec:linear_vdr}):
\begin{align}
    \mSigma_{\rm C}
    &\coloneq
    \E_{x,a}\left[\overline\vphi(x,a) \overline\vphi(x,a)^\top\right],
    &
    \mM_{\rm C}
    &\coloneq
    \mSigma^{-1/2}\mSigma_{\rm C}\mSigma^{-1/2},
    \notag\\
    {\color{blue}\kappa_{\rm C}}
    &\coloneq
    \bignormop{\mM_{\rm C}},
    &
    {\color{blue}d_{\rm eff}}
    &\coloneq
    \tr(\mM_{\rm C}).
    \label{eq:vr-linear-centered-geometry}
\end{align}
Then we have that $\vzero\preceq\mM_{\rm C}\preceq\mI_d$, and hence, $0\leq{\color{blue}\kappa_{\rm C}}\leq1$ and $0\leq{\color{blue}d_{\rm eff}}\leq d$.
Also define the centered bias
\begin{equation}
    {\color{red}\varepsilon_{\VR,C}}
    \coloneq
    \E_{x,a}\left[
        \bigl(\gC f^\circ_{\VR}-\gC r^\star\bigr)^2
    \right]
    =
    \E_X\left[
        \Var_A\bigl({\color{red}e^\circ}(X,A)\mid X\bigr)
    \right]
    \leq
    {\color{red}\varepsilon_{\rm aprx}}.
    \label{eq:vr-linear-centered-bias}
\end{equation}
For notational convenience, let
\begin{align}
\Gamma_{\delta,{\rm C}}
\coloneq{}&
3\sqrt{
    \frac{
        {\color{blue}\kappa_{\rm C}}\,{\color{red}\varepsilon_{\mathrm{blk}}} d
    }{N_x}
}
\left(1+\sqrt{8t_\delta}\right)
+\frac{
    4\sqrt{{\color{blue}\kappa_{\rm C}}{\color{red}\varepsilon_{\mathrm{z}}}d}\,t_\delta
}{N_x}
+
\sqrt{\frac{3\sigma^2{\color{blue}d_{\rm eff}}}{N}}
\left(1+\sqrt{8t_\delta}\right)
+\frac{
    8{\color{red}\Rmax}{\color{blue}\alpha_0}
    \sqrt{{\color{blue}\kappa_{\rm C}}d}\,t_\delta
}{N}
\label{eq:vr-linear-centered-Gamma}
\end{align}
and
\begin{equation}
    \mathfrak R_{N,{\rm C}}^{\VR}
    \coloneq
    \frac{{\color{blue}\kappa_{\rm C}}{\color{red}\varepsilon_{\mathrm{x}}}d}{N_x}
    +\frac{{\color{blue}\kappa_{\rm C}}{\color{red}\varepsilon_{\mathrm{z}}}d}{N_x^2}
    +\frac{{\color{blue}\kappa_{\rm C}}{\color{red}\varepsilon_{\mathrm{a}}}d+\sigma^2{\color{blue}d_{\rm eff}}}{N}
    +\frac{{\color{red}\Rmax^2}{\color{blue}\alpha_0^2}{\color{blue}\kappa_{\rm C}}d}{N^2}.
    \label{eq:vr-linear-centered-estimation-rate}
\end{equation}

We now state the alternate error bound:
\begin{theorem}[Alternate Error Bound for Linear \VR]
\label{thm:linear-vr-alternate-error-bound}
Suppose that the assumptions of \Cref{thm:vr-linear} hold.
Then, with probability at least $1-\delta$,
\begin{align}
    \E_{x,a}\left[
        \bigl(\gC\hat f_N^{\VR}-\gC r^\star\bigr)^2
    \right]
    &\leq
    \left(
        \sqrt{{\color{red}\varepsilon_{\VR,C}}}
        +\Gamma_{\delta,{\rm C}}
    \right)^2
    \notag\\
    &\leq
    {\color{red}\varepsilon_{\VR,C}}
    +\widetilde{\gO}\left(
        \sqrt{
            {\color{red}\varepsilon_{\VR,C}}
            \mathfrak R_{N,{\rm C}}^{\VR}
        }
        +\mathfrak R_{N,{\rm C}}^{\VR}
    \right).
    \label{eq:linear-vr-centered-refinement}
\end{align}
\end{theorem}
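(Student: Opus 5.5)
The plan is to reuse the event from the proof of \Cref{lem:linear-vr-modulus}: the lower isometry $\mH\succeq\frac13\mI_d$ together with vector Bernstein control of the two scores. The difference is that we measure the error only along the \emph{centered} directions rather than through the full excess loss. In whitened coordinates, write $\hat\vw\coloneq\mSigma^{1/2}(\hat\vtheta_N^{\VR}-\vtheta^\circ)$, so that $\hat f_N^{\VR}-f^\circ_{\VR}=\vz^\top\hat\vw$. The OLS normal equation gives the closed form $\hat\vw=\mH^{-1}(\vt_\eta-\vt_{\rm mis})$. Since $\gC$ is linear, Minkowski's inequality in $L_2(\rho\otimes\piref)$ gives
\begin{equation}
    \sqrt{\E_{x,a}\bigl[(\gC\hat f_N^{\VR}-\gC r^\star)^2\bigr]}
    \leq
    \sqrt{{\color{red}\varepsilon_{\VR,C}}}
    +\sqrt{\E_{x,a}\bigl[(\gC\vz^\top\hat\vw)^2\bigr]}
    =\sqrt{{\color{red}\varepsilon_{\VR,C}}}+\bignorm{\mM_{\rm C}^{1/2}\hat\vw}_2,
\end{equation}
because $\E[(\gC\vz)(\gC\vz)^\top]=\mSigma^{-1/2}\mSigma_{\rm C}\mSigma^{-1/2}=\mM_{\rm C}$. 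The first inequality in \eqref{eq:linear-vr-centered-refinement} therefore reduces to proving $\bignorm{\mM_{\rm C}^{1/2}\hat\vw}_2\leq\Gamma_{\delta,{\rm C}}$ on an event of probability at least $1-\delta$. The second inequality is then routine: expand the square and use $\Gamma_{\delta,{\rm C}}^2=\widetilde{\gO}(\mathfrak R^{\VR}_{N,{\rm C}})$.

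We split $\bignorm{\mM_{\rm C}^{1/2}\hat\vw}_2\leq\bignorm{\mM_{\rm C}^{1/2}\mH^{-1}\vt_{\rm mis}}_2+\bignorm{\mM_{\rm C}^{1/2}\mH^{-1}\vt_\eta}_2$ and treat the two terms separately.

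For the misspecification term, use $\bignormop{\mM_{\rm C}^{1/2}}\leq\sqrt{{\color{blue}\kappa_{\rm C}}}$ and $\bignormop{\mH^{-1}}\leq3$ on the isometry event, together with the bound $B_{\rm mis}$ from \eqref{eq:linear-vr-mis-score}. This gives $3\sqrt{{\color{blue}\kappa_{\rm C}}}B_{\rm mis}$, which matches the first two terms of $\Gamma_{\delta,{\rm C}}$.

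For the noise term, condition on the design as in Step 3 of that proof. Apply vector Bernstein to $\vy_{i,j}\coloneq\frac1N\mM_{\rm C}^{1/2}\mH^{-1}\vz_{i,j}\eta_{i,j}$. Its variance proxy is
\begin{equation}
    \frac{\sigma^2}{N}\tr\bigl(\mM_{\rm C}^{1/2}\mH^{-1}\mM_{\rm C}^{1/2}\bigr),
\end{equation}
and its range is at most $\frac{2{\color{red}\Rmax}}{N}\cdot3\sqrt{{\color{blue}\kappa_{\rm C}}}\,{\color{blue}\alpha_0}\sqrt d$. The range bound yields the last term of $\Gamma_{\delta,{\rm C}}$ up to constants.

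The main obstacle is the variance proxy. We need $\tr(\mM_{\rm C}\mH^{-1})\lesssim{\color{blue}d_{\rm eff}}=\tr(\mM_{\rm C})$, while lower isometry only gives $\mH^{-1}\preceq3\mI_d$. That bound does suffice: $\tr(\mM_{\rm C}^{1/2}\mH^{-1}\mM_{\rm C}^{1/2})\leq3\tr(\mM_{\rm C})=3{\color{blue}d_{\rm eff}}$, which produces the $\sqrt{3\sigma^2{\color{blue}d_{\rm eff}}/N}(1+\sqrt{8t_\delta})$ term.

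The delicate part is bookkeeping of constants so that the three pieces sum exactly to $\Gamma_{\delta,{\rm C}}$ as defined. If a factor is off, we would either sharpen the isometry constant in Step 1 or absorb the discrepancy into the $\widetilde{\gO}$ form.

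Finally, union-bound the three events (isometry, misspecification score, noise score), each with failure probability $\delta/3$.
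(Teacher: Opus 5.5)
Your proposal is correct and matches the paper's proof essentially step for step. Both use the normal-equation identity $\mH\widehat{\vw}=\vt_\eta-\vt_{\rm mis}$ on the isometry event, a Minkowski split around $\gC f^\circ_{\VR}$, the bound $3\sqrt{\kappa_{\rm C}}\,B_{\rm mis}$ for the misspecification score, and a conditional vector Bernstein bound for $\mM_{\rm C}^{1/2}\mH^{-1}\vt_\eta$ with variance proxy $3\sigma^2 d_{\rm eff}/N$ and range $6\Rmax\alpha_0\sqrt{\kappa_{\rm C}d}/N$. These constants reproduce $\Gamma_{\delta,{\rm C}}$ exactly (for instance $\tfrac43\cdot 6=8$), so the hedging about adjusting the isometry constant or absorbing factors into $\widetilde{\gO}$ is unnecessary.
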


\paragraph{Discussion.}
We compare the above with the original error bound of \VR (\Cref{thm:vr-linear}) and the error bound of \VDR (\Cref{thm:vdr-linear}).
First, in the well-specified case (${\color{red}\varepsilon_{\rm aprx}}=0$), the alternate error bound of \VR is
$\widetilde{\gO}\left(
\frac{\sigma^2{\color{blue}d_{\rm eff}}}{N}
+
\frac{{\color{red}\Rmax^2}{\color{blue}\alpha_0^2}{\color{blue}\kappa_{\rm C}}d}{N^2}
\right)$,
while the original error bound is
$\widetilde{\gO}\left(
\frac{\sigma^2d}{N}
+
\frac{{\color{red}\Rmax^2}{\color{blue}\alpha_0^2}d}{N^2}
\right)$
and the error bound of \VDR is
$\widetilde{\gO}\left(
\frac{\sigma^2{\color{blue}d_{\rm C}}}{N}
+
\frac{{\color{red}\Rmax^2}{\color{blue}\alpha_{\rm C}^2}{\color{blue}d_{\rm C}}}{N^2}
\right)$.
As ${\color{blue}d_{\rm eff}}\leq d$ and ${\color{blue}\kappa_{\rm C}}\leq1$, the alternate error bound is tighter than the original error bound, and the gap can be arbitrarily large.
For example, let $X,A\in\{-1,+1\}^d$ be independent Rademacher vectors and define
$\vphi(X,A)=\sqrt{1-\tau}\,X+\sqrt{\tau}\,A$ for some $\tau\in(0,1]$.
Then $\mSigma=\mI_d$ and $\mSigma_{\rm C}=\tau\mI_d$, so that
${\color{blue}d_{\rm eff}}=\tau d$ and ${\color{blue}\kappa_{\rm C}}=\tau$, while ${\color{blue}\alpha_0}=\gO(1)$.
Thus, the alternate error bound improves both terms of the original error bound by a factor of $\tau$.

Even compared with \VDR, the leading $N^{-1}$ term of the alternate error bound is no larger than the leading $N^{-1}$ term of the \VDR bound, since
${\color{blue}d_{\rm eff}}\leq{\color{blue}d_{\rm C}}$.\footnote{Indeed, $0\preceq\mM_{\rm C}\preceq\mI_d$ and $\mSigma\succ\vzero$, and hence
${\color{blue}d_{\rm eff}}=\tr(\mM_{\rm C})
\leq\operatorname{rank}(\mM_{\rm C})
=\operatorname{rank}(\mSigma_{\rm C})
={\color{blue}d_{\rm C}}$.
Moreover,
$\sigma^2d_{\rm C}/(N_x(N_a-1))
=\frac{N_a}{N_a-1}\sigma^2d_{\rm C}/N$,
where $N_a/(N_a-1)\leq2$ for $N_a\geq2$.}
This comparison is strict whenever ${\color{blue}d_{\rm eff}}<{\color{blue}d_{\rm C}}$.
On the other hand, the lower-order $N^{-2}$ terms of the alternate \VR and \VDR bounds are not uniformly ordered, since ${\color{blue}\alpha_0}$ and ${\color{blue}\alpha_{\rm C}}$ are not comparable in general.

There may also be cases in which
$0={\color{red}\varepsilon_{\VR,C}}<{\color{red}\varepsilon_{\rm aprx}}$, in which case the gap is even larger in favor of the alternate error bound.
For example, let $X$ and $A$ be independent Rademacher random variables, let $\vphi(X,A)=A$, and let $r^\star(X,A)=X$.
This satisfies weak realizability with $f^\star=0$ and $b^\star(X)=X$.
Moreover, the population \VR projection is $f^\circ_{\VR}=0$, and hence ${\color{red}e^\circ}(X,A)=-X$.
It follows that
${\color{red}\varepsilon_{\rm aprx}}=1$ but
${\color{red}\varepsilon_{\VR,C}}=0$, since ${\color{red}e^\circ}$ is action-independent.
In addition,
${\color{red}\varepsilon_{\mathrm{x}}}=0$ and
${\color{red}\varepsilon_{\mathrm{a}}}={\color{red}\varepsilon_{\mathrm{z}}}=1$.
Thus, in the noiseless case, the alternate bound is
$\widetilde{\gO}(N^{-1}+N_x^{-2})$, whereas the original bound contains the constant approximation-error term ${\color{red}\varepsilon_{\rm aprx}}=1$.

Finally, when ${\color{red}\varepsilon_{\VR,C}}>0$ is constant, the square-root interaction in the alternate bound may yield a slower estimation remainder.
For example, if
${\color{blue}\kappa_{\rm C}}{\color{red}\varepsilon_{\mathrm{x}}}d/N_x$
is the dominant term in $\mathfrak R_{N,{\rm C}}^{\VR}$, then the alternate bound scales, ignoring lower-order terms, as ${\color{red}\varepsilon_{\VR,C}}
+
\widetilde{\gO}\left(
\sqrt{
\frac{
{\color{blue}\kappa_{\rm C}}
{\color{red}\varepsilon_{\VR,C}}
{\color{red}\varepsilon_{\mathrm{x}}}d
}{N_x}
}
\right),$
whereas the original bound scales as ${\color{red}\varepsilon_{\rm aprx}}
+
\widetilde{\gO}\left(
\frac{{\color{red}\varepsilon_{\mathrm{x}}}d}{N_x}
\right).$
When
${\color{red}\varepsilon_{\VR,C}}
<{\color{red}\varepsilon_{\rm aprx}}$,
however, neither bound uniformly dominates: the smaller centered approximation error of the alternate bound must be compared with its slower square-root remainder.

We now present the proof of the alternate error bound:
\begin{proof}[Proof of \Cref{thm:linear-vr-alternate-error-bound}]
We retain the notation from the proof of \Cref{lem:linear-vr-modulus}.
By the arguments in \textbf{Steps~1--2} of that proof, Eqn.~\eqref{eq:linear-vr-lower-isometry} and Eqn.~\eqref{eq:linear-vr-mis-score}
hold simultaneously with probability at least $1-2\delta/3$.
On the lower-isometry event, condition on the complete context--action design and define
\begin{equation}
    \vy^{\rm C}_{i,j}
    \coloneq
    \frac1N
    \mM_{\rm C}^{1/2}\mH^{-1}
    \vz(x_i,a_{i,j})\eta_{i,j}.
\end{equation}
These vectors are conditionally independent and mean zero with conditional second moments bounded as
\begin{align}
    \sum_{i,j}
    \E\left[
        \bignorm{\vy^{\rm C}_{i,j}}_2^2
        \,\middle|\,
        \{(x_i,a_{i,j})\}_{i,j}
    \right]
    &\leq
    \frac{\sigma^2}{N^2}
    \tr\left(
        \mM_{\rm C}^{1/2}\mH^{-1}
        \left(\sum_{i,j}\vz_{i,j}\vz_{i,j}^\top\right)
        \mH^{-1}\mM_{\rm C}^{1/2}
    \right)
    \notag\\
    &=
    \frac{\sigma^2}{N}\tr(\mH^{-1}\mM_{\rm C})
    \leq
    \frac{3\sigma^2{\color{blue}d_{\rm eff}}}{N}.
    \label{eq:linear-vr-centered-noise-second-moment}
\end{align}
Moreover,
\begin{equation}
    \bignorm{\vy^{\rm C}_{i,j}}_2
    \leq
    \frac{
        6{\color{red}\Rmax}{\color{blue}\alpha_0}
        \sqrt{{\color{blue}\kappa_{\rm C}}d}
    }{N},
\end{equation}
where we used $\bignormop{\mH^{-1}}\leq3$,
$\bignormop{\mM_{\rm C}}=\kappa_{\rm C}$, and
$\bignorm{\vz}_2\leq{\color{blue}\alpha_0}\sqrt d$.
Thus, a conditional application of vector Bernstein's inequality (Eqn.~\eqref{eq:linear-vr-vector-bernstein}) gives, with conditional probability at least $1-\delta/3$,
\begin{equation}
    \bignorm{
        \mM_{\rm C}^{1/2}\mH^{-1}\vt_\eta
    }_2
    \leq
    \underbrace{
        \sqrt{\frac{3\sigma^2{\color{blue}d_{\rm eff}}}{N}}
        \left(1+\sqrt{8t_\delta}\right)
        +
        \frac{
            8{\color{red}\Rmax}{\color{blue}\alpha_0}
            \sqrt{{\color{blue}\kappa_{\rm C}}d}\,t_\delta
        }{N}
    }_{\eqqcolon B_{\eta,{\rm C}}}.
    \label{eq:linear-vr-centered-noise-score}
\end{equation}
The conditional failure probability remains at most $\delta/3$ after averaging over the design.
Hence, the three events above hold simultaneously with probability at least $1-\delta$.

Similarly, the empirical normal equation and Eqn.~\eqref{eq:linear-vr-scores} give
\begin{equation}
    \widehat \vw
    \coloneq
    \mSigma^{1/2}
    \bigl(\hat{\vtheta}_N^{\VR}-\vtheta^\circ\bigr) \Longrightarrow
    \mH\widehat \vw
    =
    \vt_\eta-\vt_{\rm mis}.
    \label{eq:linear-vr-empirical-normal-equation}
\end{equation}
Hence, by Eqn.~\eqref{eq:linear-vr-lower-isometry},~\eqref{eq:linear-vr-mis-score},~\eqref{eq:linear-vr-centered-noise-score},
\begin{align}
    \E_{x,a}\left[
        \bigl(\gC\hat f_N^{\VR}-\gC f^\circ_{\VR}\bigr)^2
    \right]^{1/2}
    &=
    \bignorm{\mM_{\rm C}^{1/2}\widehat \vw}_2
    \notag\\
    &\leq
    \bignorm{
        \mM_{\rm C}^{1/2}\mH^{-1}\vt_{\rm mis}
    }_2
    +
    \bignorm{
        \mM_{\rm C}^{1/2}\mH^{-1}\vt_\eta
    }_2
    \notag\\
    &\leq
    3\sqrt{{\color{blue}\kappa_{\rm C}}}\,B_{\rm mis}
    +B_{\eta,{\rm C}}
    =
    \Gamma_{\delta,{\rm C}}.
    \label{eq:linear-vr-centered-estimation}
\end{align}
The triangle inequality, Eqn.~\eqref{eq:vr-linear-centered-bias}, and Eqn.~\eqref{eq:linear-vr-centered-estimation} imply
\begin{align}
    \E_{x,a}\left[
        \bigl(\gC\hat f_N^{\VR}-\gC r^\star\bigr)^2
    \right]^{1/2}
    &\leq \E_{x,a}\left[
        \bigl(\gC\hat f_N^{\VR}-\gC f^\circ_{\VR}\bigr)^2
    \right]^{1/2} + \E_{x,a}\left[
        \bigl(\gC f^\circ_{\VR}-\gC r^\star\bigr)^2
    \right]^{1/2} \\
    &\leq
    \sqrt{{\color{red}\varepsilon_{\VR,C}}}
    +\Gamma_{\delta,{\rm C}}.
\end{align}
Finally, Eqns.~\eqref{eq:vr-linear-centered-Gamma}--\eqref{eq:vr-linear-centered-estimation-rate}
give $\Gamma_{\delta,{\rm C}}^2=\widetilde{\gO}(\mathfrak R_{N,{\rm C}}^{\VR})$.
This proves \Cref{thm:linear-vr-alternate-error-bound}.
\end{proof}

\subsection{\texorpdfstring{\ValueDifferenceRegression{} (\VDR)}{Value Difference Regression (VDR)}}

\subsubsection{\texorpdfstring{Proof of \Cref{thm:vdr-linear}: \VDR Bound for Linear $\gF_{\rm lin}$}{Proof of Theorem 4.2: VDR Bound for a Linear Function Class}}
\label{app:proof-thm-linear-vdr}

Throughout this subsection, $\gL^{\VDR}$ and $\widehat{\gL}^{\VDR}$
denote the generic losses in \Cref{sec:localised} instantiated with
$\gT=\gT^{\VDR}$, and the comparator is $f^\circ_{\VDR}=f^\star$.
For $h=f-f^\star$, let $h_{i,j}\coloneq h(x_i,a_{i,j})$. Under
\Cref{asm:weak_realizability}, the action-independent nuisance cancels
from every within-context difference, so
\begin{align}
    \Delta(h;f^\star)
    &\coloneq
    \gL^{\VDR}(f^\star+h)-\gL^{\VDR}(f^\star)
    \notag\\
    &=
    \E_{x,a,a'}\left[
        \bigl(h(x,a)-h(x,a')\bigr)^2
    \right],
    \label{eq:linear-vdr-Delta-recall}
\end{align}
and
\begin{align}
    \widehat\Delta(h;f^\star)
    &\coloneq
    \widehat{\gL}^{\VDR}(f^\star+h)
    -\widehat{\gL}^{\VDR}(f^\star)
    \notag\\
    &=
    \frac1{N_x\binom{N_a}{2}}
    \sum_{i=1}^{N_x}\sum_{1\leq j<k\leq N_a}
    \left[
        \bigl(h_{i,j}-h_{i,k}-(\eta_{i,j}-\eta_{i,k})\bigr)^2
        -
        (\eta_{i,j}-\eta_{i,k})^2
    \right].
    \label{eq:linear-vdr-Deltahat-recall}
\end{align}
The empirical quadratic term is
\begin{equation}
    \widehat Q(h)
    =
    \widehat Q^{\VDR}(h)
    \coloneq
    \frac1{N_x\binom{N_a}{2}}
    \sum_{i=1}^{N_x}\sum_{1\leq j<k\leq N_a}
    (h_{i,j}-h_{i,k})^2.
    \label{eq:linear-vdr-Qhat-recall}
\end{equation}
Thus, in this subsection,
\begin{equation}
    \Psi(r;1,f^\star)
    =
    \sup_{\substack{
        h\in\gF_{\rm lin}-f^\star\\
        \Delta(h;f^\star)\leq r
    }}
    \left[
        \Delta(h;f^\star)
        -\widehat\Delta(h;f^\star)
        -\widehat Q^{\VDR}(h)
    \right].
    \label{eq:linear-vdr-Psi-recall}
\end{equation}

For the proof, let us denote $t_\delta\coloneq\log\frac{2}{\delta}$ and
Define
\begin{equation}
    \Gamma_\delta^{\VDR}
    \coloneq
    \sqrt{
        \frac{3\sigma^2{\color{blue}d_{\rm C}}}{N_x(N_a-1)}
    }
    \left(1+\sqrt{8t_\delta}\right)
    +
    \frac{
        8\sqrt{2}\,{\color{red}\Rmax}{\color{blue}\alpha_{\rm C}}
        \sqrt {{\color{blue}d_{\rm C}}}\,t_\delta
    }{N}.
    \label{eq:vdr-linear-Gamma}
\end{equation}

\begin{lemma}[Localized modulus for linear \VDR]
\label{lem:linear-vdr-modulus}
Suppose the assumptions of \Cref{thm:vdr-linear} hold.
Then, with probability at least $1-\delta$,
\begin{equation}
    \Psi(r;1,f^\star)
    \leq
    \frac23r+\frac23(\Gamma_\delta^{\VDR})^2,
    \qquad
    \forall r\geq0.
    \label{eq:linear-vdr-modulus}
\end{equation}
\end{lemma}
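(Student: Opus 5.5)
We mirror the proof of \Cref{lem:linear-vr-modulus}, working in centered whitened coordinates on the identifiable subspace. Write $f^\star+h$ with $h(x,a)=\langle\vphi(x,a),\vtheta-\vtheta^\star\rangle$. Since $h(x,a)-h(x,a')=\langle\overline\vphi(x,a)-\overline\vphi(x,a'),\vtheta-\vtheta^\star\rangle$, and $\overline\vphi$ takes values in $\operatorname{range}(\mSigma_{\rm C})$ almost surely (the kernel directions vanish a.s., as noted in \Cref{subsec:linear_vdr}), we may set $\vw\coloneq\overline{\mSigma}_{\rm C}^{1/2}\mU_{\rm C}^\top(\vtheta-\vtheta^\star)\in\sR^{d_{\rm C}}$. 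This gives $h(x,a)-h(x,a')=(\vz_{\rm C}(x,a)-\vz_{\rm C}(x,a'))^\top\vw$ a.s. By \Cref{lem:vdr-centering} we then have $\Delta(h;f^\star)=2\bignorm{\vw}_2^2$. The pairwise identity \eqref{eq:vdr-pairwise-identity} gives $\widehat Q^{\VDR}(h)=2\vw^\top\mH_{\rm C}\vw$, where $\mH_{\rm C}\coloneq\frac1{N_x}\sum_i\mG^{\rm C}_i$ and the $\mG^{\rm C}_i$ are the i.i.d.\ block sample covariances, with $\E[\mG^{\rm C}_i]=\mI_{d_{\rm C}}$ since the sample covariance is unbiased. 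Expanding \eqref{eq:linear-vdr-Deltahat-recall} and using the identity again yields $\widehat\Delta(h;f^\star)=2\vw^\top\mH_{\rm C}\vw-2\cdot 2\vw^\top\vt^{\rm C}_\eta$. Here
\[
\vt^{\rm C}_\eta\coloneq\frac{1}{N_x(N_a-1)}\sum_{i,j}\bigl(\vz_{\rm C}(x_i,a_{i,j})-\overline{\vz}_{{\rm C},i}\bigr)\eta_{i,j},
\]
and no misspecification score appears because $b^\star$ cancels. The process to control is therefore $2\bignorm{\vw}_2^2-4\vw^\top\mH_{\rm C}\vw+4\vw^\top\vt^{\rm C}_\eta$.

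\textbf{Step 1 (lower isometry).} The $\mG^{\rm C}_i$ are PSD with mean $\mI_{d_{\rm C}}$. The one-sided matrix Bernstein inequality \citep[Theorem 6.6.1]{tropp2015survey}, combined with the sample-size condition of \Cref{thm:vdr-linear}, gives $\mH_{\rm C}\succeq\frac13\mI_{d_{\rm C}}$ with probability at least $1-\delta/2$, exactly as in Step 1 of the \VR proof.

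\textbf{Step 2 (noise score).} Condition on the design and apply the vector Bernstein inequality \eqref{eq:linear-vr-vector-bernstein} to $\vy_{i,j}\coloneq\frac{1}{N_x(N_a-1)}\mH_{\rm C}^{-1/2}\tilde\vz_{i,j}\eta_{i,j}$, where $\tilde\vz_{i,j}$ is the within-block centered feature. Since $\sum_{i,j}\tilde\vz_{i,j}\tilde\vz_{i,j}^\top=N_x(N_a-1)\mH_{\rm C}$, the variance proxy is at most $\sigma^2 d_{\rm C}/(N_x(N_a-1))$. For the range, $\tilde\vz_{i,j}$ is an average of differences $\vz_{\rm C}(x_i,a_{i,j})-\vz_{\rm C}(x_i,a_{i,k})$, so by \Cref{asm:linear-vdr-design} it has norm at most $\alpha_{\rm C}\sqrt{2d_{\rm C}}$. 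Combined with $\bignormop{\mH_{\rm C}^{-1/2}}\le\sqrt3$, $|\eta|\le2\Rmax$, and $N_x(N_a-1)\ge N/2$, this gives $L\lesssim\Rmax\alpha_{\rm C}\sqrt{d_{\rm C}}/N$. Hence $\bignorm{\vt^{\rm C}_\eta}_{\mH_{\rm C}^{-1}}\le\Gamma^{\VDR}_\delta/\sqrt3$ (up to the constants in \eqref{eq:vdr-linear-Gamma}) with probability at least $1-\delta/2$.

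\textbf{Step 3 (combining).} Cauchy--Schwarz and AM--GM give $4\vw^\top\vt^{\rm C}_\eta\le 2\bignorm{\vw}^2_{\mH_{\rm C}}+2\bignorm{\vt^{\rm C}_\eta}^2_{\mH_{\rm C}^{-1}}$. The process is therefore at most $2\bignorm{\vw}_2^2-2\vw^\top\mH_{\rm C}\vw+2D^2\le\frac43\bignorm{\vw}_2^2+2D^2=\frac23\Delta+2D^2$, and $2D^2\le\frac23(\Gamma^{\VDR}_\delta)^2$. Restricting to $\Delta\le r$ gives \eqref{eq:linear-vdr-modulus}.

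The main obstacle is the bookkeeping in the reduction step. Two things must be checked carefully: that the non-identifiable kernel directions drop out almost surely, so the supremum over $\vtheta\in\sR^d$ reduces to $\vw\in\sR^{d_{\rm C}}$; and that the $U$-statistic pairing constants ($\binom{N_a}{2}$ versus $N_a$ versus $N_a-1$) line up, so that $\widehat Q^{\VDR}$ and $\Delta$ carry the same factor 2 and the noise term has the claimed $1/(N_x(N_a-1))$ scaling. Getting these constants consistent is what yields exactly $\frac23r$.
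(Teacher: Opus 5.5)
Your proposal is correct and follows the paper's proof essentially step for step: the same reduction to centered whitened coordinates via $\overline\vphi\in\operatorname{range}(\mSigma_{\rm C})$ a.s., one-sided matrix Bernstein for $\mH\succeq\frac13\mI_{d_{\rm C}}$, conditional vector Bernstein for the noise score, and Cauchy--Schwarz/AM--GM to finish. The paper's absorption of a factor $\sqrt2$ into $\vw$ and $\widetilde\vz_{i,j}$ is cosmetic. The only slip is in the range bound, where using $N_x(N_a-1)\geq N/2$ loses a factor $2$; instead keep $\bignorm{\vz_{\rm C}(x_i,a_{i,j})-\overline{\vz}_{{\rm C},i}}_2\leq\frac{N_a-1}{N_a}\alpha_{\rm C}\sqrt{2d_{\rm C}}$, which gives $L=2\sqrt6\,\Rmax\alpha_{\rm C}\sqrt{d_{\rm C}}/N$, so that $\frac43Lt_\delta$ matches the $8\sqrt2$ term of $\Gamma_\delta^{\VDR}/\sqrt3$ exactly.
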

Again, we follow the same recipe.

Since $\gF_{\rm lin}$ is parametrized by $\vtheta\in\sR^d$, let
$\widehat h_{\VDR}\coloneq\hat f_N^{\VDR}-f^\star$. The normal equation
for linear \VDR gives
\begin{equation}
    \widehat\Delta(\widehat h_{\VDR};f^\star)
    =
    \widehat{\gL}^{\VDR}(\hat f_N^{\VDR})
    -\widehat{\gL}^{\VDR}(f^\star)
    =
    -\widehat Q^{\VDR}(\widehat h_{\VDR}).
\end{equation}
Thus condition~$(i)$ of \Cref{prop:localised} holds with $\lambda=1$.
By \Cref{lem:linear-vdr-modulus}, condition~$(ii)$ holds on an event of
probability at least $1-\delta$. On this event,
\begin{align}
    \gL^{\VDR}(\hat f_N^{\VDR})
    -
    \gL^{\VDR}(f^\star)
    &\leq
    \sup
    \left\{
        r\geq0:
        r\leq
        \frac23r+\frac23(\Gamma_\delta^{\VDR})^2
    \right\}
    \notag\\
    &=2(\Gamma_\delta^{\VDR})^2.
\end{align}
\Cref{lem:vdr-centering} therefore yields
\begin{equation}
    \E_{x,a}
    \left[
        \bigl(\gC\hat f_N^{\VDR}-\gC r^\star\bigr)^2
    \right]
    \leq
    (\Gamma_\delta^{\VDR})^2
    =
    \widetilde{\gO}\left(
        \frac{\sigma^2{\color{blue}d_{\rm C}}}{N_x(N_a-1)}
        +
        \frac{{\color{red}\Rmax^2}{\color{blue}\alpha_{\rm C}^2}{\color{blue}d_{\rm C}}}{N^2}
    \right).
\end{equation}
This proves \Cref{thm:vdr-linear}.
\qed

\subsubsection{\texorpdfstring{Proof of \Cref{lem:linear-vdr-modulus}}{Proof of Lemma F.2}}
\label{app:proof-lem-linear-vdr-modulus}

Under \Cref{asm:weak_realizability}, let us write $f^\star(x,a)
=
\langle\vphi(x,a),\vtheta^\star\rangle$ for some $\vtheta^\star\in\sR^d$.
We first recall the centered features as defined in \Cref{subsec:linear_vdr}.

The centered feature is $\overline\vphi(x,a)
\coloneq
\vphi(x,a)-\E_{a\sim\piref(\cdot\mid x)}[\vphi(x,a)]$, and its covariance is $\mSigma_{\rm C}
\coloneq
\E_{x,a}\left[\overline\vphi(x,a) \overline\vphi(x,a)^\top\right].$
Let ${\color{blue}d_{\rm C}}\coloneq\operatorname{rank}(\mSigma_{\rm C})$.
Whenever ${\color{blue}d_{\rm C}}\geq1$, let $\mU_{\rm C}\in\sR^{d\times {\color{blue}d_{\rm C}}}$ satisfy
$\mU_{\rm C}^\top\mU_{\rm C}=\mI_{{\color{blue}d_{\rm C}}}$ and
$\operatorname{col}(\mU_{\rm C})=\operatorname{range}(\mSigma_{\rm C})$, and define
\begin{equation}
    \overline{\mSigma}_{\rm C}
    \coloneq
    \mU_{\rm C}^\top\mSigma_{\rm C}\mU_{\rm C}
    \succ\vzero,
    \qquad
    \vz_{\rm C}(x,a)
    \coloneq
    \overline{\mSigma}_{\rm C}^{-1/2}
    \mU_{\rm C}^\top\overline\vphi(x,a)
    \in\sR^{{\color{blue}d_{\rm C}}}.
\end{equation}
Then, $\E[\vz_{\rm C}\vz_{\rm C}^\top]=\mI_{{\color{blue}d_{\rm C}}}$ and
$\E_A[\vz_{\rm C}(X,A)\mid X]=\vzero$.

For each observed block $i\in[N_x]$, define
\begin{equation}
    {\overline{\vz}}_{i,\rm C}
    \coloneq
    \frac1{N_a}\sum_{j=1}^{N_a}\vz_{\rm C}(x_i,a_{i,j}), \quad
    \widetilde{\vz}_{i,j}
    \coloneq
    \frac1{\sqrt2}\left(
        \vz_{\rm C}(x_i,a_{i,j})-{\overline{\vz}}_{i,\rm C}
    \right), \quad
    \mG_i^{\rm C}
    \coloneq
    \frac2{N_a-1}
    \sum_{j=1}^{N_a}
    \widetilde{\vz}_{i,j}
    \widetilde{\vz}_{i,j}^\top,
\label{eq:vdr-linear-centered-feature}
\end{equation}
and
\begin{equation}
    \mH
    \coloneq
    \frac1{N_x}
    \sum_{i=1}^{N_x}\mG_i^{\rm C}, \quad
    \vs
    \coloneq
    \frac2{N_x(N_a-1)}
    \sum_{i=1}^{N_x}
    \sum_{j=1}^{N_a}
    \widetilde{\vz}_{i,j}\eta_{i,j}.
    \label{eq:vdr-linear-H}
\end{equation}

Using the empirical Gram matrix and score in Eqn.~\eqref{eq:vdr-linear-H}, we now
express the quantities recalled in
Eqns.~\eqref{eq:linear-vdr-Delta-recall}--\eqref{eq:linear-vdr-Qhat-recall}
in the identifiable centered-feature coordinates. Fix any
$h=f-f^\star$, where
\begin{align}
    f(x,a)
    &=
    \langle\vphi(x,a),\vtheta\rangle,
    &
    \vw
    &\coloneq
    \sqrt2\,\overline{\mSigma}_{\rm C}^{1/2}
    \mU_{\rm C}^\top(\vtheta-\vtheta^\star)
    \in\sR^{{\color{blue}d_{\rm C}}}.
\end{align}
For every $\vv\in\ker(\mSigma_{\rm C})$,
$\E[(\vv^\top\gC\vphi)^2]=0$; hence
$\overline\vphi\in\operatorname{range}(\mSigma_{\rm C})$ almost surely.
It follows that
\begin{equation}
    \bigl(\vphi(x,a)-\vphi(x,a')\bigr)^\top
    (\vtheta-\vtheta^\star)
    =
    \frac1{\sqrt2}
    \bigl(\vz_{\rm C}(x,a)-\vz_{\rm C}(x,a')\bigr)^\top\vw
    \qquad\text{almost surely}.
    \label{eq:vdr-linear-reduced-coordinate}
\end{equation}
By \Cref{lem:vdr-centering},
\begin{equation}
    \Delta(h;f^\star)
    =
    \E_{x,a,a'}
    \left[
        \left(h(x,a)-h(x,a')\right)^2
    \right]
    =
    \bignorm{\vw}_2^2.
    \label{eq:vdr-linear-Delta}
\end{equation}
Moreover,
\begin{equation}
    \widehat Q^{\VDR}(h)
    =
    \vw^\top\mH\vw.
    \label{eq:vdr-linear-Qhat}
\end{equation}
Since the action-independent nuisance $b^\star(x_i)$ cancels from every
within-context difference, direct expansion of the empirical \VDR excess loss gives
\begin{equation}
    \widehat\Delta(h;f^\star)
    =
    \vw^\top\mH\vw
    -
    2\vw^\top\vs.
    \label{eq:vdr-linear-empirical-Delta}
\end{equation}
Therefore, the process to be controlled is
\begin{equation}
    \Delta(h;f^\star)
    -
    \widehat\Delta(h;f^\star)
    -
    \widehat Q^{\VDR}(h) =
    \bignorm{\vw}_2^2
    -
    2\vw^\top\mH\vw
    +
    2\vw^\top\vs.
    \label{eq:vdr-linear-modulus-process}
\end{equation}
It is therefore enough to show, uniformly over $h$, that the
right-hand side of Eqn.~\eqref{eq:vdr-linear-modulus-process} is at most
$2\Delta(h;f^\star)/3+2(\Gamma_\delta^{\VDR})^2/3$; restricting this
bound to $\Delta(h;f^\star)\leq r$ concludes the proof.

Similarly, we prove the uniform bound through a lower-isometry event for $\mH$ and concentration of $\vs$ in the corresponding empirical norm.

\paragraph{Step 1: lower isometry of the empirical Gram matrix.}
The matrices $\mG_i^{\rm C}$ are i.i.d. copies of
$\mG^{\rm C}_{N_a}$ and satisfy $\E[\mG_i^{\rm C}]=\mI_{{\color{blue}d_{\rm C}}}$.
Moreover, $\mG_i^{\rm C}\succeq\vzero$, and hence
$\lambda_{\max}(\mI_{{\color{blue}d_{\rm C}}}-\mG_i^{\rm C})\leq1$.

The one-sided matrix Bernstein inequality
\citep[Theorem 6.6.1]{tropp2015survey} therefore gives, with probability
at least $1-\delta/2$,
\begin{align}
    \lambda_{\max}(\mI_{{\color{blue}d_{\rm C}}}-\mH)
    &\leq
    \sqrt{
        \frac{
            2\bignormop{\Cov(\mG^{\rm C}_{N_a})}
            \log(2{\color{blue}d_{\rm C}}/\delta)
        }{N_x}
    }
    +
    \frac{2\log(2{\color{blue}d_{\rm C}}/\delta)}{3N_x}
    \notag\\
    &\leq\frac23,
\end{align}
where the last step uses the sample-size condition in \Cref{thm:vdr-linear}. Thus,
\begin{equation}
    \mH\succeq\frac13\mI_{{\color{blue}d_{\rm C}}}.
    \label{eq:vdr-linear-lower-isometry}
\end{equation}

\paragraph{Step 2: control of the reward-noise score.}
On the event in Eqn.~\eqref{eq:vdr-linear-lower-isometry}, condition on
the complete context--action design $\{(x_i,a_{i,j})\}_{i,j}$.
Then, the vectors
\begin{equation}
    \vy_{i,j}
    \coloneq
    \frac2{N_x(N_a-1)}
    \mH^{-1/2}\widetilde{\vz}_{i,j}\eta_{i,j}
\end{equation}
are conditionally independent and mean zero. By Eqn.~\eqref{eq:vdr-linear-centered-feature},
\begin{equation}
    \sum_{i,j}
    \E\left[
        \bignorm{\vy_{i,j}}_2^2
        \,\middle|\,
        \{(x_i,a_{i,j})\}_{i,j}
    \right]
    \leq
    \frac{4\sigma^2}{N_x^2(N_a-1)^2}
    \tr\left(
        \mH^{-1}
        \sum_{i,j}
        \widetilde{\vz}_{i,j}
        \widetilde{\vz}_{i,j}^\top
    \right)
    =
    \frac{2\sigma^2{\color{blue}d_{\rm C}}}{N_x(N_a-1)}.
    \label{eq:vdr-linear-score-variance}
\end{equation}
On the other hand, Eqn.~\eqref{eq:vdr-linear-centered-feature} gives
\begin{equation}
    \widetilde{\vz}_{i,j}
    =
    \frac1{N_a}\sum_{k\neq j}
    \frac{\vz_{\rm C}(x_i,a_{i,j})-\vz_{\rm C}(x_i,a_{i,k})}{\sqrt2}.
\end{equation}
Thus, the triangle inequality and \Cref{asm:linear-vdr-design} imply
\begin{equation}
    \bignorm{\widetilde{\vz}_{i,j}}_2
    \leq
    \frac{N_a-1}{N_a}
    {\color{blue}\alpha_{\rm C}}\sqrt {{\color{blue}d_{\rm C}}}.
\end{equation}
Using $|\eta_{i,j}|\leq2{\color{red}\Rmax}$ and
$\bignormop{\mH^{-1/2}}\leq\sqrt3$ on
Eqn.~\eqref{eq:vdr-linear-lower-isometry}, we obtain
\begin{equation}
    \bignorm{\vy_{i,j}}_2
    \leq
    \frac{
        4\sqrt3\,{\color{red}\Rmax}{\color{blue}\alpha_{\rm C}}\sqrt {{\color{blue}d_{\rm C}}}
    }{N_xN_a}
    =
    \frac{
        4\sqrt3\,{\color{red}\Rmax}{\color{blue}\alpha_{\rm C}}\sqrt {{\color{blue}d_{\rm C}}}
    }{N}.
\end{equation}
Applying the vector Bernstein inequality (Eqn.~\eqref{eq:linear-vr-vector-bernstein}) conditionally on the design
gives, with conditional probability at least $1-\delta/2$,
\begin{align}
    \bignorm{\vs}_{\mH^{-1}}
    \leq
    \sqrt{
        \frac{
            2\sigma^2{\color{blue}d_{\rm C}}
        }{
            N_x(N_a-1)
        }
    }
    \left(1+\sqrt{8t_\delta}\right)
    +
    \frac{
        16\sqrt3\,{\color{red}\Rmax}{\color{blue}\alpha_{\rm C}}
        \sqrt {{\color{blue}d_{\rm C}}}\,t_\delta
    }{
        3N
    }
    =
    \sqrt{\frac23}\,
    \Gamma_\delta^{\VDR}.
    \label{eq:vdr-linear-D}
\end{align}
The conditional failure probability remains at most $\delta/2$ after
averaging over the design. Hence,
Eqns.~\eqref{eq:vdr-linear-lower-isometry}--\eqref{eq:vdr-linear-D}
hold simultaneously with probability at
least $1-\delta$.

\paragraph{Step 3: combining the two events.}
Let $D\coloneq\bignorm{\vs}_{\mH^{-1}}$. By Cauchy--Schwarz and AM-GM inequalities,
\begin{equation}
    2\vw^\top\vs
    \leq
    2\bignorm{\vw}_\mH \bignorm{\vs}_{\mH^{-1}}
    \leq
    \bignorm{\vw}_\mH^2+D^2.
\end{equation}
Hence, using
$\mH\succeq\mI_{{\color{blue}d_{\rm C}}}/3$ and
Eqn.~\eqref{eq:vdr-linear-Delta},
\begin{align}
    \Delta(h;f^\star)
    -
    \widehat\Delta(h;f^\star)
    -
    \widehat Q^{\VDR}(h)
    &\leq
    \bignorm{\vw}_2^2
    -
    \vw^\top\mH\vw
    +
    D^2
    \notag\\
    &\leq
    \frac23\Delta(h;f^\star)
    +
    D^2
    \notag\\
    &\leq
    \frac23\Delta(h;f^\star)
    +
    \frac23(\Gamma_\delta^{\VDR})^2.
\end{align}
Restricting to
$\Delta(h;f^\star)\leq r$ and taking the supremum in the definition
of $\Psi(r;1,f^\star)$ yields
\begin{equation}
    \Psi(r;1,f^\star)
    \leq
    \frac23r+\frac23(\Gamma_\delta^{\VDR})^2,
    \qquad
    \forall r\geq0.
\end{equation}
This proves the claim.
\qed

%% file: 906LowerBounds.tex
\section{\texorpdfstring{Lower Bounds}{Lower Bounds}}
\label{app:lower-bound}
We first give an estimator-specific separation:
under Assumption~\ref{asm:weak_realizability}, \VR can incur constant regret while \VDR followed by greedy selection succeeds on the same instance.
We then remove Assumption~\ref{asm:weak_realizability} and establish an agnostic minimax lower bound for arbitrary learners.
Appendix~\ref{app:vr_failure} proves the separation; Appendix~\ref{app:agnostic} states the minimax result along with a comparison to prior lower bounds.
Lastly, Appendix~\ref{app:agnostic_proof} proves it.

Throughout, $N = N_x N_a$ is the total number of observations in the grouped dataset.
Unless otherwise specified, $\E_{x,a}$ denotes $\E_{x\sim\rho,a\sim\piref(\cdot\mid x)}$.
All policy values and regrets are evaluated using the true reward $r^\star$.

\subsection{\texorpdfstring{Proof of \Cref{thm:vr_failure}: Failure Mode of \VR}{Proof of Theorem 3.2: Failure Mode of VR}}
\label{app:vr_failure}

The proof first identifies a predictor $g$ that fits absolute rewards better than $f^\star$ but reverses the true action ordering.
We then show that empirical \VR selects $g$ with high probability by concentrating over the $N_x$ independent context blocks.
On that event, greedy selection chooses the suboptimal action.
Finally, we show that \VDR followed by greedy selection succeeds on the same instance when $N_a \geq 2$.

\paragraph{Step 1: construction and population failure.}

First, recall that the approximation error is defined as:
\[
    \varepsilon_{\rm aprx} \coloneq  \inf_{f \in \gF} \E_{x \sim \rho, a \sim \piref(\cdot \mid x)}
    \left[(f(x, a) - r^\star(x, a))^2\right].
\]
While $\varepsilon_{\rm aprx}$ measures how well the model class approximates the \emph{absolute reward values},
greedy decision making depends only on the \emph{relative ordering of actions} within each context.
Consider the weak realizability model
\[
    r^\star(x,a) = f^\star(x,a) + b^\star(x)
\]
and choose noiseless observations, so $r_{i,j} = r^\star(x_i,a_{i,j})$ and $\eta_{i,j} = 0$.
The action-independent baseline $b^\star(x)$ does not itself change the true action ordering.
Nevertheless, ordinary \ValueRegression can still fail when three ingredients are present: (1) the baseline varies across contexts, (2) the behavior policy correlates actions with contexts, and (3) the value class cannot represent the context-dependent baseline.
In this case, different actions are observed under different mixtures of contexts, so the baseline can be absorbed into the fitted action values and reverse their ordering.

To make this precise, consider two contexts $\mathcal{X} = \{x_1, x_2\}$, with $\rho(x_1) = \rho(x_2) = \tfrac{1}{2}$, and two actions $\mathcal{A} = \{a_1, a_2\}$.
Fix a baseline magnitude $B > 0$ and an action gap $\Delta > 0$, to be specified below, and define
\[
    f^\star(x, a_1) = \Delta, \quad
    f^\star(x, a_2) = 0, \quad
    \forall x \in \mathcal{X},
\]
and let
\[
    b^\star(x_1) = B, \quad
    b^\star(x_2) = 0.
\]
Now one can readily see that
\(
    r^\star(x, a) = f^\star(x, a) + b^\star(x),
\)
and $a_1$ is the optimal action at both contexts, therefore
\[
    r^\star(x, a_1) - r^\star(x, a_2) = \Delta.
\]
Now choose a behavior policy that correlates actions with contexts:
\[
    \piref(a_i \mid x_j) \coloneq  p\1\{i = j\} + (1 - p)\1\{i \neq j\}, \quad p \in (0, 1/2). 
\]
Here one can observe that the policy makes the two actions appear in systematically different contexts.
In particular,
\[
\begin{aligned}
    \piref(a_1 \mid x_1) &= p,
    & \piref(a_1 \mid x_2) &= 1 - p \\
    \piref(a_2 \mid x_1) &= 1 - p,
    & \piref(a_2 \mid x_2) &= p.
\end{aligned}
\]
Since $\rho(x_1) = \rho(x_2) = \tfrac{1}{2}$, $\Pr(A = a_1) = \Pr(A = a_2) = \tfrac{1}{2}$, and therefore by Bayes' rule,
\[
\begin{aligned}
    \Pr(X = x_1 \mid A = a_1) &= p,
    & \Pr(X = x_2 \mid A = a_1) &= 1 - p \\
    \Pr(X = x_1 \mid A = a_2) &= 1 - p,
    & \Pr(X = x_2 \mid A = a_2) &= p.
\end{aligned}
\]
Hence the action-wise conditional mean rewards under the behavior distribution are
\begin{align}
    \E[r^\star(x, a) \mid A = a_1] &= p(B + \Delta)+(1 - p)\Delta
    = \Delta + pB, \\
    \E[r^\star(x, a) \mid A = a_2] &= (1 - p)B + p \cdot 0
    = (1 - p)B.
\end{align}
Motivated by this, consider the predictor $g$ such that
\[
    g(x, a_1) \coloneq  \Delta + pB, \quad
    g(x, a_2) \coloneq  (1 - p)B, \quad \forall x \in \mathcal{X},
\]
and take the finite value function class $\gF \coloneq  \{f^\star, g\}$.

We now compare the population squared risks of the two candidates.
To this end, define the population squared-loss objective
\[
    \gL^{\VR}(f) \coloneq  \E_{x \sim \rho, a \sim \piref(\cdot \mid x)}\left[\left(f(x, a) - r^\star(x, a)\right)^2\right],
\]
and let
\(
    f^\circ_{\VR} \in \argmin_{f\in\mathcal{F}}\gL^{\VR}(f)
\)
denote the population \ValueRegression solution.
For $g$, a direct calculation yields:
\[
    \gL^{\VR}(g) = \frac{p}{2}(1 - p)^2B^2 + \frac{1 - p}{2}p^2B^2 + \frac{1 - p}{2}p^2B^2 + \frac{p}{2}(1 - p)^2B^2
    = p(1 - p)B^2.
\]
On the other hand, since $f^\star(x,a) - r^\star(x,a) = -b^\star(x)$,
\[
    \gL^{\VR}(f^\star) = \E_x[b^\star(x)^2] = \frac{B^2}{2}.
\]
Since $p(1 - p) \leq \tfrac{1}{4} < \tfrac{1}{2}$, we have
\(
    \gL^{\VR}(g) < \gL^{\VR}(f^\star)
\)
and therefore
\[
    f^\circ_{\VR} = g, \quad \varepsilon_{\rm aprx} = p(1 - p)B^2.
\]
Furthermore, we note that
\begin{align}
    g(x, a_2) - g(x, a_1) &= (1 - p)B - (\Delta + pB) \\
    &= (1 - 2p)B - \Delta.
\end{align}
Hence, whenever $(1 - 2p)B > \Delta$, the \VR solution $g$ prefers $a_2$ at every context, while $a_1$ is optimal at every context.

For a concrete instance at the natural reward scale, take
\begin{align}
    p = \frac{1}{4}, \quad 
    B = \frac{3}{4}R_{\max}, \quad
    \Delta = \frac{1}{8}R_{\max}.
\end{align}
Then a straightforward yet tedious calculation yields:
\[
    B + \Delta = \frac{7}{8}R_{\max} \leq R_{\max},
\]
so all rewards are in $[0, R_{\max}]$, while
\[
    (1 - 2p)B = \frac{3}{8}R_{\max} > \frac{1}{8}R_{\max} = \Delta,
\]
so the population \VR chooses the wrong action.
Moreover, $\varepsilon_{\rm aprx} = p(1 - p)B^2 = \tfrac{27}{256}R_{\max}^2$, implying that it is of order $R_{\max}^2$, while the induced \Greedy policy incurs
\[
    \Reg(f^\circ_{\VR}) = \Delta = \frac{1}{8}R_{\max}.
\]

\paragraph{Step 2: empirical \VR selects the same wrong predictor.}
It remains to transfer the strict population comparison to empirical \VR.
We keep the instance and the finite class $\mathcal{F} = \{f^\star, g\}$ unchanged.
The grouped dataset contains $N = N_xN_a$ observations, but the $N_a$ observations within a block share the random context $x_i$.
We therefore apply concentration to the $N_x$ independent blocks, retaining the within-block average over all $N_a$ actions.
For block $i$, define
\[
    D_i = \frac{1}{N_a}\sum_{j=1}^{N_a}\left[\left(g(x_i,a_{i,j}) - r_{i,j}\right)^2 - \left(f^\star(x_i,a_{i,j}) - r_{i,j}\right)^2\right].
\]
Then $D_1, \ldots, D_{N_x}$ are independent and
\begin{align}
    \widehat{\gL}^{\VR}(g) - \widehat{\gL}^{\VR}(f^\star) &= \frac{1}{N_x}\sum_{i=1}^{N_x} D_i, \\
    \E[D_i] &= \gL^{\VR}(g) - \gL^{\VR}(f^\star)
    = -\frac{45}{256}R_{\max}^2.
\end{align}
The single-observation loss differences at $(x_1,a_1)$, $(x_1,a_2)$, $(x_2,a_1)$, and $(x_2,a_2)$ are, respectively, $(-63, -135, 9, 81)R_{\max}^2/256$.
Every block average therefore satisfies
\begin{align}
    -\frac{135}{256}R_{\max}^2 &\leq D_i \leq \frac{81}{256}R_{\max}^2.
\end{align}
Applying Hoeffding's inequality to the $N_x$ independent blocks gives
\begin{align}
    \Pr\left(\widehat{\gL}^{\VR}(g) \geq \widehat{\gL}^{\VR}(f^\star)\right)
    &= \Pr\left(\frac{1}{N_x}\sum_{i=1}^{N_x}D_i \geq 0\right) \notag \\
    &\leq \exp\left(-\frac{2N_x(45R_{\max}^2/256)^2}{(216R_{\max}^2/256)^2}\right) \notag \\
    &= \exp\left(-\frac{25N_x}{288}\right).
    \label{eq:gmin-vr-block-concentration}
\end{align}
Consequently, on the event
\(
    \mathcal{E} = \left\{\widehat{\gL}^{\VR}(g) < \widehat{\gL}^{\VR}(f^\star)\right\},
\)
the unique empirical \VR minimizer is $g$, regardless of the ERM tie-breaking rule.
The event has probability at least $1 - \exp(-25N_x/288)$, uniformly over $N_a \geq 1$.
On this event, greedy selection from the empirical \VR predictor chooses $a_2$ at both contexts.
Consequently,
\[
    \Reg\left(\widehat{\pi}_N^{\VRGreedy}\right) = \Delta = \frac{R_{\max}}8.
\]
In conclusion, the \VR claim of Theorem~\ref{thm:vr_failure} holds with probability at least
$1 - \delta$ whenever
\(
    N_x \geq \frac{288}{25}\log\frac{1}{\delta}.
\)

\paragraph{Step 3: \VDR succeeds on the same instance.}
For $N_a \geq 2$, one block containing both actions is enough to distinguish $f^\star$ from $g$ using reward differences.
In particular, the baseline cancels from every within-context difference, so
\(
    \widehat{\gL}^{\VDR}(f^\star) = 0.
\)
In contrast,
\begin{align}
    g(x,a_1) - g(x,a_2) &= -\frac{1}{4} R_{\max}, \\
    r^\star(x,a_1) - r^\star(x,a_2) &= \frac{1}{8} R_{\max}.
\end{align}
Therefore, if any context block contains both actions, $g$ has strictly positive \VDR loss and $f^\star$ is the unique empirical \VDR minimizer.
For either context, the probability that all $N_a$ sampled actions agree is $p^{N_a} + (1 - p)^{N_a}$.
Thus, for any ERM tie-breaking rule, independence across blocks gives
\begin{align}
    \Pr\left(\widehat{f}_N^{\VDR} \ne f^\star\right)
    \leq \left[p^{N_a} + (1 - p)^{N_a}\right]^{N_x}
    \leq \left(\frac{5}{8}\right)^{N_x}.
\end{align}
Therefore, \VDR followed by \Greedy action selection has zero regret with probability at least $1 - (5/8)^{N_x}$.
This completes the proof.

\subsection{\texorpdfstring{Agnostic Minimax Lower Bound}{Agnostic Minimax Lower Bound}}
\label{app:agnostic}

We now ask whether the approximation and statistical terms in the regret bound can be improved without Assumption~\ref{asm:weak_realizability}.
The answer is negative in a worst-case sense:
for every learner, some admissible true reward forces regret of order $\sqrt{C\varepsilon} + R_{\max}\sqrt{C\log|\mathcal{F}|/N}$ with probability bounded away from zero.
Unlike the preceding example, the learner may output any policy, not only one obtained from a predictor in $\mathcal{F}$.
We write $\Reg_{r^\star}$ when the dependence of regret on the true reward needs to be explicit.

For comparison, combining the finite-class \VR prediction bound with the \Pessimism policy guarantee in \Cref{app:offline-regret} yields, under the corresponding implementation assumptions, a regret upper bound whose leading terms are
\[
    \sqrt{C^\star \varepsilon_{\rm aprx}} + R_{\max}\sqrt{\frac{C^\star \log |\mathcal F|}{N}}
\]
when $N_a$ is bounded.
The lower bound below shows that these two dependencies are unavoidable in a worst-case sense over a coverage-budget class.

\begin{theorem}[Agnostic minimax lower bound]
\label{thm:agnostic_lower}
Let $C \geq 2$, $R_{\max} > 0$, and $d \geq 1$ be an integer, and let
\(
    \varepsilon \in \left[0,\frac{R_{\max}^2}{32C}\right].
\)
For every pair of positive integers $N_x,N_a$ with $N = N_xN_a \geq Cd$, there exists an instance
\(
    (\mathcal{X}, \mathcal{A}, \rho, \piref, \mathcal{F}, \mathcal{R}^\star)
\)
with
\(
    (|\mathcal{A}|, |\mathcal{F}|, |\mathcal{X}|) = \left(2, 2^d, d + \max\left\{1, \left\lceil\frac{43N\varepsilon}{R_{\max}^2}\right\rceil\right\}\right).
\)
All observed rewards and all functions in $\mathcal{F}$ take values in $[0,R_{\max}]$, and the following hold.

\smallskip
\noindent\textup{(i)} The coverage coefficients satisfy $C_{\mathcal{F}}^{\infty} = C$ and $C^\star(r^\star) \leq C$ for every $r^\star \in \mathcal{R}^\star$.
Every true reward has approximation error exactly $\varepsilon$:
\[
    \inf_{f \in \mathcal{F}}\E_{x,a}\left[\left(f(X,A) - r^\star(X,A)\right)^2\right]
    = \varepsilon,
\]
with the infimum attained.
Conditional on the complete context--action design, the reward noise variables are independent, have mean zero, and are bounded in absolute value by $R_{\max}$.

\smallskip
\noindent\textup{(ii)} The expected minimax regret satisfies
\begin{equation}
    \inf_{\mathsf L}\max_{r^\star \in \mathcal{R}^\star}
    \E_{r^\star}^{\mathsf L}\left[\Reg_{r^\star}(\widehat{\pi}_N)\right]
    \geq \frac{1}{48}\left(\sqrt{C\varepsilon} + R_{\max}\sqrt{\frac{C\log_2|\mathcal{F}|}{N}}\right).
    \label{eq:lb-expectation}
\end{equation}

\smallskip
\noindent\textup{(iii)} The constant-probability version also holds:
\begin{equation}
    \inf_{\mathsf L}\max_{r^\star \in \mathcal{R}^\star}
    \Pr_{r^\star}^{\mathsf L}\left(\Reg_{r^\star}(\widehat{\pi}_N)
    \geq \frac1{96}\left[\sqrt{C\varepsilon} + R_{\max}\sqrt{\frac{C\log_2|\mathcal{F}|}{N}}\right]\right)
    \geq \frac{1}{8}.
    \label{eq:gmin-lb-probability}
\end{equation}
Here the infimum is over all measurable learners $\mathsf L$ that map the grouped dataset to a possibly randomized policy $\widehat{\pi}_N : \mathcal{X} \to \Delta(\mathcal{A})$.
The outer expectations and probabilities include both the data and the learner's internal randomness.
\end{theorem}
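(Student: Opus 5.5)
The plan is to prove the bound as the sum of two independent hard sub-problems on disjoint context sets: $d$ ``hard'' contexts that produce the statistical term $R_{\max}\sqrt{C\log_2|\gF|/N}$, and $m\coloneq\max\{1,\lceil 43N\varepsilon/R_{\max}^2\rceil\}$ ``rare'' contexts that produce the approximation term $\sqrt{C\varepsilon}$. Since $\max\{u,v\}\geq\frac12(u+v)$, it suffices to show that each sub-problem alone forces expected regret of order $\frac1{24}$ times its term. Any randomized learner restricted to one block of contexts is still a learner for that sub-problem, because the other block's data can be simulated from a known prior.

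The common design is two actions, with a behavior policy $\piref(a_1\mid x)=1/C$ and $\piref(a_2\mid x)=1-1/C$ at every context. The class $\gF=\{f_\sigma:\sigma\in\{\pm1\}^d\}$ is defined as follows: $f_\sigma(x_k,a_1)=R_{\max}/2+\sigma_k\gamma$ and $f_\sigma(x_k,a_2)=R_{\max}/2$ on hard context $x_k$, and $f_\sigma\equiv R_{\max}/2-\gamma'$ on $a_1$ at rare contexts (and $R_{\max}/2$ on $a_2$) for a small fixed $\gamma'>0$. Because $a_1$ is greedy for some $f_\sigma$ at every hard context and the rare contexts carry little mass, I will tune $\rho$ so that $C_\gF^\infty=C$ exactly, and $C^\star\leq C$ will hold because $1/\piref\leq C$ for every action.

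\textbf{Statistical term.} This is an Assouad argument over $\sigma\in\{\pm1\}^d$. I put mass $\approx 1/(2d)$ on each hard context and use rewards in $\{0,R_{\max}\}$ with means given by $f_\sigma$, so the model is realizable there and the noise is bounded by $R_{\max}$. Flipping $\sigma_k$ changes the law only through the $a_1$-observations at $x_k$. Their expected number is about $N/(2dC)$, so the KL divergence is $\lesssim N\gamma^2/(dCR_{\max}^2)$. Choosing $\gamma=c R_{\max}\sqrt{Cd/N}$ (at most $R_{\max}/4$ thanks to $N\geq Cd$) keeps this KL at most a constant, giving per-coordinate testing error at least $1/4$. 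Each error costs regret $\gamma/(2d)$ in mass-weighted terms, so Assouad yields expected regret $\gtrsim\gamma\asymp R_{\max}\sqrt{C\log_2|\gF|/N}$.

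\textbf{Approximation term.} Here $\mathcal R^\star$ also ranges over signs $\tau\in\{\pm1\}^m$, with $r^\star(x'_\ell,a_1)=R_{\max}/2+\tau_\ell\beta$ and $r^\star(x'_\ell,a_2)=R_{\max}/2$. Every $f\in\gF$ is constant in $\tau$ on these contexts, so each contributes squared error $\approx\mu\cdot\frac1C\beta^2$, where $\mu$ is the mass per rare context. I set $m\mu\beta^2/C=\varepsilon$, adjusting $\gamma'$ so that the infimum is attained and equals exactly $\varepsilon$. Choosing $\mu\approx 1/(43N)\cdot(\text{const})$ makes each rare context unobserved with probability $\geq1/2$. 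Conditioned on that event, its sign is independent of the data, so the learner errs with probability $1/2$. The resulting regret is $\gtrsim m\mu\beta=\sqrt{C\varepsilon\,m\mu}$.

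\textbf{Main obstacle and final step.} The main obstacle is this parameter balancing. One needs $m\mu$ to be a constant fraction (so the regret is $\sqrt{C\varepsilon}$) while keeping each context individually rare and $\beta\leq R_{\max}/2$. The count $m\propto N\varepsilon/R_{\max}^2$ together with $\varepsilon\leq R_{\max}^2/(32C)$ is exactly what should make this consistent. My first attempt is $\mu=c_0/N$ and total rare mass $m\mu\asymp\varepsilon/R_{\max}^2\cdot\mathrm{const}$, rescaling the hard block to fill the remaining mass. I will then check whether $\beta=\sqrt{C\varepsilon/(m\mu)}\lesssim R_{\max}$; if the $\sqrt{m\mu}$ factor is lost, I would instead let $\mu$ depend on $C$ (observations of $a_1$ are the only informative ones, so $\mu\asymp C/N$ may suffice). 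Finally, (iii) follows from (ii): regret is bounded by $2\gamma+2\beta\lesssim$ a constant multiple of the target, so a reverse-Markov (Paley--Zygmund-type) step converts the expectation bound at level $\frac1{48}$ into probability $\geq\frac18$ at level $\frac1{96}$.
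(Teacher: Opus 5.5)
\textbf{Gap: the approximation term.} Your statistical block is essentially the paper's construction: rare action with probability $1/C$, $d$ contexts of mass $1/(2d)$, and Assouad over $\sigma$ with KL of order $N\gamma^2/(dCR_{\max}^2)$. The approximation block, however, has a genuine gap. Making rare contexts \emph{unobserved} cannot produce $\sqrt{C\varepsilon}$ once the number of rare contexts is pinned, as in the statement, to $m=\max\{1,\lceil 43N\varepsilon/R_{\max}^2\rceil\}$.

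In your design that block contributes regret at most $m\mu\beta$ (mass times gap), while its approximation error is $m\mu\beta^2/C=\varepsilon$. So its regret is at most $C\varepsilon/\beta$. Reaching $c\sqrt{C\varepsilon}$ therefore forces $\beta\le\sqrt{C\varepsilon}/c$, hence $m\mu=C\varepsilon/\beta^2\ge c^2$, i.e.\ $\mu\ge c^2/m$.

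Now take $N_a=1$; the statement must hold for every $(N_x,N_a)$. The number of rare-action observations at a rare context is then $\mathrm{Bin}(N,\mu/C)$, with mean $N\mu/C\gtrsim R_{\max}^2/(C\varepsilon)$ whenever $m>1$. So the context is uninformative only with probability $e^{-\Omega(R_{\max}^2/(C\varepsilon))}$.

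Your two trial choices illustrate this. $\mu=c_0/N$ and $\mu\asymp C/N$ give total rare mass $m\mu\asymp\varepsilon/R_{\max}^2$ and $m\mu\asymp C\varepsilon/R_{\max}^2$ respectively. Since $\beta\le R_{\max}/2$, the rare-block regret is then at most of order $\varepsilon/R_{\max}$ and $C\varepsilon/R_{\max}$. These equal $\sqrt{C\varepsilon}$ times $\sqrt{\varepsilon/(CR_{\max}^2)}$ and $\sqrt{C\varepsilon}/R_{\max}$, and both factors are arbitrarily small. This is fatal in the regime $R_{\max}^2d/N\ll\varepsilon\ll R_{\max}^2/C$, where the $\sqrt{C\varepsilon}$ term dominates the statistical one. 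Your mechanism would need about $N/C$ rare contexts of mass $\asymp C/N$ each, and the prescribed $|\mathcal{X}|$ is smaller by a factor of order $C\varepsilon/R_{\max}^2$. So this is not a balancing issue but the wrong source of hardness.

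\textbf{The missing idea.} The parameters forced above ($\mu\asymp 1/m$, $\beta\asymp\sqrt{C\varepsilon}$) are in fact the right ones. What must change is the mechanism: each rare context is then observed many times, so the $\tau$-signs must be hidden by indistinguishability rather than by absence of data.

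Give each rare context mass $1/(2m)$ and only a small gap $R_{\max}\Delta_0/2$ on the rare action, with $\Delta_0=\sqrt{8C\varepsilon}/R_{\max}\le 1/2$. The approximation error is then exactly $\frac12\cdot\frac1C\cdot\frac{R_{\max}^2\Delta_0^2}{4}=\varepsilon$, and the regret at stake is $R_{\max}\Delta_0/4\asymp\sqrt{C\varepsilon}$. Each rare context is hit about $N/(2mC)$ times through the rare action, but each such observation carries KL at most $4\Delta_0^2/(1-\Delta_0^2)$. Hence $\tau$-neighbours satisfy $D_{\mathrm{KL}}\le\frac{N}{2mC}\cdot\frac{4\Delta_0^2}{1-\Delta_0^2}\le\frac{64N\varepsilon}{3mR_{\max}^2}\le\frac12$. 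This is exactly where $m\ge 43N\varepsilon/R_{\max}^2$ enters.

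\textbf{Combining the blocks and part (iii).} Because regret is additive over disjoint contexts, a single weighted Assouad argument over $(\sigma,\tau)\in\{-1,+1\}^{d+m}$ yields $\mathbb{E}[\mathrm{Reg}]\ge S/4$. Here $S=\gamma/2+R_{\max}\Delta_0/4$ is the almost-sure maximal regret, and no $\max\ge$ average step is needed. Part (iii) then follows from $0\le Z\le S$ by reverse Markov. That step needs exactly this comparison to $S$ under the same hypothesis, not to an unspecified constant multiple of the target.

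\textbf{The offset $\gamma'$.} Set $\gamma'=0$. With $\gamma'>0$ the rare-context error $(\gamma'+\tau_\ell\beta)^2$ depends on $\tau$, so the approximation error is not exactly $\varepsilon$ for every $r^\star$. Moreover, every $f\in\mathcal{F}$ then strictly prefers $a_2$ at rare contexts, so $\mathcal{A}_{\mathcal{F}}(x'_\ell)=\{a_2\}$ and $C^\infty_{\mathcal{F}}<C$ however $\rho$ is tuned. With $\gamma'=0$ both actions tie at rare contexts, and $C^\infty_{\mathcal{F}}=C$ holds automatically.
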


\paragraph{Related work.}
Coverage-dependent amplification of prediction error appears in the error-propagation analyses of approximate dynamic programming and in batch RL~\citep{munos2003error,munos2007performance,farahmand2010error,chen2019information}.
\citet{amortila2023optimal} characterize optimal approximation factors for misspecified linear off-policy value-function estimation, which is distinct from the offline policy-learning problem considered here.
Our statistical term is related to the single-policy-concentrability rates of~\citet{rashidinejad2022offline}, but their pointwise coverage condition differs from our expectation-form coefficient.
Theorem~\ref{thm:agnostic_lower} establishes a lower bound over a coverage-budget class; it does not identify $C^\star$ with $C_{\mathcal{F}}^{\infty}$ at every member of the family.

For the misspecification term, two related lines of work are particularly relevant.
First, lower bounds for misspecified linear models use small uniform approximation errors to make identifying a near-optimal action difficult~\citep{du2020good,lattimore2020learning,vanroy2019comments}.
Our construction shares the idea of distributing uncertainty over many coordinates, but uses a finite function class, offline observations, and an average squared-error budget.
Second, other lower bounds apply to restricted classes of learners or policies.
\citet[Theorem~2]{krishnamurthy2021adapting} prove an average-misspecification lower bound for randomized policies induced by kernels in the convex hull of a model-induced kernel class.
\citet[Proposition~2.2]{amortila2024mitigating} establish an asymptotic ERM lower bound under uniform misspecification; their subsequent discussion gives an obstruction for proper learners under $L_2$ misspecification.
Here the lower bound applies to arbitrary measurable learners, including learners that do not output a function in $\mathcal{F}$.
The number of misspecified contexts may grow with $N$, so this is a finite-sample minimax statement rather than an asymptotic impossibility result at a fixed finite instance.

\begin{remark}[No conflict with $L_\infty$-based possibility results]
\label{rem:gmin-uniform-budget}
\citet{amortila2024mitigating} show that disagreement-based regression can avoid coverage amplification relative to the uniform misspecification budget
\[
    \varepsilon_\infty = \inf_{f \in \mathcal{F}}\|f - r^\star\|_\infty.
\]
This does not contradict Theorem~\ref{thm:agnostic_lower}, which is parameterized by average squared approximation error.
On our instances, the approximation error of the matching in-class function is supported on the pairs $(y_k,a_2)$, whose total probability under $\rho\otimes\piref$ is $1/(2C)$.
Consequently,
\[
    \varepsilon_\infty = \sqrt{2C\varepsilon_{\rm aprx}}.
\]
The misspecification contribution to our lower bound is therefore of order $\varepsilon_\infty$ itself, not an additional coverage amplification of that uniform budget.
Thus the lower bound in terms of $(C,\varepsilon_{\rm aprx})$ does not imply adaptation to the finer $L_\infty$ approximation budget.
\end{remark}

\subsection{\texorpdfstring{Proof of \Cref{thm:agnostic_lower}: Agnostic Minimax Lower Bound}{Proof of Theorem G.1: Agnostic Minimax Lower Bound}}
\label{app:agnostic_proof}

Fix the parameters in Theorem~\ref{thm:agnostic_lower} and write $\bar\varepsilon = \varepsilon/R_{\max}^2$.
For any fixed learner, it suffices to find a true reward for which regret is large.
We encode the action preference at each context by an unknown sign: positive favors $a_2$, and negative favors $a_1$.
At zero gap, either action is optimal and that coordinate contributes no regret.
Considering all sign assignments gives a hypercube, reducing policy learning to recovering the unknown action-preference signs.
The reduction has two properties:
First, expected regret equals a weighted sum of sign-error probabilities, whose weights sum to $S$.
Second, data distributions for neighboring sign vectors have total variation distance at most $1/2$.
Weighted Assouad testing then gives a true reward with expected regret at least $S/4$.
Because regret is always at most $S$, the same true reward has regret at least $S/8$ with probability at least $1/8$.
The construction below makes these thresholds at least the two rates in the theorem.

We first specify the instance and derive this regret reduction.
We then verify the approximation and coverage requirements and bound the information in the grouped data.
Constants are not optimized.

\paragraph{Step 1: construct the two context groups.}
The first block will carry $d$ signs represented in $\mathcal{F}$; the second will carry $m$ signs absent from the modeled action differences.
We use normalized gaps $\Delta$ and $\Delta_0$ to control their statistical and approximation contributions, respectively.
Choose
\begin{align}
    \Delta &= \frac{1}{3}\sqrt{\frac{Cd}{N}} \leq \frac{1}{3}, \\
    \Delta_0 &= \sqrt{8C\bar\varepsilon} \leq \frac{1}{2}, \\
    m &= \max\left\{1, \left\lceil43N\bar\varepsilon\right\rceil\right\}.
\end{align}
The inequalities use $N \geq Cd$ and $C\bar\varepsilon \leq 1/32$.
The choice of $m$ spreads the misspecification over enough contexts to keep each sign difficult to identify.
When $\varepsilon = 0$, we have $\Delta_0 = 0$, and the misspecification block contributes no regret.

Set $\mathcal{X} = \{x_1,\ldots,x_d\} \cup \{y_1,\ldots,y_m\}$, $\mathcal{A} = \{a_1,a_2\}$,
and choose the context distribution and behavior policy as:
\[
    \rho(x_j) = \frac1{2d}, \qquad
    \rho(y_k) = \frac1{2m}, \qquad
    \piref(a_2 \mid x) = \frac{1}{C} \quad
    \forall x \in \mathcal{X}.
\]
The $x$-contexts yield the statistical lower bound, while the $y$-contexts yield the misspecification lower bound.
Since $C \geq 2$, we have
\begin{align}
    \piref(a_1 \mid x)
    = 1 - \frac{1}{C}
    \geq \frac{1}{C}
    = \piref(a_2 \mid x).
\end{align}

For each $\sigma \in \{-1,+1\}^d$, define
\begin{align}
    f_\sigma(x_j,a_1) &= \frac{R_{\max}}2, \\
    f_\sigma(x_j,a_2) &= \frac{R_{\max}}2(1 + \sigma_j\Delta), \\
    f_\sigma(y_k,a_1) &= f_\sigma(y_k,a_2) \\
    &= \frac{R_{\max}}2.
\end{align}
For $\tau \in \{-1,+1\}^m$, let $r^\star_{\sigma,\tau}$ agree with $f_\sigma$ everywhere except at the pairs $(y_k,a_2)$, where
\[
    r^\star_{\sigma,\tau}(y_k,a_2) = \frac{R_{\max}}2(1 + \tau_k\Delta_0).
\]
Set
\begin{align}
    \mathcal{F} &\coloneq  \{f_\sigma : \sigma \in \{-1,+1\}^d\}, \\
    \mathcal{R}^\star &\coloneq  \{r^\star_{\sigma,\tau}: (\sigma,\tau) \in \{-1,+1\}^d \times \{-1,+1\}^m\}.
\end{align}
Then $|\mathcal{F}| = 2^d$, so $d = \log_2|\mathcal{F}|$.
All function values and true mean rewards lie in $[R_{\max}/4,3R_{\max}/4]$.
When $\varepsilon > 0$, the classes $\mathcal{F}$ and $\mathcal{R}^\star$ are disjoint; when $\varepsilon = 0$, $\mathcal{R}^\star = \mathcal{F}$ as sets.

Generate the contexts and actions according to the grouped sampling scheme.
Conditional on the complete context--action design, let the rewards be independent with
\[
    r_{i,j} = R_{\max} B_{i,j}, \qquad
    B_{i,j} \sim \Ber\left(\frac{r^\star(x_i,a_{i,j})}{R_{\max}}\right).
\]
Then the conditional reward mean is $r^\star(x_i,a_{i,j})$, and
\[
    \eta_{i,j} = r_{i,j} - r^\star(x_i,a_{i,j})
\]
is conditionally mean zero and bounded in absolute value by $R_{\max}$.

\paragraph{Step 2: reduce regret to sign recovery.}
The construction makes the optimal action depend on one sign at each context.
This turns the learner's regret into a weighted testing loss.
Write $a^{+1} = a_2$ and $a^{-1} = a_1$.
Under $r^\star_{\sigma,\tau}$, the optimal action at $x_j$ is $a^{\sigma_j}$, with gap $R_{\max}\Delta/2$.
At $y_k$, action $a^{\tau_k}$ is optimal, with gap $R_{\max}\Delta_0/2$; when $\Delta_0 = 0$, both actions are optimal and this gap is zero.
For every possibly randomized policy $\widehat{\pi} : \mathcal{X} \to \Delta(\mathcal{A})$,
\begin{equation}
    \Reg_{\sigma,\tau}(\widehat{\pi}) = \sum_{j=1}^d \alpha_j\widehat{\pi}(a^{-\sigma_j} \mid x_j) + \sum_{k=1}^m \beta_k\widehat{\pi}(a^{-\tau_k} \mid y_k),
    \label{eq:app-regret}
\end{equation}
where
\[
    \alpha_j = \frac{R_{\max}\Delta}{4d}, \qquad
    \beta_k = \frac{R_{\max}\Delta_0}{4m}.
\]
Each weight is the probability of its context multiplied by the action gap, so a wrong sign contributes exactly the corresponding regret.
In particular, every policy satisfies $0 \leq \Reg_{\sigma,\tau}(\widehat{\pi}) \leq S$, where
\begin{align}
    S = \sum_{j=1}^d \alpha_j + \sum_{k=1}^m \beta_k
    = \frac{R_{\max}(\Delta + \Delta_0)}4.
\end{align}
To lower-bound the weighted error, we use the following weighted version of Assouad's lemma \citep{assouad1983,yu1997lecam}.
In particular, we compare pairs of true rewards that differ at only one context.
If their data distributions are close, a learner cannot reliably choose the correct action under both rewards.

\begin{lemma}[Weighted Assouad's lemma]
\label{lem:gmin-assouad}
Let $\{P_\omega : \omega \in \{-1,+1\}^K\}$ be probability measures on a common measurable space, and let $w_1,\ldots,w_K \geq 0$.
Suppose that, for each coordinate $\ell$, there is $\delta_\ell \in [0,1]$ such that
\[
    D_{\mathrm{TV}}(P_\omega,P_{\omega'}) \leq \delta_\ell
\]
whenever $\omega$ and $\omega'$ differ only at coordinate $\ell$.
Then every possibly randomized estimator $\widehat\omega \in \{-1,+1\}^K$ based on an observation from $P_\omega$ satisfies
\[
    \max_{\omega \in \{-1,+1\}^K}\sum_{\ell=1}^K w_\ell \Pr_\omega(\widehat\omega_\ell \ne \omega_\ell)
    \geq \frac{1}{2}\sum_{\ell=1}^K w_\ell(1 - \delta_\ell).
\]
\end{lemma}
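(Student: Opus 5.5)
We prove the weighted Assouad lemma (\Cref{lem:gmin-assouad}) by the standard averaging argument: replace the maximum over the hypercube by the average under the uniform prior on $\{-1,+1\}^K$, and then decompose coordinatewise. Since a maximum dominates an average,
\[
    \max_{\omega}\sum_{\ell=1}^K w_\ell \Pr_\omega(\widehat\omega_\ell\neq\omega_\ell)
    \geq 2^{-K}\sum_{\omega}\sum_{\ell=1}^K w_\ell\Pr_\omega(\widehat\omega_\ell\neq\omega_\ell)
    = \sum_{\ell=1}^K w_\ell\, 2^{-K}\sum_{\omega}\Pr_\omega(\widehat\omega_\ell\neq\omega_\ell).
\]
It therefore suffices to show that, for each fixed coordinate $\ell$, the averaged error $2^{-K}\sum_\omega \Pr_\omega(\widehat\omega_\ell\neq\omega_\ell)$ is at least $\frac12(1-\delta_\ell)$.

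Fix $\ell$ and pair each $\omega$ having $\omega_\ell=+1$ with its neighbour $\omega^{(\ell)}$, obtained by flipping coordinate $\ell$. This partitions the $2^K$ vertices into $2^{K-1}$ pairs, each differing only at coordinate $\ell$. For one such pair, write $E\coloneq\{\widehat\omega_\ell=-1\}$. Then
\[
    \Pr_\omega(\widehat\omega_\ell\neq+1)+\Pr_{\omega^{(\ell)}}(\widehat\omega_\ell\neq-1)=P_\omega(E)+1-P_{\omega^{(\ell)}}(E)\geq 1-D_{\mathrm{TV}}(P_\omega,P_{\omega^{(\ell)}})\geq 1-\delta_\ell.
\]
If the estimator is randomized, we apply the same inequality to the product of each data distribution with the learner's internal randomization. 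This product measure has the same total variation distance, so the bound is unchanged. Summing over the $2^{K-1}$ pairs and dividing by $2^K$ gives the per-coordinate bound $\frac12(1-\delta_\ell)$. Multiplying by $w_\ell\geq 0$ and summing over $\ell$ proves the lemma.

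The only delicate point is measurability and randomization: $E$ must be an event in the joint space of data and internal randomness. This is handled by the product-measure remark above. Everything else is routine, and we expect no real obstacle.
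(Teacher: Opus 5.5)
Your proposal is correct and follows essentially the same argument as the paper. Both bound the maximum by the uniform average, pair each vertex with its coordinate-$\ell$ neighbour to get a combined testing error of at least $1-\delta_\ell$ per pair, and treat randomized estimators by noting that an independent internal seed leaves the total variation distance unchanged.
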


\begin{proof}
Pair each $\omega$ with its neighbor obtained by flipping coordinate $\ell$.
The sum of their testing error probabilities is at least $1 - D_{\mathrm{TV}}(P_\omega,P_{\omega'})$.
Averaging over all sign vectors gives coordinate-wise error at least $(1 - \delta_\ell)/2$.
Multiply by $w_\ell$, sum over $\ell$, and use that the maximum is at least the uniform average.
Randomized tests obey the same inequality because an independent random seed does not change total variation.
\end{proof}

To apply the lemma to an arbitrary learner, condition on its output policy and draw $A_j \sim \widehat{\pi}_N(\cdot \mid x_j)$.
Set $\widehat{\sigma}_j = +1$ if $A_j = a_2$, and set $\widehat{\sigma}_j = -1$ otherwise.
Define $\widehat\tau_k$ analogously at each $y_k$.
Equation~\eqref{eq:app-regret} shows that the weighted testing risk equals the learner's expected regret.
This reduction imposes no restriction to predictors in $\mathcal{F}$ or to policies induced by such predictors.
It remains to verify the theorem's budgets and the neighboring-distribution condition in Lemma~\ref{lem:gmin-assouad}.

\paragraph{Step 3: verify coverage and exact misspecification.}
We now check that the instance belongs to the class in the theorem.
At each $x_j$, both actions maximize some member of $\mathcal{F}$.
At every $y_k$, both actions tie under every member of $\mathcal{F}$.
Thus $\mathcal{A}_{\mathcal{F}}(x) = \mathcal{A}$ at every context, and
\begin{align}
    C_{\mathcal{F}}^{\infty} = \E_X\left[\max_{a \in \mathcal{A}}\frac1{\piref(a \mid X)}\right]
    = C.
\end{align}
For every deterministic policy $\pi$, including any optimal policy,
\[
    C(\pi) \leq C.
\]
This verifies the coverage-budget assertion in part~\textup{(i)}.
The precise dependence of $C^\star$ on the true reward is recorded in Remark~\ref{rem:gmin-coverage} below.

\smallskip
\noindent\emph{Exact misspecification.}
For any $r^\star = r^\star_{\sigma,\tau}$ and any $f_{\sigma'} \in \mathcal{F}$,
\begin{align}
    \E_{x,a}\left[(f_{\sigma'} - r^\star)^2\right] &= \sum_{j=1}^d\frac1{2d}\frac{1}{C}\frac{R_{\max}^2\Delta^2}{4}(\sigma_j' - \sigma_j)^2 + \sum_{k=1}^m\frac1{2m}\frac{1}{C}\frac{R_{\max}^2\tau_k^2\Delta_0^2}{4} \notag \\
    &= \frac{R_{\max}^2\Delta^2}{2Cd}d_{\rm H}(\sigma',\sigma) + \frac{R_{\max}^2\Delta_0^2}{8C},
    \label{eq:gmin-exact-misspecification}
\end{align}
where $d_{\rm H}$ denotes the unnormalized Hamming distance.
The infimum is attained at $f_\sigma$, and hence
\begin{align}
    \varepsilon_{\rm aprx} &= \frac{R_{\max}^2\Delta_0^2}{8C}
    = \varepsilon.
\end{align}
This completes part~\textup{(i)}.

\smallskip
\noindent\emph{Uniform approximation error.}
The same construction gives
\begin{align}
    \inf_{f \in \mathcal{F}}\|f - r^\star\|_\infty
    = \frac{R_{\max}\Delta_0}{2}
    = \sqrt{2C\varepsilon}.
\end{align}
In particular, every candidate has this absolute error on every $(y_k,a_2)$, and $f_\sigma$ agrees with $r^\star_{\sigma,\tau}$ on all other pairs.

\paragraph{Step 4: bound the information in the grouped data.}
We next show that changing one sign changes the data distribution only slightly.
Although the independent sampling units are context blocks, conditional independence of the rewards lets the KL divergence scale with the total number of observations $N = N_xN_a$.
Let $P_{\sigma,\tau}$ denote the joint law of the grouped dataset under $r^\star_{\sigma,\tau}$.
The $N$ triplets are not generally independent, but the context--action design has the same law under every hypothesis.
Conditional on this design, the reward law is a product by construction.
Consequently, writing $u = (\sigma,\tau)$ and $v = (\sigma',\tau')$, the KL chain rule gives
\begin{align}
    \KL(P_u\|P_v) &= \E_{\mathrm{design}}\sum_{i=1}^{N_x}\sum_{j=1}^{N_a}\KL\left(\frac{r_u^\star(x_i,a_{i,j})}{R_{\max}}, \frac{r_v^\star(x_i,a_{i,j})}{R_{\max}}\right) \notag \\
    &= N\E_{x,a}\left[\KL\left(\frac{r_u^\star(X,A)}{R_{\max}}, \frac{r_v^\star(X,A)}{R_{\max}}\right)\right].
    \label{eq:gmin-grouped-kl}
\end{align}
The final equality uses linearity of expectation and the common marginal law of each observed context--action pair.
It does not require the $N$ triplets to be independent.
For Bernoulli means $p,q \in (0,1)$, we use
\[
    \KL(p,q) \leq \frac{(p - q)^2}{q(1 - q)},
\]
which follows by applying $\log t \leq t - 1$ to the two terms in Bernoulli KL.
We also use Pinsker's inequality,
\[
    D_{\mathrm{TV}}(P,Q) \leq \sqrt{\frac{\KL(P\|Q)}2}.
\]

\smallskip
\noindent\emph{$\sigma$-neighbors.}
Write $\sigma^{\oplus j}$ for $\sigma$ with its $j$th sign flipped.
If two sign vectors differ only at coordinate $j \in [d]$, the reward laws differ only at $(x_j,a_2)$, an event of probability $1/(2dC)$.
At that pair, the Bernoulli parameters are $(1 + \Delta)/2$ and $(1 - \Delta)/2$.
Since $\Delta^2 = Cd/(9N) \leq 1/9$,
\begin{align}
    \KL(P_{\sigma,\tau}\|P_{\sigma^{\oplus j},\tau})
    &\leq \frac{N}{2dC}\frac{4\Delta^2}{1 - \Delta^2} \\
    &\leq \frac{N}{2dC}\frac{4Cd}{9N}\frac{9}{8} \\
    &= \frac{1}{4}.
\end{align}
Therefore,
\begin{align}
    D_{\mathrm{TV}}(P_{\sigma,\tau},P_{\sigma^{\oplus j},\tau}) \leq \sqrt{\frac{1}{8}} < \frac{1}{2}.
\end{align}

\smallskip
\noindent\emph{$\tau$-neighbors.}
Write $\tau^{\oplus k}$ for $\tau$ with its $k$th sign flipped.
If the sign vectors differ only at coordinate $k \in [m]$, the reward laws differ only at $(y_k,a_2)$, an event of probability $1/(2mC)$.
Since $\Delta_0^2 = 8C\bar\varepsilon \leq 1/4$ and $m \geq (128/3)N\bar\varepsilon$,
\begin{align}
    \KL(P_{\sigma,\tau}\|P_{\sigma,\tau^{\oplus k}})
    &\leq \frac{N}{2mC}\frac{4\Delta_0^2}{1 - \Delta_0^2} \\
    &\leq \frac{N}{2mC}\frac{4\cdot 8C\bar\varepsilon}{3/4} \\
    &= \frac{64N\bar\varepsilon}{3m} \\
    &\leq \frac{1}{2}.
\end{align}
Thus their total variation distance is at most $1/2$.
The condition on $m$ follows from $m \geq 43N\bar\varepsilon$ and $43 \geq 128/3$.
When $\varepsilon = 0$, neighboring laws in the $\tau$ coordinates coincide, and the bound is immediate.

\paragraph{Step 5: conclude the expectation and probability bounds.}
Apply Lemma~\ref{lem:gmin-assouad} with $K = d + m$ and parameter $\omega = (\sigma,\tau)$.
Use weights
\[
    (\alpha_1,\ldots,\alpha_d,\beta_1,\ldots,\beta_m)
\]
and $\delta_\ell = 1/2$ for every coordinate.
Use the sign estimator from Step~2, whose weighted testing risk equals the expected policy regret.
Hence, for every learner,
\[
\begin{aligned}
    \max_{\sigma,\tau}\E_{\sigma,\tau}\left[\Reg_{\sigma,\tau}(\widehat{\pi}_N)\right] &\geq \frac{1}{4}\left(\sum_{j=1}^d\alpha_j + \sum_{k=1}^m\beta_k\right) \\
    &= \frac{S}{4} \\
    &= \frac{R_{\max}(\Delta + \Delta_0)}{16} \\
    &= \frac{R_{\max}}{48}\sqrt{\frac{Cd}{N}} + \frac{\sqrt{C\varepsilon}}{4\sqrt2} \\
    &\geq \frac{1}{48}\left(\sqrt{C\varepsilon} + R_{\max}\sqrt{\frac{Cd}{N}}\right).
\end{aligned}
\]
Substituting $d = \log_2|\mathcal{F}|$ and taking the infimum over learners proves part~\textup{(ii)}.
The argument also allows the learner to know $(\mathcal{X},\mathcal{A},\rho,\piref,\mathcal{F},\mathcal{R}^\star,\varepsilon,N_x,N_a)$ in advance, but not the unknown true reward.

\smallskip
\noindent\emph{Constant-probability lower bound.}
Fix a learner, choose $(\sigma^\star,\tau^\star)$ attaining the preceding maximum, and write
\[
    Z = \Reg_{\sigma^\star,\tau^\star}(\widehat{\pi}_N).
\]
The randomness in $Z$ includes the dataset and any internal randomness used to produce the policy.
Action randomization is already averaged in the definition of policy regret.
By Eqn.~\eqref{eq:app-regret}, $0 \leq Z \leq S$ almost surely and $\E[Z] \geq S/4$.
For every $t \in (0,S)$,
\begin{align}
    \E[Z] &\leq t\Pr(Z < t) + S\Pr(Z \geq t) \\
    &\leq t + S\Pr(Z \geq t).
\end{align}
Taking $t = S/8$ gives
\begin{align}
    \Pr(Z \geq S/8) &\geq \frac{S/4 - S/8}{S}
    = \frac{1}{8}.
\end{align}
Moreover,
\begin{align}
    \frac{S}{8} &= \frac{R_{\max}(\Delta + \Delta_0)}{32} \\
    &= \frac{R_{\max}}{96}\sqrt{\frac{Cd}{N}} + \frac{\sqrt{C\varepsilon}}{8\sqrt2} \\
    &\geq \frac{1}{96}\left(\sqrt{C\varepsilon} + R_{\max}\sqrt{\frac{Cd}{N}}\right).
\end{align}
Taking the infimum over learners proves part~\textup{(iii)}.
This completes the proof.

\begin{remark}[The expectation-form coverage coefficient]
\label{rem:gmin-coverage}
The behavior policy is identical across contexts, so the constructed family satisfies $C_{\mathcal{F}}^{\infty} = C$.
For an optimal deterministic policy, write
\[
    t = \rho\{x : \pi^\star(x) = a_2\}.
\]
Then
\begin{align}
    C^\star &= tC + (1 - t)\frac{C}{C - 1}
    \leq C.
\end{align}
For $\varepsilon > 0$, the optimal actions are unique and
\[
    t = \frac1{2d}\sum_{j=1}^d\mathbf1\{\sigma_j = +1\} + \frac1{2m}\sum_{k=1}^m\mathbf1\{\tau_k = +1\}.
\]
Thus $C^\star$ varies with the true reward, whereas $C_{\mathcal{F}}^{\infty}$ does not.
For example, when $\varepsilon > 0$ and all signs are negative, $C^\star = C/(C - 1)$ while $C_{\mathcal{F}}^{\infty} = C$, so the two coefficients can differ arbitrarily as $C$ grows.
This observation does not assert that this particular member attains the minimax lower bound: a learner that always chooses $a_1$ is optimal on it.
The theorem establishes a worst-case lower bound over the class $C^\star \leq C$, not a hard-instance lower bound with prescribed $C^\star = C \ll C_{\mathcal{F}}^{\infty}$.
\end{remark}

%% file: 907Experiments.tex
\section{\texorpdfstring{Deferred Experimental Details}{Deferred Experimental Details}}

Here, we provide the missing details from \Cref{subsec:linear_comparison}.
We first give the complete construction and diagnostic results for the controlled synthetic experiment, then describe the LLM response-selection experiment, and finally present the downstream \MCTS mathematical-reasoning experiment as promised.
The first two experiments use the squared-loss \VR and \VDR objectives studied in the main text.
The \MCTS experiment instead uses a binary-preference variant of \VDR and provides a complementary evaluation of relative supervision over absolute supervision.

\subsection{\texorpdfstring{Controlled Synthetic Linear-Bandit Experiment}{Controlled Synthetic Linear-Bandit Experiment}}
\label{app:controlled-synthetic-experiments}

\Cref{subsec:linear_comparison} identifies two effects that can decide
between \VR and \VDR for linear classes.  First, even when the model is well
specified, \VDR learns only from within-context differences and can be hurt
when the behavior policy rarely compares the actions that matter.  Second, an
action-independent nuisance $b^\star$ misspecifies \VR but cancels from every
\VDR target.  This experiment constructs a family of linear contextual bandits
in which the two effects are controlled by separate parameters: the
cross-face probability $\delta$ for the first and the nuisance magnitude
$\beta$ for the second.  A third parameter, the alignment $\rho$, controls the
direction of the true value.
We first specify the problem, then explain which quantities of the theory each
parameter controls, and finally describe the estimators, protocol, and results.
Throughout this subsection, $\rho$ and $\delta$ denote these experimental
parameters, not the context distribution or the confidence level of the main
text.

\paragraph{Contexts, actions, and features.}
Let $d_x=5$ and $d_a=3$.  The context space is the unit sphere
$\gX=\sS^{d_x-1}=\{x\in\sR^{d_x}:\lVert x\rVert_2=1\}$, and contexts are drawn
from the uniform distribution $\rho = \operatorname{Unif}(\sS^{d_x-1})$.
The context
space is therefore continuous, and each data set contains $N_x$ independent
draws from it; we sample $X=Z/\lVert Z\rVert_2$ with
$Z\sim\mathcal N(\vzero,\mI_{d_x})$.
The action space consists of the eight normalized vertices of the cube in
$\sR^{d_a}$,
\begin{equation}
    \gA
    =\left\{
        a=\tfrac{1}{\sqrt 3}(s_1,s_2,s_3)^\top:
        s_1,s_2,s_3\in\{-1,+1\}
    \right\},
\end{equation}
and we identify each action with its vector, writing $s_k(a)\coloneq\sign(a_k)$.
We use the bilinear feature map and the corresponding linear class
\begin{equation}
    \vphi(x,a)=\operatorname{vec}(x a^\top)\in\sR^{d},
    \qquad
    f_{\mW}(x,a)=\langle\vphi(x,a),\operatorname{vec}(\mW)\rangle
    =x^\top\mW a,
    \qquad
    \mW\in\sR^{d_x\times d_a},
\end{equation}
so that $d=d_xd_a=15$.

\paragraph{Reward model.}
In each independently generated problem instance, we draw
$\vu^\star\sim\operatorname{Unif}(\sS^{d_x-1})$, independently of all
contexts, and keep it fixed within the instance. For $\rho\in[0,1]$, let
\begin{equation}
    \vc_\rho=\sqrt\rho\,\ve_1+\sqrt{1-\rho}\,\ve_2 \in \sR^{d_a},
    \qquad
    \mW_\rho^\star=\sqrt{d_xd_a}\,\vu^\star \vc_\rho^\top,
    \qquad
    f^\star\coloneq f_{\mW_\rho^\star}.
    \label{eq:synthetic-true-weight}
\end{equation}
Thus $f^\star(x,a)=\sqrt{15}\,(x^\top\vu^\star)(\vc_\rho^\top a)$ depends on
the first action coordinate when $\rho=1$ and on the second when $\rho=0$.
The factor $\sqrt{d_xd_a}$ normalizes the signal:
$\E[f^\star(X,A)^2]=1$ when $X\sim\operatorname{Unif}(\sS^{d_x-1})$ and $A$
is uniform on $\gA$, and also under the behavior policy below.
The reward is $r^\star=f^\star+b^\star$, where the nuisance
$b^\star(x)=\beta h(x)$ with $\beta\geq0$ does not depend on the action and
\begin{equation}
    h(x)=
    \frac{x_1^2-1/d_x}
    {\sqrt{2(d_x-1)/(d_x^2(d_x+2))}}.
\end{equation}
The constants are chosen so that $\E[h(X)]=0$ and $\E[h(X)^2]=1$ for
$X\sim\operatorname{Unif}(\sS^{d_x-1})$, so $\beta$ is the root mean square of
$b^\star$ under the context distribution.
This reward satisfies \Cref{asm:weak_realizability} with
$f^\star\in\gF_{\rm lin}$ for every $\beta$.  However, $\gF_{\rm lin}$ contains
no nonzero action-independent function (because
$x^\top\mW a=-x^\top\mW(-a)$ and $-a\in\gA$), so $r^\star\in\gF_{\rm lin}$ if
and only if $\beta=0$.  In other words, the model is well specified for \VR
exactly when $\beta=0$.

\paragraph{Behavior policy and data.}
A \emph{face} of the cube is the set of four actions sharing the same first
sign $s_1$.  Each context favors the face
$\tau(x)\coloneq\sign(x_1)\in\{-1,+1\}$ (with $\sign(0)\coloneq1$, a
probability-zero event).  For $\delta\in[0,1/2]$, the behavior policy is
\begin{equation}
    \piref(a\mid x)
    =
    \begin{cases}
        (1-\delta)/4, & s_1(a)=\tau(x),\\
        \delta/4, & s_1(a)=-\tau(x).
    \end{cases}
    \label{eq:synthetic-face-policy}
\end{equation}
That is, $A\sim\piref(\cdot\mid x)$ lies on the favored face with probability
$1-\delta$ and on the opposite face with probability $\delta$, and, given its
face, $s_2(A)$ and $s_3(A)$ are independent uniform signs.  Every action has
positive probability whenever $\delta>0$; the case $\delta=0$, in which the
opposite face is never observed, is included only as a limiting case.
Following \Cref{sec:introduction}, each data set consists of
$x_i\overset{\rm i.i.d.}{\sim}\operatorname{Unif}(\sS^{d_x-1})$ for
$i\in[N_x]$, $a_{i,j}\overset{\rm i.i.d.}{\sim}\piref(\cdot\mid x_i)$ for
$j\in[N_a]$, and
\begin{equation}
    r_{i,j}
    =f^\star(x_i,a_{i,j})+\beta h(x_i)+\eta_{i,j},
    \qquad
    \eta_{i,j}=\sigma\xi_{i,j},
    \label{eq:synthetic-reward}
\end{equation}
where the $\xi_{i,j}$ are independent Rademacher signs, independent of all
contexts and actions.  We use $(N_x,N_a)=(200,8)$ and $\sigma=0.5$.

We now explain $\delta$, $\rho$, and $\beta$.
Below, $X\sim\operatorname{Unif}(\sS^{d_x-1})$ and $A,A'\overset{\rm i.i.d.}{\sim}\piref(\cdot\mid X)$.

\paragraph{Role of $\delta$: design constants.}
The conditional action moments are
\begin{equation}
    \E[A\mid X=x]
    =\frac{(1-2\delta)\tau(x)}{\sqrt3}\ve_1,
    \qquad
    \Cov(A\mid X=x)
    =\frac13\operatorname{diag}\!\left(4\delta(1-\delta),1,1\right)
    \eqqcolon\mD_\delta.
    \label{eq:synthetic-action-moments}
\end{equation}
Hence $\E[AA^\top\mid X]=\mI_{d_a}/d_a$ for every $\delta$, so that
$\mSigma=\mI_d/d$ and $\lVert\vz(X,A)\rVert_2=\sqrt d$: the \VR design
constant is ${\color{blue}\alpha_0}=1$ in \Cref{asm:linear-vr-design},
regardless of $\delta$.  The centered covariance, in contrast, is
$\mSigma_{\rm C}=\mD_\delta\otimes(\mI_{d_x}/d_x)$ (in the ordering of
$\operatorname{vec}$), and its eigenvalue along the first action coordinate is
proportional to $4\delta(1-\delta)$.  For $\delta>0$, we have
${\color{blue}d_{\rm C}}=d$, and the smallest constant admissible in
\Cref{asm:linear-vdr-design} is
\begin{equation}
    {\color{blue}\alpha_{\rm C}}
    =\sqrt{\frac{8+1/(\delta(1-\delta))}{6}}
    =\Theta(\delta^{-1/2})
    \quad\text{as }\delta\to0.
\end{equation}
At $\delta=0$, the first action coordinate lies entirely in
$\ker(\mSigma_{\rm C})$ and ${\color{blue}d_{\rm C}}=10$.  This is a
contextual version of the first example in \Cref{subsec:linear_comparison}.
Note that small $\delta$ does not mean little within-context variation: the
total variation $\operatorname{tr}\Cov(A\mid X)=\frac23+\frac43\delta(1-\delta)$
is at least $2/3$ for every $\delta$, but almost all of it lies in the second
and third action coordinates.

\paragraph{Role of $\rho$: alignment with within-context variation.}
Whether the first coordinate matters depends on the direction of the true
value. Let $m(x)\coloneq\E[f^\star(x,A)\mid X=x]$, so that the centered value
is $\gC f^\star(x,a)=f^\star(x,a)-m(x)$. Since $\E[f^\star(X,A)^2]=1$, the
conditional mean and centered component account for the following fractions
of the target's second moment:
\begin{align}
    \kappa_{\rm mean}(\rho,\delta)
    &\coloneq\E[m(X)^2]
    =\rho(1-2\delta)^2,\\
    \kappa_{\rm diff}(\rho,\delta)
    &\coloneq\E\bigl[(\gC f^\star(X,A))^2\bigr]
    =\frac12\E\bigl[(f^\star(X,A)-f^\star(X,A'))^2\bigr]
    =1-\rho(1-2\delta)^2,
    \label{eq:synthetic-directional-information}
\end{align}
which satisfy $\kappa_{\rm mean}+\kappa_{\rm diff}=1$.  Because the \VDR loss
depends on the data only through within-context reward differences,
$\kappa_{\rm diff}$ is the fraction of the signal that \VDR observes; the
remaining fraction $\kappa_{\rm mean}$ is visible only through absolute reward
levels, which \VR also uses.  When $\rho=1$, the optimal face at $x$ is
$\sign(x^\top\vu^\star)$, which differs from the favored face $\tau(x)$ for a
constant fraction of contexts, yet only $\kappa_{\rm diff}=4\delta(1-\delta)$
of the signal distinguishes the two faces within a context.  At
$(\rho,\delta)=(1,0)$, $\kappa_{\rm diff}=0$: all observed actions at a
context have the same true value, and \VDR sees only noise.

\paragraph{Role of $\beta$: misspecification of \VR.}
The nuisance affects \VR only through its correlation with the features.  By
\eqref{eq:synthetic-action-moments} and symmetry,
\begin{equation}
    \E[b^\star(X)\vphi(X,A)]
    =\frac{\beta(1-2\delta)}{\sqrt3}\,
    \E\bigl[h(X)\,|X_1|\bigr]\,
    \operatorname{vec}(\ve_1\ve_1^\top),
\end{equation}
where $\E[h(X)|X_1|]>0$ because $h$ is centered and increasing in $|X_1|$.
Hence the population \VR fit is $\mW_\rho^\star$ plus a multiple of
$\ve_1\ve_1^\top$ proportional to $\beta(1-2\delta)$.  This shift acts on the
first action coordinate, which is exactly the coordinate that separates the two
faces when $\rho=1$, and it vanishes only when $\beta=0$ or $\delta=1/2$.  The
same nuisance cancels from every \VDR target.

Together, these calculations suggest two regimes at $\rho=1$.  When $\beta=0$
and $\delta$ is small, \VR is well specified with ${\color{blue}\alpha_0}=1$,
whereas \VDR has a large ${\color{blue}\alpha_{\rm C}}$ and observes little of
the signal, so \VR should be favored.  As $\beta$ grows, the bias of \VR grows
in proportion while \VDR is unaffected, so \VDR should eventually be favored.

\paragraph{Estimators.}
Let $L_{\VR}(\mW)\coloneq\widehat\gL(f_{\mW})$ with $\gT^{\VR}$ as in
\Cref{sec:localised}, and let $L_{\VDR}(\mW)\coloneq\frac12\widehat\gL(f_{\mW})$
with $\gT^{\VDR}$.  Writing $e_{i,j}(\mW)\coloneq f_{\mW}(x_i,a_{i,j})-r_{i,j}$
and $\bar e_i(\mW)\coloneq N_a^{-1}\sum_{j}e_{i,j}(\mW)$,
\begin{equation}
    L_{\VR}(\mW)
    =\frac{1}{N_xN_a}\sum_{i=1}^{N_x}\sum_{j=1}^{N_a}e_{i,j}(\mW)^2,
    \qquad
    L_{\VDR}(\mW)
    =\frac{1}{N_x(N_a-1)}\sum_{i=1}^{N_x}\sum_{j=1}^{N_a}
    \bigl(e_{i,j}(\mW)-\bar e_i(\mW)\bigr)^2,
    \label{eq:synthetic-losses}
\end{equation}
where the second expression follows from
$\sum_{j<k}(e_{i,j}-e_{i,k})^2=N_a\sum_j(e_{i,j}-\bar e_i)^2$.
The extra factor $1/2$ in $L_{\VDR}$ does not change its minimizer; it gives the two
losses the same expected value, $\sigma^2$, when evaluated at the true reward
with $\beta=0$, so that they are on a common scale when mixed below.  The \VR and \VDR estimators of
\Cref{subsec:linear_vr,subsec:linear_vdr} minimize these losses over $\mW$;
both are computed in closed form from the corresponding sufficient statistics.

\paragraph{Data-driven selection between \VR and \VDR.}
Since neither objective dominates, a practitioner who does not know $\beta$ or
$\delta$ would want to choose between them from the data.  We therefore also
consider the mixed objective
\begin{equation}
    \widehat \mW_\alpha
    \in\operatorname*{arg\,min}_{\mW}
    \left\{(1-\alpha)L_{\VR}(\mW)
    +\alpha L_{\VDR}(\mW)
    +\lambda_\alpha\lVert \mW\rVert_F^2\right\},
    \qquad
    \alpha\in\{0,0.05,\ldots,0.95,1\},
    \label{eq:synthetic-mixed-loss}
\end{equation}
which interpolates between \VR ($\alpha=0$) and \VDR ($\alpha=1$).  Here
\begin{equation}
    \lambda_\alpha
    =10^{-8}\max\left\{\frac{\operatorname{tr}(\mG_\alpha)}{d},1\right\},
    \qquad
    \mG_\alpha\coloneq(1-\alpha)\mG_{\VR}+\alpha\mG_{\VDR},
\end{equation}
where $\mG_{\VR}$ and $\mG_{\VDR}$ are the empirical Gram matrices of the two
quadratic losses in \eqref{eq:synthetic-losses}.
The $\ell_2$-regularization serves only
to stabilize the linear solve; in our design $\operatorname{tr}(\mG_\alpha)/d<1$,
so $\lambda_\alpha=10^{-8}$ in practice.  The same term is used at the endpoints
$\alpha\in\{0,1\}$, so the reported \VR and \VDR fits are unregularized up to
this negligible ridge.  We choose
$\alpha$ by five-fold cross-validation (CV) over contexts.  Specifically, we
partition the $N_x=200$ context blocks into five folds of 40 contexts each;
for each fold, we fit $\widehat\mW_\alpha$ on the other four folds and compute
$L_{\VDR}$ on the held-out fold.  The selected $\alpha$ minimizes the average
of the five held-out losses, and the reported estimator is
$\widehat\mW_\alpha$ at the selected $\alpha$, fitted on all $N_x$ contexts.
Folds are formed over whole context blocks so that held-out actions never share
a context with training actions.
We validate with the \VDR loss rather than the \VR loss because the held-out
\VR loss contains the nuisance $b^\star$ and would reward fits that absorb it,
whereas the held-out \VDR loss is unaffected by $b^\star$ and, up to scaling and a noise
constant, measures the centered prediction error that controls regret in
\Cref{prop:regret}.  We call this rule \algname{difference-CV}
(\textsf{CV-best} in \Cref{fig:directional-mechanism}).
As a reference that uses information unavailable in practice,
\algname{Oracle} chooses $\alpha$ to minimize the true regret, computed with
$\mW_\rho^\star$, on 2,000 additional simulated contexts.

\paragraph{Evaluation and protocol.}
Each fitted matrix $\widehat\mW$ defines the greedy policy
$x\mapsto\argmax_{a\in\gA}f_{\widehat\mW}(x,a)$.  We estimate its regret
\eqref{eqn:offline-regret} on 10,000 fresh contexts by enumerating all eight
actions; since $b^\star$ does not depend on the action, it does not affect the
regret.  We run two experiments.
\begin{itemize}
    \item \textbf{Crossover grid.}  With $\rho=1$, we vary
    $\delta\in\{0,0.02,0.05,0.1,0.2,0.5\}$ and
    $\beta\in\{0,0.25,0.5,1,2\}$ to test the two regimes predicted above.
    \item \textbf{Alignment sweep.}  With $(\delta,\beta)=(0.02,0)$ fixed,
    we vary $\rho\in\{0,0.25,0.5,0.75,0.9,1\}$.  The behavior policy, and
    hence the total within-context variation, is unchanged; only the direction
    of the true value moves toward the weak first coordinate.  This tests
    whether what matters is the within-context variation \emph{in the direction
    of the true value} (measured by $\kappa_{\rm diff}$) rather than its
    total amount.
\end{itemize}
For each condition, we generate eight independent problem instances and 20
training data sets per instance.  Repetitions are first averaged within each
problem instance, and the reported 95\% intervals are computed from the eight
independent instance means rather than by treating repetitions within an
instance as independent.

\paragraph{Crossover results.}
Let
$\Delta_{\rm Reg}=\Reg_{\VR}-\Reg_{\VDR}$, so that negative values favor \VR
and positive values favor \VDR.  \Cref{tab:synthetic-crossover} reports its
mean over the grid, and the top-left panel of
\Cref{fig:directional-crossover} shows the same values as a heat map.

Both predicted regimes appear.  When $\beta=0$, \VR has lower regret in all
eight problem instances for every tested $\delta>0$, so the \VR-favored regime
is not an artifact of the degenerate case $\delta=0$; the advantage shrinks as
$\delta$ grows, consistent with ${\color{blue}\alpha_{\rm C}}$ decreasing and
$\kappa_{\rm diff}$ increasing.  For each $\delta\in\{0.02,0.05,0.1,0.2\}$,
the sign of $\Delta_{\rm Reg}$ flips as $\beta$ grows.  At $\delta=0.02$, for
example, the paired mean difference and its 95\% interval are
$-0.0040\pm0.0005$ at $\beta=0$, $+0.0014\pm0.0014$ at $\beta=0.25$, and
$+0.0188\pm0.0050$ at $\beta=0.5$.  Over the same range, the population bias of \VR
grows as $0$, $0.129$, $0.259$, $0.518$, and $1.036$ for
$\beta=0,0.25,0.5,1,2$, i.e., proportional to $\beta$ as predicted.  Here the
population bias is the root-mean-square centered error of the population \VR
fit (the minimizer of the expected \VR loss under $\piref$, with the same
negligible ridge), computed on the 2,000 validation contexts, where
predictions are centered by their mean over all eight actions;\footnote{This
uniform centering differs from the centering operator $\gC$ of
\Cref{sec:introduction}, which subtracts the mean under $\piref(\cdot\mid x)$.
Both remove action-independent terms and preserve all action gaps, so neither
affects the regret.  The population \VR bias derived in the paragraph on the
role of $\beta$ is a multiple of $\ve_1\ve_1^\top$ proportional to
$\beta(1-2\delta)$, so its centered magnitude is proportional to $\beta$ under
either centering; only the constant differs.} averaged over
the eight problem instances.  The
row $\delta=0.5$ is nearly flat because the \VR bias vanishes at $\delta=1/2$.

The row $\delta=0$ illustrates the effect of missing cross-face comparisons.
Only the favored face is observed, and at $\rho=1$, the centered target
$\gC f^\star$ is identically zero because $s_1(A)=\tau(X)$ almost surely.
Thus, \VDR cannot identify the cross-face value gaps needed by the greedy
policy, and the regret guarantee of \Cref{prop:regret} is vacuous.
When $\beta=0$, \VR can still identify $\mW^\star_\rho$ because
$\mSigma\succ\vzero$: variation in the favored face across contexts allows
absolute reward levels to identify the coefficients associated with the
first action coordinate. We include $\delta=0$ as a boundary case and focus
our comparisons on $\delta\geq0.02$.

\paragraph{Data-driven selection results.}
The remaining panels of \Cref{fig:directional-crossover} evaluate
\algname{difference-CV}.  For $\delta>0$, its selected $\alpha$ follows the
same pattern as the \algname{Oracle} choice: it leans toward \VR when
$\beta=0$ and moves to $\alpha\approx1$ as $\beta$ grows.  Its regret is then
close to that of the better of \VR and \VDR throughout the grid.  At
$\delta=0$, however, \algname{difference-CV} falls behind the better
endpoint, increasingly so for larger $\beta$.  This is expected: its validation
loss is itself a \VDR loss and cannot assess a direction that no observed
difference reveals.

\paragraph{Alignment sweep results.}
In this experiment, the behavior policy and the nuisance are fixed, and only
the direction of the true value changes.  At $\rho=0$, the true value varies
only within each face, $\kappa_{\rm diff}=1$, and both methods have mean regret
$0.0003$.  For every $\rho>0$, part of the true value moves into the weak first
coordinate, and \VR has lower mean regret than \VDR
(\Cref{fig:directional-mechanism}, left).  As $\rho$ increases from $0$ to
$1$, $\kappa_{\rm diff}$ decreases from $1$ to $0.0784$ and $\kappa_{\rm mean}$
increases from $0$ to $0.9216$, matching
\eqref{eq:synthetic-directional-information} (right panel).
The results are consistent with a growing estimation disadvantage for \VDR as the
true value becomes more aligned with the weakly sampled first action
direction.

\paragraph{Summary.}
The crossover and alignment experiments illustrate the geometric
bias--variance tradeoff discussed in \Cref{subsec:linear_comparison}.
In this construction, \VR benefits from absolute reward levels under
correct specification when cross-face comparisons are rare and the true
value is aligned with the first action direction. For fixed $\delta<1/2$,
its population coefficient bias grows linearly with the nuisance magnitude
$\beta$, whereas the \VDR objective is unchanged. The observed regret
crossover shows how increasing misspecification can outweigh the estimation
advantage of using absolute levels. Across the tested grid with $\delta>0$,
cross-validation with the \VDR loss achieves regret close to that of the
better objective without knowing which regime applies.

\begin{table}[H]
    \centering
    \small
    \setlength{\tabcolsep}{5.5pt}
    \begin{tabular}{@{}rrrrrr@{}}
        \toprule
        $\delta$ & $\beta=0$ & $\beta=0.25$ & $\beta=0.5$ & $\beta=1$ & $\beta=2$ \\
        \midrule
        0.00 & $-0.8390$ & $-0.8330$ & $-0.8154$ & $-0.7472$ & $-0.5686$ \\
        0.02 & $-0.0040$ & $+0.0014$ & $+0.0188$ & $+0.0813$ & $+0.2485$ \\
        0.05 & $-0.0013$ & $+0.0036$ & $+0.0189$ & $+0.0737$ & $+0.2298$ \\
        0.10 & $-0.0007$ & $+0.0033$ & $+0.0157$ & $+0.0592$ & $+0.1978$ \\
        0.20 & $-0.0002$ & $+0.0019$ & $+0.0090$ & $+0.0340$ & $+0.1250$ \\
        0.50 & $-0.0001$ & $+0.0000$ & $+0.0002$ & $+0.0012$ & $+0.0045$ \\
        \bottomrule
    \end{tabular}
    \caption{Mean greedy-regret difference $\Delta_{\rm Reg}$ in the crossover
    grid ($\rho=1$).  Negative values favor \VR and positive values favor
    \VDR.}
    \label{tab:synthetic-crossover}
\end{table}

\begin{figure}[H]
    \centering
    \includegraphics[width=0.92\linewidth]
    {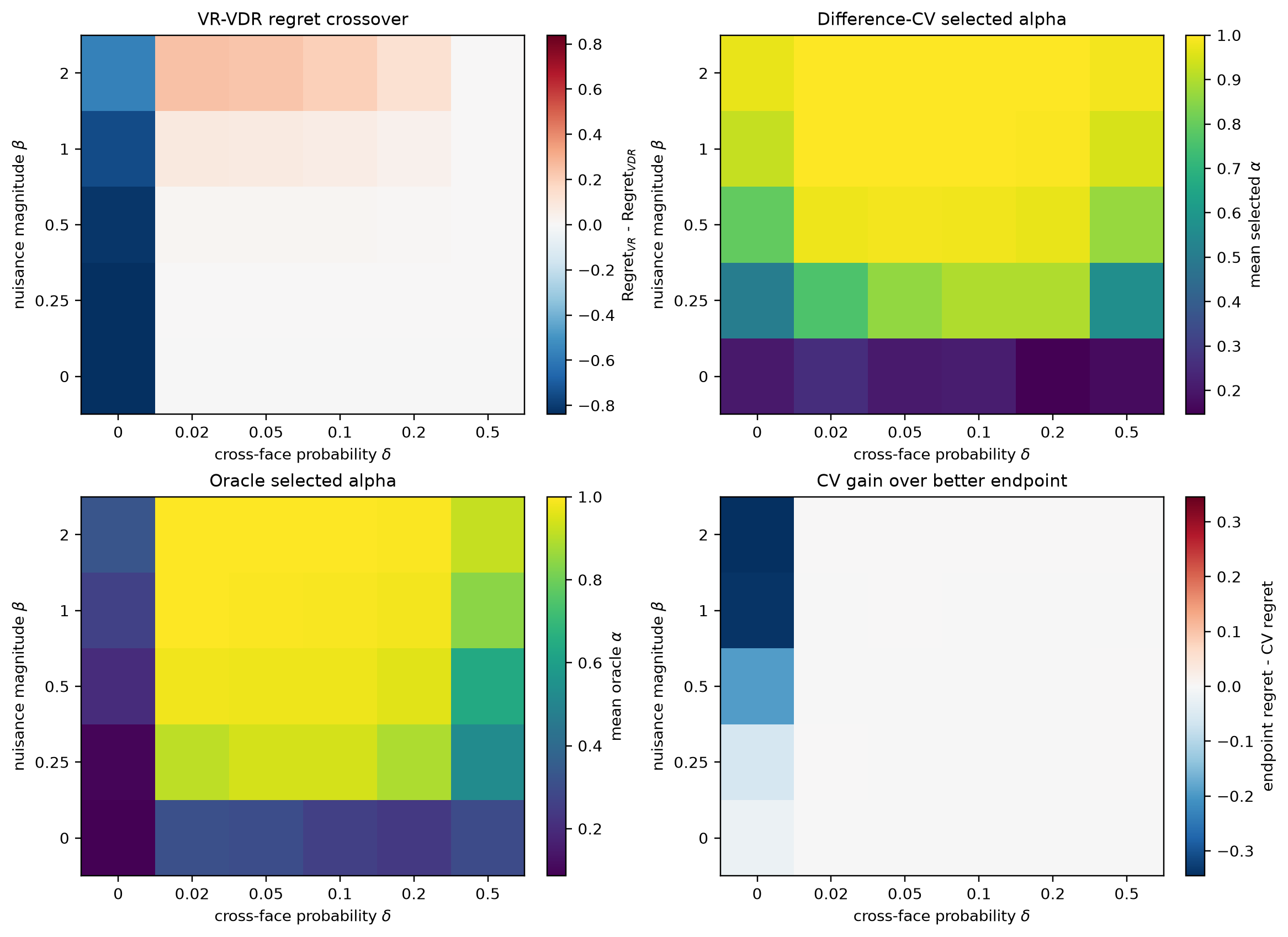}
    \caption{Crossover grid ($\rho=1$).  Top left:
    $\Reg_{\VR}-\Reg_{\VDR}$; negative values favor \VR and positive values
    favor \VDR.  Top right and bottom left: mean $\alpha$ selected by
    \algname{difference-CV} and by \algname{Oracle}.  Bottom right: regret of
    the better of \VR and \VDR minus that of \algname{difference-CV}; positive
    values mean that \algname{difference-CV} improves on both.}
    \label{fig:directional-crossover}
\end{figure}

\begin{figure}[H]
    \centering
    \includegraphics[width=0.73\linewidth]
    {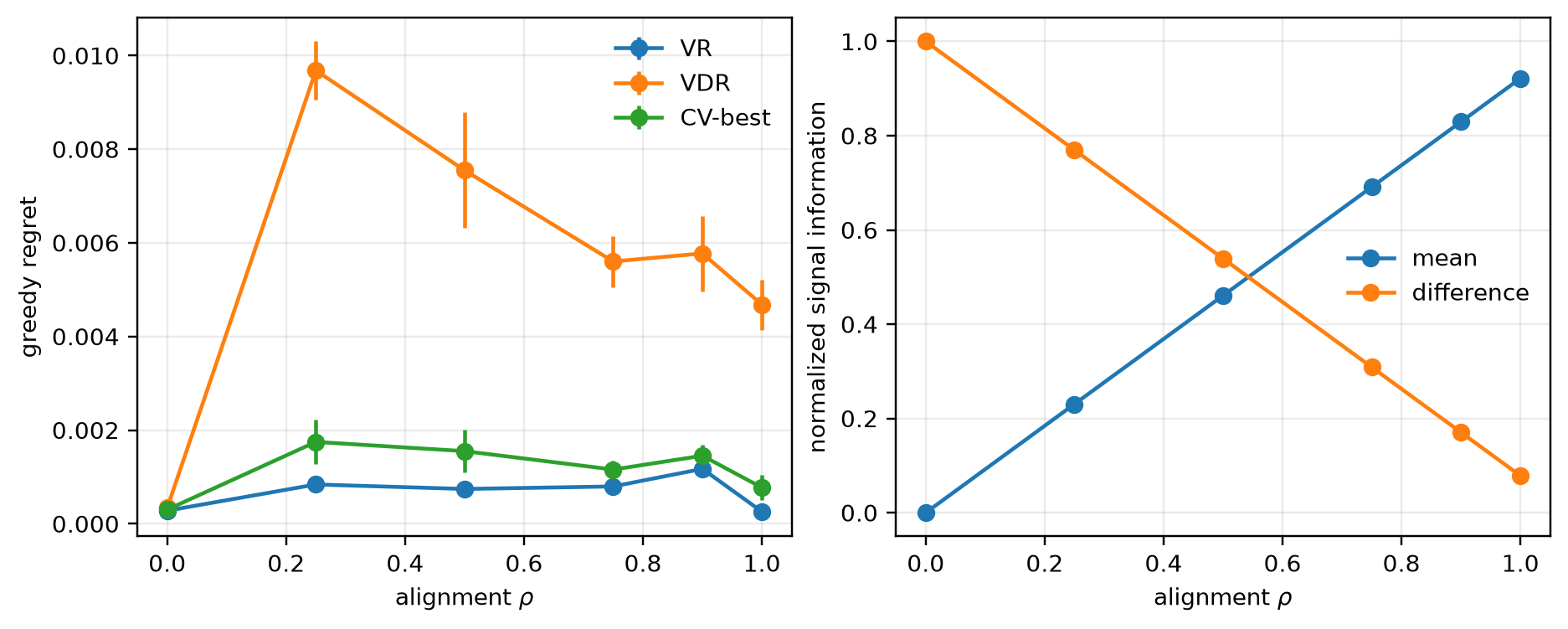}
    \caption{Alignment sweep at fixed $(\delta,\beta)=(0.02,0)$.
    \textbf{Left:} greedy-policy regret as the true value direction moves toward the
    weak first action coordinate; \textsf{CV-best} denotes
    \algname{difference-CV}.
    \textbf{Right:} the fractions $\kappa_{\rm mean}$ (``mean'') and $\kappa_{\rm diff}$ (``difference'') of the target's second moment contributed by its conditional mean and centered component, respectively; see \eqref{eq:synthetic-directional-information}.
    }
    \label{fig:directional-mechanism}
\end{figure}

\subsection{\texorpdfstring{LLM Response-Selection Experiment}{LLM Response-Selection Experiment}}
\label{app:controlled-real-experiments}

\paragraph{Data and representations.}
We curate 1,000 English prompts from
WildChat~\citep{zhao2024wildchat},
UltraFeedback~\citep{cui2023ultrafeedback},
GSM8K~\citep{cobbe2021gsm8k},
MATH~\citep{hendrycks2021math},
MBPP~\citep{austin2021mbpp},
HelpSteer2~\citep{wang24helpsteer}, and
TL;DR~\citep{stiennon2020summarize},
and cache 1,000 responses per prompt generated by
Llama-3.2-3B-Instruct~\citep{meta2024llama32}.
Each response is represented by a 3,072-dimensional feature vector
obtained by mean-pooling the generator's final-layer hidden states
over its response tokens.
Keeping the prompts, responses, and representations fixed, we repeat
the experiment with reward models
Skywork-Reward-V2-Llama-3.1-8B and
Skywork-Reward-V2-Qwen3-8B~\citep{liu2026skyworkv2}, and
ArmoRM-Llama3-8B-v0.1~\citep{wang2024armorm}.

\begin{figure}[t]
  \centering
  \includegraphics[width=\linewidth]
  {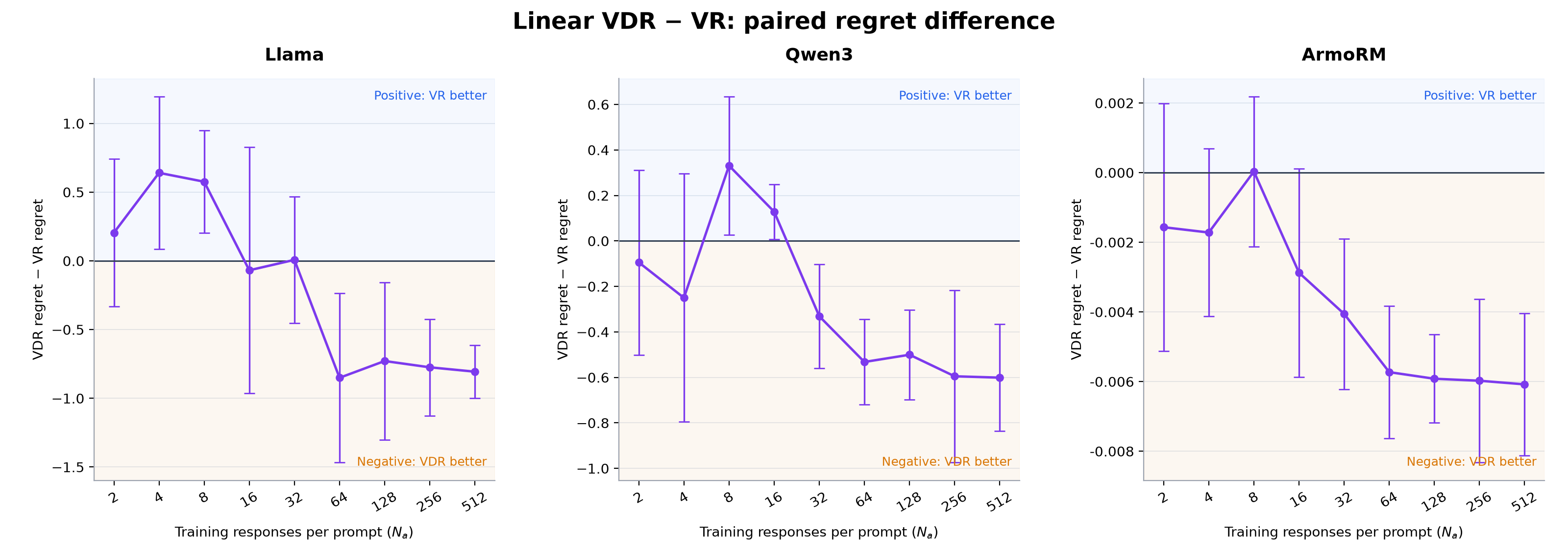}
  \caption{Paired regret differences on the 200 evaluation prompts,
  $\Reg_{\VDR}-\Reg_{\VR}$, with pointwise 95\% Student-$t$ confidence
  intervals over five seeds. Negative values favor \VDR.}
  \label{fig:empirical-paired-three-rewards}
\end{figure}

\paragraph{Training protocol.}
For each of five seeds, we split the prompts into $N_x=800$ training prompts
and 200 evaluation prompts that are not used for fitting or selecting the
regularization strengths.
We vary the number of labeled responses per training prompt over $N_a\in\{2,4,8,16,32,64,128,256,512\}.$
Within each seed and response budget, \VR and \VDR use the same sampled
responses.
To stabilize estimation in the 3,072-dimensional feature space, we use
$\ell_2$-regularized versions of the objectives in
\Cref{subsec:linear_vr,subsec:linear_vdr}.
Writing $\vphi_{i,j}=\vphi(x_i,a_{i,j})$, the estimators minimize
\begin{align}
(\widehat{\vtheta}_{\VR},\widehat c)
&\in
\operatorname*{arg\,min}_{\vtheta,c}
\left\{
\sum_{i=1}^{N_x}\sum_{j=1}^{N_a}
\bigl(r_{i,j}-\vtheta^\top\vphi_{i,j}-c\bigr)^2
+\lambda_{\VR}\lVert\vtheta\rVert_2^2
\right\},
\label{eq:empirical-vr-ridge}\\
\widehat{\vtheta}_{\VDR}
&\in
\operatorname*{arg\,min}_{\vtheta}
\left\{
\sum_{i=1}^{N_x}\sum_{(j,k)\in\mathcal P_i}
\bigl[(r_{i,j}-r_{i,k})
-\vtheta^\top(\vphi_{i,j}-\vphi_{i,k})\bigr]^2
+\lambda_{\VDR}\lVert\vtheta\rVert_2^2
\right\},
\label{eq:empirical-vdr-ridge}
\end{align}
where $\mathcal P_i$ is the set of within-prompt pairs used for prompt $x_i$.
\VDR uses all $\binom{N_a}{2}$ pairs when $N_a\leq32$ and samples
$16N_a$ pairs without replacement otherwise.
Features are standardized before fitting.
We select $\lambda_{\VR}$ and $\lambda_{\VDR}$ separately for each
reward model, method, and response budget using validation regret on a
640/160 split of the training prompts, and then refit on all 800 training
prompts.
The \VR intercept $c$ is not penalized; \VDR requires no intercept because it
cancels under differencing.

\paragraph{Evaluation and results.}
Each fitted scorer ranks all 1,000 cached responses for every evaluation
prompt and selects the response with the highest predicted score.
For an evaluation prompt $x$, offline regret is the difference between the
reward of its best cached response and that of the selected response.
We average this quantity over the 200 evaluation prompts.
The primary figure reports means and pointwise 95\% Student-$t$ confidence
intervals over the five seeds.

Because both methods use the same train--test split and sampled responses
within each seed, we also compute the seed-wise paired difference
$\Reg_{\VDR}-\Reg_{\VR}$.
\Cref{fig:empirical-paired-three-rewards} reports its mean and pointwise
95\% Student-$t$ confidence interval.
For every $N_a\geq64$, these intervals lie below zero under all three reward
models.

\subsection{\texorpdfstring{\MCTS-Based Mathematical Reasoning Experiment}{MCTS-Based Mathematical Reasoning Experiment}}
\label{app:math-reasoning-experiment}

We complement our theoretical results with an LLM-based mathematical
reasoning experiment, examining whether relative supervision can improve
downstream decision quality even when numerical rewards are available.
We compare \VR with a binary-preference variant of \VDR (which we refer to \VDR for simplicity) for guiding
Monte Carlo tree search (\MCTS).
Following the process-preference approach of \citet{guan2025rstar},
we train reward models using either search-derived $Q$-values or
within-context preferences constructed from those values.
Rather than evaluating selection from a fixed candidate pool, we measure
how these learned models guide a sequence of reasoning steps toward
correct final answers.

\paragraph{Data, labels, and representations.}

We use 1,000 competition mathematics problems from the American Mathematics
Competitions~\citep{maa_amc}: 800 AMC12 problems from the 2002--2024 contests
and 200 AIME problems from the 2000--2024 contests.\footnote{Problem statements
are taken from the public archive at
\url{https://live.poshenloh.com/past-contests}, and answer keys are
cross-checked against published solution compilations.}
 A context $x_i$ contains a problem and its reasoning prefix; $a_{i,j}$ is a
candidate next step with label $r_{i,j} \coloneq  Q(x_i,a_{i,j})$. Labels are taken
from an eight-rollout data-generation snapshot.

During data generation, final-outcome rewards ($+1$ for a correct answer and
$-1$ for a recorded unsuccessful outcome) are backed up through the search
tree. For a visited node $v$, the $Q$-value estimates eventual success under the generating policy and budget, and is defined as
\begin{equation}
  Q(v)
  = \frac{S(v)}{n(v)}
  = \frac{N_+(v)-N_-(v)}{N_+(v)+N_-(v)}
  = 2\widehat{p}(v)-1,
  \label{eq:mcts-backed-up-q}
\end{equation}
where $S(v)$ and $n(v)$ are the reward sum and visit count; $N_+$ and $N_-$
count successful and unsuccessful continuations, and $\widehat{p}$ is their
success frequency.

For each candidate step $a$, we encode the parent context $x$ (the problem,
preceding reasoning steps, and prior execution observations) followed by $a$
in the generation format, excluding $a$'s own execution output. We extract
the final-layer hidden state at the last nonpadding token of frozen
Qwen2.5-Coder-7B-Instruct~\citep{hui2024qwen25coder}. The resulting 3,584-dimensional vectors are
standardized to obtain $\phi(x,a)$, using statistics fitted on training
candidates during model selection and on training-plus-validation
candidates for the final refit.

\paragraph{Training objectives and protocol.}

A trainable linear or one-hidden-layer GELU head maps $\phi(x,a)$ to a scalar
score $f_\theta(x,a)$, while the Qwen backbone remains frozen. For \VDR, this
encoder--head combination is the process preference model (PPM); paired
candidates are scored separately by the same head. \VR learns an unpenalized
global intercept; \VDR fixes it to zero.

For each context $x_i$, let $P_i$ contain preferred--rejected pairs,
with $a_{i,j}$ preferred to $a_{i,k}$ for $(j,k)\in P_i$.
Let $E_i$ denote their unique endpoint indices, $N_{a,i}=|E_i|$,
and $m_i^{\mathrm{pos}}$ and $m_i^{\mathrm{neg}}$ denote the
dataset-recorded positive and negative role counts.
\VR regresses the numerical labels $r_{i,j}=Q(x_i,a_{i,j})$,
whereas \VDR learns pairwise orderings using the ranking loss of
\citet{guan2025rstar}, with dataset-specific normalization: with $\sigma(z) = (1 + e^{-z})^{-1}$,
\begin{align}
  \widehat{L}_{\VR}(f_\theta)
  &= \frac{1}{N_x}\sum_{i=1}^{N_x}\frac{1}{N_{a,i}}
     \sum_{j\in E_i}
     \bigl(f_\theta(x_i,a_{i,j})-r_{i,j}\bigr)^2,
     \label{eq:mcts-vr-loss} \\[3pt]
  \widehat{L}_{\VDR}(f_\theta)
  &= -\frac{1}{N_x}\sum_{i=1}^{N_x}
     \frac{1}{m_i^{\mathrm{pos}}m_i^{\mathrm{neg}}}
     \sum_{(j,k)\in P_i}
     \log \sigma\!\left(
       f_\theta(x_i,a_{i,j})-f_\theta(x_i,a_{i,k})
     \right).
     \label{eq:mcts-vdr-binary-loss}
\end{align}

We train the linear heads using ridge/SVD with penalty $\lambda\lVert w\rVert_2^2$ for \VR, and L-BFGS~\citep{l-bfgs} with $(\lambda/2)\lVert w\rVert_2^2$ penalty for \VDR; both select $\lambda=1$. MLPs use AdamW~\citep{adamw}, averaging context-wise errors
for \VR and summing role-count-weighted pair losses for \VDR.
We select the hyperparameters by validation loss, averaged over three training
seeds for MLPs, then refit on training plus validation data.

We evaluate all 100 test problems with Qwen2.5-Coder-7B-Instruct using one
\MCTS tree, two iterations, four children per expansion, maximum depth 16,
and token limits of 1,024 per step and 16,384 per context.

\begin{table}[!t]
  \centering
  \caption{Final-answer accuracy on 100 test problems under a common \MCTS
  budget. Each method uses one search seed; MLPs use one prespecified refit.}
  \label{tab:mcts-vr-vdr-accuracy}
  \begin{tabular}{@{}lc@{}}
    \toprule
    Reward head & Accuracy (\%) \\
    \midrule
    Linear \VR  & 15 \\
    Linear \VDR & 16 \\
    MLP \VR     & 10 \\
    MLP \VDR    & 13 \\
    \bottomrule
  \end{tabular}
\end{table}

\paragraph{Results and discussion.}
\VDR achieves higher observed accuracy than \VR with both linear heads
(16\% versus 15\%) and MLPs (13\% versus 10\%;
Table~\ref{tab:mcts-vr-vdr-accuracy}).

The synthetic experiment in \Cref{app:controlled-synthetic-experiments}
examines the bias--variance tradeoff by separately controlling nuisance
magnitude and the geometry of within-context comparisons.
The
response-selection experiment in \Cref{app:controlled-real-experiments}
compares the objectives on real data without isolating the mechanism
responsible for their relative performance. The present experiment asks
whether preferences derived from numerical feedback can guide search
toward correct final answers. Its observed gains provide additional
evidence that relative supervision can be useful even when numerical
rewards are available.